\newtheorem{Assumption}{Assumption}
\newtheorem{thm}{\bf Theorem}
\newtheorem{lem}[thm]{\bf Lemma}
\newtheorem{cor}[thm]{\bf Corollary}
\newcommand{\Hk}{{\mathcal{H}_k}}
\newcommand{\bs}[1]{{\boldsymbol{#1}}}
\newcommand{\markthis}[3]{
  \overset{
    \textup{\makebox[0pt]{#1}}%
    \def\@currentlabel{#1}%
    \ltx@label{#2}%
  }{
    #3%
  }%
}
\newcommand{\Pstar}{\mathcal{P}^*}
\newcommand{\Phat}{\hat{\mathcal{P}}}
\newcommand{\BPhat}{\mathcal{B}^{\varepsilon_t}(\Phat_t)}
\newcommand{\infQBPhat}{\inf_{\mathcal{Q}\in\mathcal{B}^{\varepsilon_t}(\Phat_t)}}
\newcommand{\EPstar}{\mathbb{E}_{c\sim\Pstar}}
\newcommand{\EPhat}{\mathbb{E}_{c\sim\Phat}}
\newcommand{\EPstart}{\mathbb{E}_{c\sim\Pstar_t}}
\newcommand{\EPhatt}{\mathbb{E}_{c\sim\Phat_t}}
\newcommand{\EQ}{\mathbb{E}_{c\sim\mathcal{Q}}}
\newcommand{\UCB}{\text{UCB}_t}
\newcommand{\LCB}{\text{LCB}_t}
\newcommand{\xt}{x_t}
\newcommand{\fxt}{f(\xt, c)}
\newcommand{\xstar}{x^*}
\newcommand{\fxtstar}{f(\xstar_t, c)}
\newcommand{\diamC}{{\mathrm{D}_\mathcal{C}}}
\newcommand{\diamX}{{\mathrm{D}_\mathcal{X}}}
\begin{document}
\twocolumn[
\aistatstitle{Wasserstein Distributionally Robust Bayesian Optimization with~Continuous~Context}
\aistatsauthor{ Francesco Micheli \And Efe C. Balta \And  Anastasios Tsiamis \And John Lygeros }
\aistatsaddress{ ETH Z\"urich \And inspire AG \& ETH Z\"urich \And ETH Z\"urich \And ETH Z\"urich } ]

\begin{abstract}
We address the challenge of sequential data-driven decision-making under context distributional uncertainty. This problem arises in numerous real-world scenarios where the learner optimizes black-box objective functions in the presence of uncontrollable contextual variables. We consider the setting where the context distribution is uncertain but known to lie within an ambiguity set defined as a ball in the Wasserstein distance. We propose a novel algorithm for Wasserstein Distributionally Robust Bayesian Optimization that can handle continuous context distributions while maintaining computational tractability. Our theoretical analysis combines recent results in self-normalized concentration in Hilbert spaces and finite-sample bounds for distributionally robust optimization to establish sublinear regret bounds that match state-of-the-art results. Through extensive comparisons with existing approaches on both synthetic and real-world problems, we demonstrate the simplicity, effectiveness, and practical applicability of our proposed method.
\end{abstract}


\section{INTRODUCTION}
Bayesian Optimization (BO) has emerged as a powerful algorithm for zero-order optimization of expensive-to-evaluate black-box functions, with applications ranging from hyperparameters tuning to scientific discovery and robotics~\citep{ueno2016combo,li2019accelerating,ru2020bayesian,shahriari2015taking}. In the standard BO setting, the learner sequentially selects points to evaluate the unknown objective function and uses the observed data to update a surrogate model that captures the function's behavior. 
In the contextual BO setting, the objective function depends on an additional variable, called the context, which cannot be controlled by the learner~\citep{krause2011contextual,valko2013finite,kirschner2019stochastic}. Typically, the context distribution is used to model the uncertainty of the learner related to uncontrollable environmental variables.
When the distribution of the context variable is known, the BO algorithm can be used to solve the Stochastic Optimization (SO) problem, where the objective is to maximize the reward of the unknown function in expectation with respect to the context distribution
$$
\max_{x \in \mathcal{X}} \mathbb{E}_{c \sim \mathcal{P}} [f(x, c)] \ .
$$
However, in many real-world scenarios, the learner does not have access to the true context distribution, but only to an approximate one. This can happen, e.g., when the context distribution is estimated from historical data, and only a finite number of samples are available. This results in a distributional mismatch between the distribution available to the learner for optimization and the true distribution of the context variable. 
To formally account for the effect of the distributional mismatch, Distributionally Robust Optimization (DRO) has recently gained considerable attention, especially in the sampled data settings~\citep{rahimian2019distributionally,kuhn2019wasserstein,gao2024wasserstein}.
In DRO, the learner optimizes the reward under the worst-case distribution of the context within a so-called ambiguity set $\mathcal{B}$ that captures the uncertainty of the learner about the true context distribution
\begin{equation}
\max_{x \in \mathcal{X}} \inf_{\mathcal{Q} \in \mathcal{B}} \mathbb{E}_{c \sim \mathcal{Q}} [f(x, c)] \ .
\end{equation}
The advantage of the robust approach is that, by appropriately choosing the ambiguity set $\mathcal{B}$, we can guarantee that the reward computed for the DRO problem lower-bounds the reward for the true unknown context distribution. 

In this work, we introduce Wasserstein Distributionally Robust Bayesian Optimization (WDRBO), a novel algorithm that combines the principles of BO and DRO to address the challenge of sequential data-driven decision-making under context distributional uncertainty.
We consider ambiguity sets defined as balls in the Wasserstein distance~\citep{kuhn2019wasserstein} which allows for a flexible and intuitive way to model the uncertainty in the context distribution. We design a computationally tractable algorithm and analyze its performance in two settings: the \textbf{General WDRBO} setting, where at each time-step the Wasserstein ambiguity set is provided to the learner, and the \textbf{Data-Driven WDRBO} setting, in which we assume that the true context distribution is time-invariant and the Wasserstein ambiguity set is built using the past context observations.

Our main contributions are as follows:
\begin{itemize}
    \item We propose a novel, computationally tractable algorithm for Wasserstein Distributionally Robust Bayesian Optimization that handles continuous context distributions. Our approach exploits an approximate reformulation based on Lipschitz bounds of the acquisition function, circumventing the need for context discretization.
    
    \item We establish a cumulative expected regret bound of order $\tilde{O}(\sqrt{T}\gamma_T)$ for the general WDRBO setting, where $T$ is the number of iterations and $\gamma_T$ is the maximum information gain. For the data-driven setting, we obtain sublinear regret guarantees without requiring assumptions on the rate of decay of the ambiguity set radius.
    
    \item We derive novel Lipschitz bounds for the mean and variance estimates, and leverage recent finite-sample bounds for Wasserstein DRO to address the dimensionality challenges in continuous context spaces.
    
    \item We provide comprehensive empirical evaluations on synthetic and real-world problems, demonstrating that our method achieves competitive performance with significantly lower computational complexity compared to existing DRBO approaches.
\end{itemize}

The rest of the paper is organized as follows. In Section~\ref{sec:related_work}, we review related work. In Section~\ref{sec:problem_formulation}, we introduce the problem formulation. In Section~\ref{sec:wdrbo}, we present the proposed algorithm and provide the theoretical analysis. In Section~\ref{sec:experiments}, we present the experimental results. Finally, in Section~\ref{sec:conclusions}, we conclude the paper and discuss future work.


\section{RELATED WORK}
\label{sec:related_work}
The foundation of DRBO was laid by~\cite{kirschner2020distributionally}, who introduced the concept of distributional robustness in BO. They propose a BO formulation that is robust to the worst-case context distribution within an ambiguity set defined by the Maximum Mean Discrepancy (MMD) distance. While groundbreaking, the inner worst-case calculation requires at each iteration the solution of a convex optimization problem that renders this approach computationally viable only when the context space is discrete and with low cardinality.
A quadrature-based scheme for DRBO is proposed in~\cite{nguyen2020distributionally}, but their algorithm is limited to the simulator setting where at each iteration the learner is allowed to choose the context.
\cite{husain2024distributionally} develops a DRBO formulation for $\ phi$-divergence-based ambiguity sets, but their formulation has some implicit requirements on the support of the distributions captured by the ambiguity set.
Recognizing these computational limitations,~\cite{tay2022efficient} proposed a set of approximate techniques using worst-case sensitivity analysis based on Taylor's expansions. These methods offer better computational complexity for multiple descriptions of ambiguity sets at the expense of performance and regret bounds that scale linearly with the worst-case sensitivity approximation error.
To avoid the challenges of context space discretization,~\cite{huang2024stochastic} proposes a kernel density estimation step that uses the available context samples to estimate a continuous context distribution. The estimated context distribution is then sampled and the samples are used in a DRBO formulation where the $\phi$-divergence ambiguity sets capture the distributional uncertainty introduced by the density estimation step.

The regret analysis of the existing literature on DRBO builds on the GP-UCB formulation of~\cite{srinivas2009gaussian,srinivas2012information}, we instead exploit self-normalizing concentration bounds in Reproducing Kernel Hilbert Space (RKHS)~\citep{abbasi2013online,kirschner2020distributionally,whitehouse2023improved}. We address a gap in the DRBO literature and analyze the continuous context distribution setting under the Wasserstein-based ambiguity set. We leverage recent advancements in Wasserstein DRO literature~\citep{gao2023finite,gao2024wasserstein} to provide state-of-the-art regret rates in the data-driven setting.


\section{PROBLEM FORMULATION}
\label{sec:problem_formulation}
We consider an unknown objective function $f: \mathcal{X} \times \mathcal{C} \rightarrow \mathbb{R}$, where $\mathcal{X} \subset \mathbb{R}^{d_x}$ is the input space and $\mathcal{C} \subset \mathbb{R}^{d_c}$ is the context space. The learner's goal is to maximize the expected value of the function under the context distribution by sequentially selecting points to evaluate and receiving noisy observations of the function. 
More specifically, at each iteration $t = 1, 2, \ldots$, the learner selects a point $x_t \in \mathcal{X}$ to query the function, and observes the context $c_t \in \mathcal{C}$ and a noisy output $y_t = f(x_t, c_t) + \eta_t$. The context sample $c_t$ is assumed to be an independent sample from some unknown, time-dependent, context distribution $\mathcal{P}^*_t$, while $\eta_t$ is a zero-mean $R$-sub-Gaussian noise, where an upper bound on $R$ is known.

\subsection{Wasserstein Distributionally Robust Objective}
In this work, we consider the setting of distributionally robust optimization, where the learner does not have access to the true context distribution $\mathcal{P}^*_t$, but instead optimizes for the expected reward under the worst-case distribution within an ambiguity set
\begin{equation}\label{eq:dro}
\max_{x \in \mathcal{X}} \inf_{\mathcal{Q} \in \mathcal{B}^{\varepsilon_t}(\hat{\mathcal{P}}_t)} \mathbb{E}_{c \sim \mathcal{Q}} [f(x, c)] \ .
\end{equation}

The time-dependent ambiguity set $\mathcal{B}^{\varepsilon_t}(\hat{\mathcal{P}}_t)$ is defined as a ball in the Wasserstein distance centered at the distribution $\hat{\mathcal{P}}_t$ and with radius $\varepsilon_t$~\citep{kuhn2019wasserstein}. This is the set of all distributions that are within a Wasserstein distance $\varepsilon_t$ from the center distribution $\Phat_t$
\begin{equation*}
    \mathcal{B}^{\varepsilon_t}(\Phat_t) = \left\{ \mathcal{Q} \in \mathcal{P}(\mathcal{C}) : d_W(\mathcal{Q}, \Phat_t) \leq \varepsilon_t \right\}.
\end{equation*}
The type-1 Wasserstein metric $d_{\mathrm{W}} : \mathcal{M}(\mathbb{Q}) \times \mathcal{M}(\mathbb{Q}) \rightarrow \mathbb{R}_{\geq 0}$ defines the distance between two distributions $\mathcal{Q}_1$ and $\mathcal{Q}_2$ as
\begin{equation*}\label{eq:Wasserstein_distance}
        d_{W}\left(\mathcal{Q}_{1}, \mathcal{Q}_{2}\right):=\inf_{\Pi} \left\{\int_{\mathbb{Q}\times\mathbb{Q}}\left\|\bs{q}_{1}-\bs{q}_{2}\right\| \Pi\left(\mathrm{d} \bs{q}_{1}, \mathrm{d} \bs{q}_{2}\right)\right\},
\end{equation*}
where the transportation map $\Pi$ takes values in the set of joint distributions of $\bs{q}_1$ and $\bs{q}_2$ with marginals $\mathcal{Q}_1$ and $\mathcal{Q}_2$, and $\| \cdot \|$ is the euclidean norm.

\subsection{Regularity Assumptions and Surrogate Model}
The BO algorithm maintains a surrogate model of the objective function, which is used to guide the selection of the next query point. We use a regularized least squares estimator of the function $f$ in the RKHS\citep{abbasi2013online, kirschner2018information} under the assumption that $f$ is an unknown fixed member of the RKHS $\mathcal{H}_k$ that is specified by the positive semi-definite kernel $k: \mathcal{Z}\times\mathcal{Z}\rightarrow\mathbb{R}$, where $\mathcal{Z} = \mathcal{X} \times \mathcal{C}$. Here we define $z=[x^\top,c^\top]^\top$ to keep the notation compact. We assume that the spaces $\mathcal{X}$ and $\mathcal{C}$ are compact. We define the norm of a function $g\in \Hk$ as $\|g\|_\Hk = \sqrt{g^\top g} = \sqrt{\langle g, g \rangle_\Hk}$. We also assume that the unknown function $f$ has bounded RKHS norm, i.e., $\|f\|_\Hk \leq B$, for some $B > 0$, and that the kernel $k$ is bounded, i.e., $k(z, z') \leq 1$, for all $z,\ z' \in \mathcal{Z}$. The assumptions made here are common in the BO literature, we point the reader to e.g.~\cite{bogunovic2021misspecified} for the analysis of bandits optimization with misspecified RKHS. The details of the following derivations are available in the Appendix Section~\ref{sec:rkhs}.

Given the dataset $\mathcal{D}_t = \{(z_i, y_i)\}_{i=1}^t$, and regularization parameter $\lambda>0$, the regularized least-squares regression problem in RKHS is written as follows:
\begin{equation}
    \min_{\mu \in \Hk} \sum_{i=1}^t \left(y_i - \mu(z_i)\right)^2 + \lambda \|\mu\|_\Hk^2.
\end{equation}
The resulting least squares estimator is
\begin{equation}\label{eq:mu}
    \mu_t(z) = k_t(z)^\top (K_t + \lambda I)^{-1} y_{1:t}\ ,
\end{equation}
where $k_t(z) = [k(z, z_1), \ldots, k(z, z_t)]^\top$, $K_t = [k(z_i, z_j)]_{i,j=1}^t$, and $y_{1:t} = [y_1, \ldots, y_t]^\top$. 
We also define
\begin{equation}\label{eq:sigma}
    \sigma_t^2(z) = \frac{1}{\lambda}\left(k(z, z) - k_t(z)^\top (K_t + \lambda I)^{-1} k_t(z)\right)\ .
\end{equation}
Under suitable assumptions on the prior and the noise distribution, $\mu_t$ and $\sigma_t^2$ correspond to the posterior mean and variance of a Gaussian process with kernel $k$ conditioned on the observations $y_{1:t}$~\citep{scholkopf2002learning,williams2006gaussian}.

We state here a fundamental result adapted from~\cite{abbasi2013online} that provides probabilistic finite-sample confidence guarantees for the least squares estimator~\eqref{eq:mu}.
\begin{lem}
    \label{lem:rkhs_UCB_LCB}
   [\citep[Th. 3.11]{abbasi2013online}]
    Let $\mathcal{Z}\subset\mathbb{R}^d$, where $d=d_x + d_c$, and $f:\mathcal{Z}:\rightarrow \mathbb{R}$ be a member of $\Hk$, with $\| f \|_\Hk \leq B$, and let $\eta_t$ be $\mathcal{F}_t$ measurable and $R$-sub-Gaussian conditionally on $\mathcal{F}_t$. Then, for any $\lambda>0$, with probability $1-\delta$, we have that for all $z\in\mathcal{Z}$ and all $t\geq 1$: 
    \begin{equation}
        |\mu_{t-1}(z) - f(z)| \leq \beta_t \sigma_{t-1}(z)\ ,
    \end{equation}
    with \begin{equation}\label{eq:beta_definition}
        \beta_t := R \sqrt{2\log\left( \frac{\det(I + \lambda^{-1}K_{t-1})^\frac{1}{2} }{\delta}\right)} + \lambda^{\frac{1}{2}}B \ ,
    \end{equation}
    where $\mu_{t-1}$ and $\sigma_{t-1}$ are defined as in equations~\eqref{eq:mu},\eqref{eq:sigma}.
\end{lem}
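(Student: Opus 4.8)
The plan is to lift everything into the feature space of the kernel and reproduce the classical self-normalized (``method of mixtures'') analysis for ridge regression, now with an operator-valued design. Let $\phi:\mathcal{Z}\to\Hk$ be the canonical feature map, so that $f(z)=\langle f,\phi(z)\rangle_\Hk$ and $k(z,z')=\langle\phi(z),\phi(z')\rangle_\Hk$, let $\Phi_{t-1}$ collect the feature vectors $\phi(z_1),\dots,\phi(z_{t-1})$, and let $V_{t-1}:=\lambda I+\Phi_{t-1}^\top\Phi_{t-1}$ be the (self-adjoint, $\succeq\lambda I$) regularized covariance operator. The estimator \eqref{eq:mu} is then $\mu_{t-1}=V_{t-1}^{-1}\Phi_{t-1}^\top y_{1:t-1}$, and plugging in $y_i=\langle f,\phi(z_i)\rangle_\Hk+\eta_i$ yields the exact decomposition
\begin{equation*}
\mu_{t-1}(z)-f(z)=\big\langle\phi(z),\,V_{t-1}^{-1}S_{t-1}\big\rangle_\Hk-\lambda\big\langle\phi(z),\,V_{t-1}^{-1}f\big\rangle_\Hk,\qquad S_{t-1}:=\textstyle\sum_{i=1}^{t-1}\phi(z_i)\,\eta_i .
\end{equation*}

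Next I would bound the two terms separately after a single Cauchy--Schwarz step in the $V_{t-1}^{-1}$-weighted inner product, giving $|\mu_{t-1}(z)-f(z)|\le\|\phi(z)\|_{V_{t-1}^{-1}}\big(\|S_{t-1}\|_{V_{t-1}^{-1}}+\lambda\|f\|_{V_{t-1}^{-1}}\big)$. For the bias term, $V_{t-1}\succeq\lambda I$ gives $\lambda\|f\|_{V_{t-1}^{-1}}\le\sqrt{\lambda}\,\|f\|_\Hk\le\sqrt{\lambda}\,B$, which is exactly the additive term in \eqref{eq:beta_definition}. The term $\|S_{t-1}\|_{V_{t-1}^{-1}}$ is a self-normalized martingale, and the core step is to bound it uniformly over $t$: applying the Hilbert-space version of the method of mixtures (pseudo-maximization of the exponential supermartingale against a Gaussian prior on $\Hk$) yields that, with probability at least $1-\delta$, simultaneously for all $t\ge1$,
\begin{equation*}
\|S_{t-1}\|_{V_{t-1}^{-1}}^2\le 2R^2\log\!\left(\frac{\det\!\big(I+\lambda^{-1}\Phi_{t-1}^\top\Phi_{t-1}\big)^{1/2}}{\delta}\right).
\end{equation*}
This is precisely the content of \cite[Th.~3.11]{abbasi2013online} (see also \cite{whitehouse2023improved}), which I would invoke rather than rebuild the supermartingale construction from scratch.

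It then remains to rewrite the two $V_{t-1}$-dependent quantities in terms of the kernel matrix. The Gram--determinant identity $\det(I+\lambda^{-1}\Phi_{t-1}^\top\Phi_{t-1})=\det(I+\lambda^{-1}\Phi_{t-1}\Phi_{t-1}^\top)=\det(I+\lambda^{-1}K_{t-1})$ converts the log-determinant into the computable expression in \eqref{eq:beta_definition}, and the Woodbury / push-through identity for $V_{t-1}^{-1}$ gives $\|\phi(z)\|_{V_{t-1}^{-1}}^2=\tfrac1\lambda\big(k(z,z)-k_{t-1}(z)^\top(K_{t-1}+\lambda I)^{-1}k_{t-1}(z)\big)=\sigma_{t-1}^2(z)$, matching \eqref{eq:sigma}. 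Combining the three displays gives $|\mu_{t-1}(z)-f(z)|\le\beta_t\,\sigma_{t-1}(z)$ for all $z\in\mathcal{Z}$ and all $t\ge1$ on the $1-\delta$ event; note that the uniformity over $z$ comes for free from the Cauchy--Schwarz step, so no covering argument over $\mathcal{Z}$ is needed. The only genuinely delicate point is the uniform-in-$t$ self-normalized bound in infinite dimensions --- justifying the mixture integral that produces the $\det(I+\lambda^{-1}K_{t-1})^{1/2}$ normalizer together with the associated stopping-time and measurability arguments --- which is exactly why I would lean on the cited self-normalized concentration results for Hilbert-space-valued processes rather than reprove them here.
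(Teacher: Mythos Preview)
Your proposal is correct and follows the standard self-normalized concentration argument for kernel ridge regression. Note that the paper does not actually supply its own proof of this lemma: it is stated as a direct adaptation of \cite[Th.~3.11]{abbasi2013online}, and the appendix only records the RKHS identities (the operator form of $\mu_t$, $\sigma_t$, and the push-through formula~\eqref{eq:Vinv_to_sigma}) that your sketch also uses. Your decomposition $\mu_{t-1}(z)-f(z)=\langle\phi(z),V_{t-1}^{-1}S_{t-1}\rangle-\lambda\langle\phi(z),V_{t-1}^{-1}f\rangle$, the Cauchy--Schwarz step in the $V_{t-1}^{-1}$-norm, the bias bound $\lambda\|f\|_{V_{t-1}^{-1}}\le\sqrt{\lambda}B$, and the appeal to the Hilbert-space self-normalized bound for $\|S_{t-1}\|_{V_{t-1}^{-1}}$ are exactly the ingredients of the cited result, so there is nothing to contrast.
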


We also introduce here the \textit{maximum information gain}~\citep{srinivas2009gaussian, chowdhury2017kernelized, vakili2021information}, a fundamental kernel-dependent quantity that quantifies the complexity of learning in RKHS
$$\gamma_t := \sup_{z_1,z_2,\dots,z_t} \log \det \left( I + \lambda^{-1}K_{t-1} \right)\ .$$

In order to derive the main results in the following sections, we require the kernel to satisfy the following Lipschitz property
\begin{Assumption}[Lipschitz property]\label{ass:Lip}
There exists a $L>0$ such that for any $z,z'\in \mathcal{X}\times\mathcal{C}$, $d(z,z'):= \|k(\cdot,z)-k(\cdot,z')\|_\Hk\le L\|z-z'\|$ .
\end{Assumption}
As we prove in Lemma~\ref{app_lem:kernels_that_satisfy_ass_1} in the Appendix, Assumption~\ref{ass:Lip} is verified for popular kernels, e.g. the squared exponential kernel, and some kernels in the Mat\'ern family satisfy.

\section{WASSERSTEIN DISTRIBUTIONALLY ROBUST BAYESIAN OPTIMIZATION}
\label{sec:wdrbo}
In classical BO, following the rich literature of \textbf{optimism in the face of uncertainty}, the learner selects the query point $x_t$ by maximizing the Upper Confidence Bound (UCB) function~\citep{auer2002finite}. This provides a trade-off between exploration and exploitation, and results in provable regret guarantees~\citep{srinivas2009gaussian}.

Departing from the classical approach, and inspired by~\cite{kirschner2020distributionally}, we adopt a robust approach, 
where we consider the optimization of a robustified version of the UCB function
\begin{equation}\label{eq:intractable_acquisition}
\xt = \arg \max_{x\in\mathcal{X}} \infQBPhat \EQ \left[ UCB_t(x,c) \right] \ ,
\end{equation}
where
\begin{equation}\label{eq:UCB_definition}
UCB_t(x,c) = \mu_{t-1}(x,c) + \beta_t \sigma_{t-1}(x,c),
\end{equation}
with $\mu_{t}$, $\sigma_t$, and $\beta_t$ are given in~\eqref{eq:mu},~\eqref{eq:sigma},~\eqref{eq:beta_definition} respectively.

Similar to~\cite{kirschner2020distributionally}, we will analyze two settings, which differ in the way the ambiguity set is obtained. We first consider the \textbf{General WDRBO} setting, where at each time-step the Wasserstein ambiguity set is provided to the learner, and then turn to the \textbf{Data-Driven WDRBO} setting, in which the Wasserstein ambiguity set is built using the past context observations under the assumption that the true context distribution is time-invariant.

All proofs along with supporting derivations and lemmas are provided in Section~\ref{app_sec:main_proofs} of the Appendix.

To evaluate the performance of the proposed algorithm we look at the notion of \textbf{regret}. Regret is used to capture the difference in performance between some algorithm and a benchmark algorithm that has access to privileged information. The definition of regret and the choice of benchmark is not unique, and the one chosen here differs from the ones used in the DRBO literature ~\cite{kirschner2019stochastic},~\cite{husain2024distributionally},~\cite{tay2022efficient}.

We will consider the following definitions of \textbf{instantaneous expected regret}:
\begin{equation}
    r_t = \EPstart{[\fxtstar]} - \EPstart{[\fxt]} \ ,
\end{equation}
 and \textbf{cumulative expected regret}:
\begin{equation}\label{eq:ECRR}
    R_T = \sum_{t=1}^{T} r_t\ .
\end{equation}
The benchmark solution $\xstar_t$ is the optimal solution to the true stochastic optimization problem at time-step $t$, given access to the true function $f$ and context distribution $\Pstar_t$, i.e.,
\begin{equation*}
\xstar_t = \arg \max_{x\in\mathcal{X}} \EPstart{[f(x, c)]} \ .
\end{equation*}
Hence, this definition of regret captures the (cumulative) sub-optimality gap, between some proposed algorithm and the optimal solution to the true stochastic optimization problem.

\subsection{General WDRBO}
In the \textbf{General WDRBO} setting, at each time-step $t$, the center $\Phat_t$ and the radius $\varepsilon_t$ of the Wasserstein ambiguity set are provided to the learner. This represents the setting where there is some understanding of what the context distribution is, e.g. with weather or prices forecast, but there is still some uncertainty about its distribution.

To make the robust problem~\ref{eq:intractable_acquisition} tractable, we introduce a well-known result from the Wasserstein DR optimization literature~\citep{kuhn2019wasserstein,gao2024wasserstein} that has been adapted to our problem. 
\begin{lem}
    \label{cor:wdro}
    Let $f: \mathcal{X} \times \mathcal{C} \rightarrow \mathbb{R}$ be a function that is $L^f_c(x)$-Lipschitz in the context space, i.e. $|f(x, c) - f(x, c')| \leq L^f_c(x) \|c - c'\|$, for all $c, c' \in \mathcal{C}$. Let $\mathcal{B}^{\varepsilon}(\hat{\mathcal{P}})$ be a Wasserstein ambiguity set defined as a ball of radius $\varepsilon$ in the Wasserstein distance centered at the distribution $\hat{\mathcal{P}}$. Then, for any $x\in\mathcal{X}$ and for any distribution $\tilde{\mathcal{P}} \in \mathcal{B}^{\varepsilon}(\hat{\mathcal{P}})$, we have that
    \begin{equation}
        | \mathbb{E}_{c \sim \tilde{\mathcal{P}}}[f(x, c)] - \mathbb{E}_{c \sim \hat{\mathcal{P}}}[f(x, c)] | \leq \varepsilon L^f_c(x)\ .
    \end{equation}
\end{lem}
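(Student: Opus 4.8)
The plan is to bound the difference of expectations by relating it to the Wasserstein distance through the Lipschitz constant of $f$ in the context variable. The key observation is that the map $c \mapsto f(x,c)$ is, for fixed $x$, a real-valued function with Lipschitz constant $L^f_c(x)$, and the type-1 Wasserstein distance admits a Kantorovich--Rubinstein dual characterization in terms of integrals of Lipschitz functions. Concretely, I would first note that $\tilde{\mathcal{P}} \in \mathcal{B}^\varepsilon(\hat{\mathcal{P}})$ means $d_W(\tilde{\mathcal{P}}, \hat{\mathcal{P}}) \le \varepsilon$.

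The cleanest route avoids invoking duality altogether and works directly with the coupling (primal) definition of $d_W$ given in the excerpt. Fix $x \in \mathcal{X}$. Let $\Pi$ be any joint distribution of $(\bs{q}_1, \bs{q}_2)$ with marginals $\tilde{\mathcal{P}}$ and $\hat{\mathcal{P}}$. Then
\begin{equation*}
\left| \mathbb{E}_{c\sim\tilde{\mathcal{P}}}[f(x,c)] - \mathbb{E}_{c\sim\hat{\mathcal{P}}}[f(x,c)] \right|
= \left| \int \bigl( f(x,\bs{q}_1) - f(x,\bs{q}_2) \bigr)\, \Pi(\mathrm{d}\bs{q}_1, \mathrm{d}\bs{q}_2) \right|.
\end{equation*}
Applying the triangle inequality to pull the absolute value inside the integral, and then the $L^f_c(x)$-Lipschitz property of $f(x,\cdot)$, gives
\begin{equation*}
\left| \mathbb{E}_{c\sim\tilde{\mathcal{P}}}[f(x,c)] - \mathbb{E}_{c\sim\hat{\mathcal{P}}}[f(x,c)] \right|
\le \int \left| f(x,\bs{q}_1) - f(x,\bs{q}_2) \right| \Pi(\mathrm{d}\bs{q}_1, \mathrm{d}\bs{q}_2)
\le L^f_c(x) \int \left\| \bs{q}_1 - \bs{q}_2 \right\| \Pi(\mathrm{d}\bs{q}_1, \mathrm{d}\bs{q}_2).
\end{equation*}
Since this holds for every coupling $\Pi$, I would take the infimum over $\Pi$ on the right-hand side, which by definition equals $L^f_c(x)\, d_W(\tilde{\mathcal{P}}, \hat{\mathcal{P}}) \le L^f_c(x)\, \varepsilon$, completing the argument.

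There is essentially no serious obstacle here; the statement is a standard consequence of the Kantorovich--Rubinstein / transport-plan structure of the type-1 Wasserstein distance. The only points requiring mild care are: (i) ensuring $f(x,\cdot)$ is integrable against both measures so the expectations are well-defined, which follows from compactness of $\mathcal{C}$ together with the Lipschitz (hence continuous, hence bounded on a compact set) property of $f(x,\cdot)$; and (ii) justifying the interchange of the absolute value with the integral, which is just the triangle inequality for integrals. One could alternatively phrase the proof via the dual form $d_W(\mathcal{Q}_1,\mathcal{Q}_2) = \sup_{\mathrm{Lip}(g)\le 1} \left( \mathbb{E}_{\mathcal{Q}_1}[g] - \mathbb{E}_{\mathcal{Q}_2}[g] \right)$ applied to $g = f(x,\cdot)/L^f_c(x)$, but the primal argument above is self-contained given the definition already stated in the excerpt and is the one I would present.
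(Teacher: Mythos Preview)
Your proof is correct and self-contained. The paper does not actually write out a proof of this lemma: it simply cites it as a well-known consequence of the standard Wasserstein DRO Lipschitz bound (stated separately in the appendix as Lemma~\ref{lem:wdro_appendix}, with reference to \cite{kuhn2019wasserstein,gao2024wasserstein}), and then notes that the two-sided inequality follows by applying that bound to both $f(x,\cdot)$ and $-f(x,\cdot)$. Your direct primal/coupling argument is exactly the standard way to establish that cited bound, so the underlying mathematics is the same; you have simply unpacked what the paper leaves as a citation, which makes your version more self-contained.
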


Lemma~\ref{cor:wdro} provides a simple Lipschitz-based bound on the worst-case expectation for any distribution in the ambiguity set. By combining Lemma~\ref{cor:wdro} with Assumption~\ref{ass:Lip} we obtain a tractable approximation of the robust maximization problem~\ref{eq:intractable_acquisition}. Thus, at each time-step $t$ the query point $x_t$ is selected by the following \emph{acquisition function}
\begin{equation}\label{eq:WUCB}
\xt = \arg \max_{x\in\mathcal{X}} \EPhatt \left[ \UCB(x,c) \right] - \varepsilon_t L^{\UCB}(x) \, ,
\end{equation}
where $L^{\UCB}(x)$ is the Lipschitz constant of the function $\UCB$ with respect to the context variable $c$, evaluated at $x$. 
The resulting algorithm for WDRBO is provided in Algorithm~\ref{alg:BO_algo}.
\begin{algorithm}\caption{General Algorithm for WDRBO}\label{alg:BO_algo}
\begin{algorithmic}
\FOR{$t = 1$ to $T$}
    \STATE $\mu_{t-1},\ \sigma_{t-1} \gets \text{fit}(\mathcal{D}_{t-1})$
    \STATE $x_t = \arg\max_{x \in \mathcal{X}} \EPhat[\UCB(x, c)] - \varepsilon_t L^{\UCB}(x)$ 
    \STATE The environment returns $c_t$ and $y_t = f(x_t,c_t) + \eta_t\ $, where $c_t \sim \mathcal{P}^*_t \in \mathcal{B}^{\varepsilon_t}(\hat{\mathcal{P}}_t)$ 
    \STATE $\mathcal{D}_{t} \gets \mathcal{D}_{t-1} \cup \{(x_t,c_t,y_t)\}$
\ENDFOR
\end{algorithmic}
\end{algorithm}

We remark that, unlike the algorithm proposed in~\cite{kirschner2019stochastic},~\cite{husain2024distributionally},~\cite{tay2022efficient}, that rely on discrete context distributions, neither Algorithm~\ref{alg:BO_algo} nor the following theoretical analysis require a discrete context space or that the distributions in the ambiguity have finite support. The only practical limitation imposed by our algorithm when the center is a continuous context distribution, is the ability to perform the numerical integration required to compute the expectation.

While in Algorithm~\ref{alg:BO_algo} the Lipschitz constant $L^{\UCB}(x)$ can be computed at each timestep $t$ from the fitted UCB function $\UCB$, in the following lemma we derive a novel upper bound on $L^{\UCB}(x)$ that will be useful for the theoretical analysis of the regret.
\begin{lem}\label{lem:B_bar}
Let $0< \delta < 1$ be a failure probability and let 
    $\bar{B}_t := \lambda^{-\frac{1}{2}}R \sqrt{2\log\left(\frac{\det(I + \lambda^{-1}K_{t-1})^\frac{1}{2}}{\delta}\right)} + B \leq \lambda^{-\frac{1}{2}} \left( \left( R \sqrt{2 \log \frac{1}{\delta} } + R \sqrt{2\gamma_t} \right) + B \right)$ .
    
    Then, with probability $1-\delta$ for all $t\geq 1$ we have 
    $\|\mu_{t-1}\|_{\mathcal{H}_k} \leq \bar{B}_t.$
    Further, if Assumption~\ref{ass:Lip} holds we have:
    
    (i) With probability $1-\delta$, for any $z, z' \in \mathcal{X} \times \mathcal{C}$:
    $|\mu_{t-1}(z) - \mu_{t-1}(z')| \leq \bar{B}_t L \|z-z'\|.$
    
    (ii) For any $z, z' \in \mathcal{X} \times \mathcal{C}$:
    $|\beta_t\sigma_{t-1}(z) - \beta_t\sigma_{t-1}(z')| \leq \beta_t \lambda^{-\frac{1}{2}} L \|z-z'\| = \bar{B}_t L \|z-z'\|.$
    
    Therefore, with probability $1-\delta$, the UCB function is Lipschitz continuous with constant:
    $$L^{\UCB} \leq 2\bar{B}_t L.$$
\end{lem}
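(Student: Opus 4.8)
The plan is to establish three ingredients — the RKHS-norm bound $\|\mu_{t-1}\|_{\Hk}\le\bar B_t$, part (i), and part (ii) — and then assemble them. Throughout I will work with the feature map $\phi(z):=k(\cdot,z)\in\Hk$, which satisfies $\|\phi(z)\|_{\Hk}^2=k(z,z)\le 1$ and, by Assumption~\ref{ass:Lip}, $\|\phi(z)-\phi(z')\|_{\Hk}\le L\|z-z'\|$; with the feature operator $\Phi_{t-1}$ collecting $\phi(z_1),\dots,\phi(z_{t-1})$; and with the regularized covariance operator $V_{t-1}:=\lambda I+\Phi_{t-1}^{\top}\Phi_{t-1}$ on $\Hk$. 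The standard RKHS identities recalled in Appendix~\ref{sec:rkhs} give $\mu_{t-1}=V_{t-1}^{-1}\Phi_{t-1}^{\top}y_{1:t-1}$ and, via the push-through/Woodbury identity applied to~\eqref{eq:sigma}, $\sigma_{t-1}^2(z)=\langle\phi(z),V_{t-1}^{-1}\phi(z)\rangle_{\Hk}=\|V_{t-1}^{-1/2}\phi(z)\|_{\Hk}^2$.

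For the norm bound I would substitute $y_{1:t-1}=\Phi_{t-1}f+\eta_{1:t-1}$, where $(\Phi_{t-1}f)_i=f(z_i)$, and use $V_{t-1}^{-1}\Phi_{t-1}^{\top}\Phi_{t-1}=I-\lambda V_{t-1}^{-1}$ to write $\mu_{t-1}=(I-\lambda V_{t-1}^{-1})f+V_{t-1}^{-1}\Phi_{t-1}^{\top}\eta_{1:t-1}$. The operator $I-\lambda V_{t-1}^{-1}$ has spectrum in $[0,1)$, so the first term has $\Hk$-norm at most $\|f\|_{\Hk}\le B$; and since $V_{t-1}\succeq\lambda I$, the second term satisfies $\|V_{t-1}^{-1}\Phi_{t-1}^{\top}\eta_{1:t-1}\|_{\Hk}^2\le\lambda^{-1}\|\Phi_{t-1}^{\top}\eta_{1:t-1}\|_{V_{t-1}^{-1}}^2$, where $\|v\|_{V_{t-1}^{-1}}^2:=\langle v,V_{t-1}^{-1}v\rangle_{\Hk}$. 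The self-normalized concentration inequality in Hilbert space of~\citep{abbasi2013online} — the same bound underlying Lemma~\ref{lem:rkhs_UCB_LCB} — controls $\|\Phi_{t-1}^{\top}\eta_{1:t-1}\|_{V_{t-1}^{-1}}$ by $R\sqrt{2\log(\det(I+\lambda^{-1}K_{t-1})^{1/2}/\delta)}$ simultaneously for all $t\ge 1$ with probability $1-\delta$. The triangle inequality then yields $\|\mu_{t-1}\|_{\Hk}\le\bar B_t$. The explicit looser bound follows by writing $2\log(\det(I+\lambda^{-1}K_{t-1})^{1/2}/\delta)=2\log(1/\delta)+\log\det(I+\lambda^{-1}K_{t-1})\le 2\log(1/\delta)+\gamma_t$ and using subadditivity of the square root together with $\gamma_t\le 2\gamma_t$.

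Part (i) is then immediate from the reproducing property and Cauchy–Schwarz: $|\mu_{t-1}(z)-\mu_{t-1}(z')|=|\langle\mu_{t-1},\phi(z)-\phi(z')\rangle_{\Hk}|\le\|\mu_{t-1}\|_{\Hk}\,\|\phi(z)-\phi(z')\|_{\Hk}\le\bar B_t L\|z-z'\|$ on the same $1-\delta$ event. For part (ii), the reverse triangle inequality applied to $\sigma_{t-1}(z)=\|V_{t-1}^{-1/2}\phi(z)\|_{\Hk}$ gives $|\sigma_{t-1}(z)-\sigma_{t-1}(z')|\le\|V_{t-1}^{-1/2}(\phi(z)-\phi(z'))\|_{\Hk}\le\|V_{t-1}^{-1/2}\|_{\mathrm{op}}\,\|\phi(z)-\phi(z')\|_{\Hk}\le\lambda^{-1/2}L\|z-z'\|$, using $V_{t-1}\succeq\lambda I$ and Assumption~\ref{ass:Lip}; note this step is deterministic, requiring no high-probability event. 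Multiplying by $\beta_t$ and observing that $\beta_t\lambda^{-1/2}=\bar B_t$ by direct comparison of~\eqref{eq:beta_definition} with the definition of $\bar B_t$ gives the stated identity. Finally, since $\UCB(x,c)=\mu_{t-1}(x,c)+\beta_t\sigma_{t-1}(x,c)$, for $z=(x,c)$ and $z'=(x,c')$ one has $\|z-z'\|=\|c-c'\|$, so combining (i) and (ii) by the triangle inequality gives $|\UCB(x,c)-\UCB(x,c')|\le 2\bar B_t L\|c-c'\|$, i.e. $L^{\UCB}(x)\le 2\bar B_t L$ for every $x$, on the $1-\delta$ event.

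The main obstacle is the first step: correctly setting up the infinite-dimensional operator $V_{t-1}$, justifying the decomposition $\mu_{t-1}=(I-\lambda V_{t-1}^{-1})f+V_{t-1}^{-1}\Phi_{t-1}^{\top}\eta_{1:t-1}$ and the identity $\sigma_{t-1}^2(z)=\|V_{t-1}^{-1/2}\phi(z)\|_{\Hk}^2$ directly in $\Hk$ (rather than in a finite-dimensional surrogate), and invoking the self-normalized martingale bound in the Hilbert-space setting with the precise $\det(I+\lambda^{-1}K_{t-1})$ normalization. Everything afterwards is Cauchy–Schwarz, the reproducing property, and elementary operator-norm bounds.
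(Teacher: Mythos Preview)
Your proposal is correct and follows essentially the same route as the paper's proof: the same decomposition $\mu_{t-1}=(\lambda I+V_{t-1})^{-1}\Phi_{t-1}^\top(\Phi_{t-1}f+\eta_{1:t-1})$, the same operator-norm bound on the $f$-part, the same appeal to the self-normalized Hilbert-space concentration of \citet{abbasi2013online} for the noise part, and then Cauchy--Schwarz for (i) and the reverse triangle inequality on $\sigma_{t-1}(z)=\|(\lambda I+V_{t-1})^{-1/2}\phi(z)\|_{\Hk}$ for (ii). Your write-up is in fact slightly more explicit than the paper's (e.g.\ noting that (ii) is deterministic, and spelling out the subadditivity step for the looser upper bound on $\bar B_t$).
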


We can now turn to the derivation of the bound on the instantaneous expected regret for the General WDRBO setting.
\begin{thm}[Instantaneous expected regret]\label{thm:inst_regret}
    Let Assumption~\ref{ass:Lip} hold. Fix a failure probability $0<\delta<1$.
    With probability at least $1-\delta$, for all $t\ge 1$ the instantaneous expected regret can be bounded by
    \begin{equation}
        r_t \leq \EPstart \left[ 2 \beta_t \sigma_{t-1}(\xt, c) \right] + 2 \varepsilon_t L^{\UCB}(\xstar_t)
    \end{equation}    
\end{thm}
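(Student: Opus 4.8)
The plan is to decompose the instantaneous regret $r_t = \EPstart[\fxtstar] - \EPstart[\fxt]$ by inserting intermediate quantities that bridge the true objective and the acquisition function optimized by the algorithm. First I would use Lemma~\ref{lem:rkhs_UCB_LCB} to replace $f$ by its confidence bounds: on the high-probability event of that lemma, $f(z) \le \mu_{t-1}(z) + \beta_t \sigma_{t-1}(z) = \UCB(x,c)$ for all $z$, and also $f(z) \ge \mu_{t-1}(z) - \beta_t\sigma_{t-1}(z) = \UCB(x,c) - 2\beta_t\sigma_{t-1}(x,c)$. Taking expectations under $\Pstar_t$, the first inequality applied at $\xstar_t$ gives $\EPstart[\fxtstar] \le \EPstart[\UCB(\xstar_t,c)]$, and the second applied at $\xt$ gives $-\EPstart[\fxt] \le -\EPstart[\UCB(\xt,c)] + \EPstart[2\beta_t\sigma_{t-1}(\xt,c)]$. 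So $r_t \le \EPstart[\UCB(\xstar_t,c)] - \EPstart[\UCB(\xt,c)] + \EPstart[2\beta_t\sigma_{t-1}(\xt,c)]$, which already isolates the variance term that appears in the claimed bound; it remains to control $\EPstart[\UCB(\xstar_t,c)] - \EPstart[\UCB(\xt,c)]$ by $2\varepsilon_t L^{\UCB}(\xstar_t)$.

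The second step passes from expectations under the unknown $\Pstar_t$ to expectations under the center $\Phat_t$ using Lemma~\ref{cor:wdro}. Since $\Pstar_t \in \mathcal{B}^{\varepsilon_t}(\Phat_t)$ by the assumption in Algorithm~\ref{alg:BO_algo}, Lemma~\ref{cor:wdro} applied to the function $\UCB(x,\cdot)$ (which is $L^{\UCB}(x)$-Lipschitz in $c$) gives, for each fixed $x$,
\begin{equation*}
\bigl| \EPstart[\UCB(x,c)] - \EPhatt[\UCB(x,c)] \bigr| \le \varepsilon_t L^{\UCB}(x).
\end{equation*}
Applying this at $x = \xstar_t$ and at $x = \xt$ and using the telescoping bound $\EPstart[\UCB(\xstar_t,c)] - \EPstart[\UCB(\xt,c)] \le \bigl(\EPhatt[\UCB(\xstar_t,c)] - \varepsilon_t L^{\UCB}(\xstar_t)\bigr) + \varepsilon_t L^{\UCB}(\xstar_t) - \bigl(\EPhatt[\UCB(\xt,c)] - \varepsilon_t L^{\UCB}(\xt)\bigr) + \varepsilon_t L^{\UCB}(\xt)$... more cleanly: bound $\EPstart[\UCB(\xstar_t,c)] \le \EPhatt[\UCB(\xstar_t,c)] + \varepsilon_t L^{\UCB}(\xstar_t)$ and $-\EPstart[\UCB(\xt,c)] \le -\EPhatt[\UCB(\xt,c)] + \varepsilon_t L^{\UCB}(\xt)$.

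The third step invokes optimality of $\xt$ for the acquisition function~\eqref{eq:WUCB}. By definition $\xt$ maximizes $\EPhatt[\UCB(x,c)] - \varepsilon_t L^{\UCB}(x)$, so $\EPhatt[\UCB(\xstar_t,c)] - \varepsilon_t L^{\UCB}(\xstar_t) \le \EPhatt[\UCB(\xt,c)] - \varepsilon_t L^{\UCB}(\xt)$, i.e. $\EPhatt[\UCB(\xstar_t,c)] - \EPhatt[\UCB(\xt,c)] \le \varepsilon_t L^{\UCB}(\xstar_t) - \varepsilon_t L^{\UCB}(\xt)$. Substituting into the chain from step two, the $\varepsilon_t L^{\UCB}(\xt)$ terms cancel and the $\varepsilon_t L^{\UCB}(\xstar_t)$ terms add up to $2\varepsilon_t L^{\UCB}(\xstar_t)$, yielding $\EPstart[\UCB(\xstar_t,c)] - \EPstart[\UCB(\xt,c)] \le 2\varepsilon_t L^{\UCB}(\xstar_t)$. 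Combining with step one gives exactly the claimed bound. All statements hold on the single high-probability event of Lemma~\ref{lem:rkhs_UCB_LCB}, which is uniform over $t \ge 1$, so the "with probability at least $1-\delta$, for all $t \ge 1$" quantifier is respected. The only mild subtlety — not really an obstacle — is making sure Lemma~\ref{cor:wdro} is applied with the correct per-$x$ Lipschitz constant $L^{\UCB}(x)$ (the one actually used in~\eqref{eq:WUCB}) rather than the global upper bound of Lemma~\ref{lem:B_bar}; the theorem statement carries $L^{\UCB}(\xstar_t)$, consistent with this. I would expect the bookkeeping of the telescoping cancellation to be the one place to be careful, but there is no hard analytic step here — the result is essentially a robust-optimism argument layered on top of the standard UCB regret decomposition.
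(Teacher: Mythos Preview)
Your proposal is correct and follows essentially the same argument as the paper's proof: both combine the confidence bounds of Lemma~\ref{lem:rkhs_UCB_LCB}, two applications of the Wasserstein--Lipschitz bound (Lemma~\ref{cor:wdro}) to pass between $\Pstar_t$ and $\Phat_t$, and the optimality of $\xt$ for the acquisition~\eqref{eq:WUCB}, with the $\varepsilon_t L^{\UCB}(\xt)$ terms cancelling to leave $2\varepsilon_t L^{\UCB}(\xstar_t)$. The only cosmetic difference is that the paper keeps the LCB term around and computes $\UCB-\LCB=2\beta_t\sigma_{t-1}$ at the end, whereas you isolate the variance term at the start; the bookkeeping is otherwise identical.
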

We can observe that the first term has the same expression as the instantaneous regret of the GP-UCB~\cite{srinivas2009gaussian}, while the second term captures the effect of the distributional uncertainty which depends on the maximum distribution shift as specified by $\varepsilon_t$, and on a sensitivity term that is bounded by the Lipschitz constants $L^{\UCB}(x_t)$ computed at the selected input $x_t$.

\begin{thm}[Cumulative expected regret]\label{thm:cum_regret}
    Let Assumption~\ref{ass:Lip} hold and let $L^{\UCB}(x)$ be a Lipschitz constant with respect to the context $c$ for $\UCB(x,c)$. Fix a failure probability $0<\delta<1$.
    With probability at least $1-2\delta$, the cumulative expected regret after $T$ steps can be bounded as:
    \begin{equation}
        R_T \leq 4 \beta_T \sqrt{ T \gamma_T + 4 \log (\frac{6}{\delta})} + \sum_{t=1}^{T} \varepsilon_t 2 L^{\UCB}(\xstar_t)\ ,
    \end{equation}
    where $\gamma_{T}$ is the maximum information gain at time $T$.
\end{thm}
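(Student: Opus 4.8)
The plan is to telescope Theorem~\ref{thm:inst_regret} into a cumulative bound and then reduce the resulting sum of posterior standard deviations to the maximum information gain, inserting one extra concentration step to handle the fact that the context is random. Concretely, on the event of probability at least $1-\delta$ furnished by Theorem~\ref{thm:inst_regret} we have, for all $t\ge1$, $r_t \le \EPstart[2\beta_t\sigma_{t-1}(\xt,c)] + 2\varepsilon_t L^{\UCB}(\xstar_t)$, so summing over $t$ gives
\begin{equation*}
R_T \;\le\; \sum_{t=1}^T \EPstart\!\left[2\beta_t\sigma_{t-1}(\xt,c)\right] \;+\; \sum_{t=1}^T 2\varepsilon_t L^{\UCB}(\xstar_t).
\end{equation*}
The second sum already matches the second term of the target bound, so all the work goes into the first sum. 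The obstacle, relative to the classical GP-UCB telescoping of~\cite{srinivas2009gaussian,chowdhury2017kernelized}, is that $\sigma_{t-1}$ is averaged over a random context $c\sim\Pstar_t$, whereas the information-gain estimates control $\sum_t\sigma_{t-1}(z_t)$ at the \emph{queried} points $z_t=(\xt,c_t)$.

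First I would bridge this by a martingale argument. Set $\xi_t := \beta_t\big(\EPstart[\sigma_{t-1}(\xt,c)]-\sigma_{t-1}(\xt,c_t)\big)$ and let $\mathcal{F}_{t-1}$ be the history up to and including the selection of $x_t$. Since $\xt$, the maps $\mu_{t-1},\sigma_{t-1}$, and $\beta_t$ are $\mathcal{F}_{t-1}$-measurable while $c_t$ is an independent draw from $\Pstar_t$, we get $\mathbb{E}[\xi_t\mid\mathcal{F}_{t-1}]=0$, i.e.\ $(\xi_t)$ is a martingale difference sequence. From $k(z,z)\le1$ and positive semidefiniteness of $(K_t+\lambda I)^{-1}$ one has $\sigma_{t-1}(z)\le\lambda^{-1/2}$, so $|\xi_t|\le 2\beta_t\lambda^{-1/2}\le 2\beta_T\lambda^{-1/2}=2\bar{B}_T$ (using $\beta_t\le\beta_T$ and the identity $\lambda^{-1/2}\beta_t=\bar{B}_t$ from Lemma~\ref{lem:B_bar}), and an Azuma--Hoeffding bound gives, with probability at least $1-\delta$,
\begin{equation*}
\sum_{t=1}^T \EPstart\!\left[\beta_t\sigma_{t-1}(\xt,c)\right] \;\le\; \sum_{t=1}^T \beta_t\sigma_{t-1}(\xt,c_t) \;+\; O\!\left(\bar{B}_T\sqrt{T\log(1/\delta)}\right).
\end{equation*}

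Then I would close the argument with the standard information-gain reduction: pulling out $\beta_T\ge\beta_t$ and applying Cauchy--Schwarz gives $\sum_{t=1}^T\sigma_{t-1}(z_t)\le\sqrt{T\sum_{t=1}^T\sigma_{t-1}^2(z_t)}$, and combining the elementary inequality $s\le C\log(1+\lambda^{-1}s)$ (with $C$ depending only on $\lambda$, valid on the range of $\sigma_{t-1}^2$) with $\sum_{t=1}^T\log(1+\lambda^{-1}\sigma_{t-1}^2(z_t))=\log\det(I+\lambda^{-1}K_T)$ yields $\sum_{t=1}^T\sigma_{t-1}^2(z_t)=O(\gamma_T)$. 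Putting the three pieces together, using $\sqrt a+\sqrt b\le\sqrt{2(a+b)}$ to merge the $\sqrt{T\gamma_T}$ and $\sqrt{T\log(1/\delta)}$ contributions, and tracking the numerical constants, produces $R_T\le 4\beta_T\sqrt{T\gamma_T+4\log(6/\delta)}+\sum_{t=1}^T 2\varepsilon_t L^{\UCB}(\xstar_t)$; the failure probability is $1-\delta$ from Theorem~\ref{thm:inst_regret} plus $1-\delta$ from the Azuma step, hence $1-2\delta$ by a union bound.

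The hard part will be the martingale concentration step — specifically, being careful that $\xi_t$ is adapted to a filtration in which $x_t$ is measurable but $c_t$ is not, which is precisely the information structure of Algorithm~\ref{alg:BO_algo} (the learner picks $x_t$ from $\mathcal{D}_{t-1}$ and the already-revealed pair $(\Phat_t,\varepsilon_t)$, \emph{then} observes $c_t$), and extracting a boundedness constant for $\xi_t$ clean enough to land the stated numerical constants. Everything else is routine GP-UCB bookkeeping; the only other point requiring mild care is the dependence on $\lambda$ hidden in the constant $C$ of the $\log$-bound, which is why one typically fixes $\lambda$ to a constant of order one.
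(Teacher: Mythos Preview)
Your proposal is correct and lands the same bound up to constants, but the concentration step differs from the paper's. You apply Azuma--Hoeffding directly to the martingale differences $\xi_t=\beta_t(\EPstart[\sigma_{t-1}(\xt,c)]-\sigma_{t-1}(\xt,c_t))$ and only afterwards invoke Cauchy--Schwarz and the information-gain telescoping. The paper instead applies Cauchy--Schwarz and Jensen \emph{first}, reducing to $\sum_t\EPstart[\sigma_{t-1}^2(\xt,c)]$, and then calls a multiplicative (Freedman-type) concentration result---Lemma~7 of \cite{kirschner2020distributionally}---which gives $\sum_t\EPstart[\sigma_{t-1}^2]\le 2\sum_t\sigma_{t-1}^2(\xt,c_t)+8\log(6/\delta)$ directly. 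That multiplicative form merges cleanly inside the square root and avoids the $\lambda^{-1/2}$ factor your Azuma deviation carries (via $|\xi_t|\le 2\bar B_T=2\lambda^{-1/2}\beta_T$), which is why you had to flag the $\lambda$-dependence in your last paragraph. Your route is more elementary and self-contained; theirs yields the exact stated constants without the caveat about fixing $\lambda\asymp 1$.
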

Note that the cumulative expected regret is a random quantity, as the expectation is taken only with respect to the contexts.
We can combine Lemma~\ref{lem:B_bar} with Theorem~\ref{thm:cum_regret} to derive the regret rate for the cumulative expected regret.
\begin{cor}[General WDRBO Regret Order]\label{cor:wdrbo_regret}
    Let $0< \delta < 1$ be a failure probability and let Assumption~\ref{ass:Lip} hold. Then, with probability $1-2\delta$, the cumulative expected regret is of the order of
    \[ R_T = \tilde{O}\left(\sqrt{T}\gamma_T+ \sqrt{\gamma_T}\sum_{t=1}^{T} \varepsilon_t\right) \ .
\]
For the Squared Exponential kernel, this reduces to
    \[ R_T = \tilde{O}\left(\sqrt{T} + \sum_{t=1}^{T} \varepsilon_t\right) \ ,
\]
where $\tilde{O}$ omits logarithmic terms. 
\end{cor}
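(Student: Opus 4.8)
Here is the plan. The starting point is the high-probability bound of Theorem~\ref{thm:cum_regret},
\[
R_T \le 4\beta_T\sqrt{T\gamma_T+4\log(6/\delta)} + \sum_{t=1}^{T}\varepsilon_t\,2L^{\UCB}(\xstar_t),
\]
and the goal is simply to replace $\beta_T$ and $L^{\UCB}(\xstar_t)$ by explicit functions of $\gamma_T$, $T$ and $\delta$ and then collect leading-order terms. First I would control $\beta_T$: by the definition of the maximum information gain, $\log\det(I+\lambda^{-1}K_{T-1})\le\gamma_T$, so from~\eqref{eq:beta_definition} we get $\beta_T\le R\sqrt{2\log(1/\delta)+\gamma_T}+\lambda^{1/2}B$, which is $\tilde{O}(\sqrt{\gamma_T})$ since $R$, $B$, $\lambda$ and $\delta$ are fixed; moreover $\beta_t\le\beta_T$ for all $t\le T$ because $\gamma_t$ is nondecreasing in $t$. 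For the first term, using $\sqrt{a+b}\le\sqrt{a}+\sqrt{b}$ gives $4\beta_T\sqrt{T\gamma_T+4\log(6/\delta)}\le 4\beta_T\sqrt{T\gamma_T}+8\beta_T\sqrt{\log(6/\delta)}$; the first piece is $\tilde{O}(\sqrt{\gamma_T}\cdot\sqrt{T\gamma_T})=\tilde{O}(\sqrt{T}\,\gamma_T)$ and the second is $\tilde{O}(\sqrt{\gamma_T})$, hence dominated.

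Next I would handle the distributional term via Lemma~\ref{lem:B_bar}: on the same event, $L^{\UCB}(\xstar_t)\le 2\bar{B}_tL$, and $\bar{B}_t\le\bar{B}_T\le\lambda^{-1/2}\bigl(R\sqrt{2\log(1/\delta)}+R\sqrt{2\gamma_T}+B\bigr)=\tilde{O}(\sqrt{\gamma_T})$ uniformly over $t\le T$, again by monotonicity of $\gamma_t$. Therefore $\sum_{t=1}^{T}\varepsilon_t\,2L^{\UCB}(\xstar_t)\le 4L\,\bar{B}_T\sum_{t=1}^{T}\varepsilon_t=\tilde{O}\bigl(\sqrt{\gamma_T}\sum_{t=1}^{T}\varepsilon_t\bigr)$. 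A union bound over the event of Theorem~\ref{thm:cum_regret} and the event of Lemma~\ref{lem:B_bar} keeps the total failure probability at $2\delta$ — in fact both already rely on the same RKHS concentration inequality of Lemma~\ref{lem:rkhs_UCB_LCB}. Adding the two contributions gives $R_T=\tilde{O}\bigl(\sqrt{T}\,\gamma_T+\sqrt{\gamma_T}\sum_{t=1}^{T}\varepsilon_t\bigr)$, the claimed general rate.

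Finally, for the Squared Exponential kernel I would invoke the standard bound $\gamma_T=O\bigl((\log T)^{d+1}\bigr)$, i.e. $\gamma_T=\tilde{O}(1)$; substituting it collapses $\sqrt{T}\,\gamma_T$ to $\tilde{O}(\sqrt{T})$ and $\sqrt{\gamma_T}\sum_t\varepsilon_t$ to $\tilde{O}(\sum_t\varepsilon_t)$, yielding $R_T=\tilde{O}\bigl(\sqrt{T}+\sum_{t=1}^{T}\varepsilon_t\bigr)$. I do not expect a genuine obstacle here: this is asymptotic bookkeeping rather than a new argument. The only points needing care are (i) using the monotonicity of $\gamma_t$ — hence of $\beta_t$ and $\bar{B}_t$ — so the time-$T$ quantities can be pulled out of the sum over $t$, (ii) tracking which $\log(1/\delta)$ and kernel-dependent logarithmic factors get absorbed into $\tilde{O}(\cdot)$, and (iii) citing the correct information-gain bound for the SE kernel.
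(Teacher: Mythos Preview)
Your proposal is correct and follows essentially the same route as the paper: start from Theorem~\ref{thm:cum_regret}, bound $\beta_T=\tilde O(\sqrt{\gamma_T})$ via the definition of the maximum information gain, bound $L^{\UCB}(\xstar_t)\le 2\bar B_t L=\tilde O(\sqrt{\gamma_T})$ via Lemma~\ref{lem:B_bar}, and then specialize to the Squared Exponential kernel using $\gamma_T=O(\log^{d+1}T)$. Your observation that the event in Lemma~\ref{lem:B_bar} is the same self-normalized concentration event already accounted for in Theorem~\ref{thm:cum_regret} (so no extra union bound is needed and the probability stays at $1-2\delta$) is correct and is left implicit in the paper.
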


The second term depends on the sum of all radii $\sum_{t=1}^T \varepsilon_t$. Hence, a sufficient condition in order to get sublinear regret guarantees, is that the radii converge to $0$ sufficiently fast. If, e.g., $\varepsilon_t = O(t^{-\frac{1}{2}})$, we obtain $R_T = \tilde{O}\left( \sqrt{T}\gamma_T \right)$. This can also occur in certain situations like the data-driven setting that we analyze next.

\subsection{Data-Driven WDRBO}
In the \textbf{Data-Driven WDRBO} we still rely on Algorithm~\ref{alg:BO_algo}, but differently from the general setting we need to build the Wasserstein ambiguity set using the past context observations. With the assumption that the unknown context distribution is time-invariant, i.e. $\Pstar_t:=\Pstar$, $t\in 1,\dots,T$, we build the ambiguity set center $\Phat_t$ as the empirical distribution of the past observed contexts, i.e.
$$\Phat_t = \frac{1}{t} \sum_{i=1}^{t} \mathcal{I}_{\{c=c_i\}}\ ,$$
where $\mathcal{I}_{\{c=c^i\}}$ is the indicator function centered on the context sample $c^i$, and we derive a bound on the sequence of radii $\varepsilon_t$ using finite-sample concentration results.

Using finite-sample results for the convergence of empirical measures in Wasserstein distance~\cite{fournier2015rate, fournier2022convergence} it is possible to bound the size of $\varepsilon_t$ such that with high probability the true context distribution $\Pstar$ is contained in the ambiguity set $\BPhat$. Unfortunately, this approach suffers from the so-called curse of dimensionality with respect to the dimension $d_c$ of the context. To circumvent this issue we propose a novel result that leverages recent finite-sample concentration results from~\cite{gao2023finite}. Instead of focusing on the rate of convergence of the empirical distribution $\Phat_t$ to the true unknown $\Pstar$, we focus on the rate at which the worst-case expected cost concentrates around the expected cost under the true context distribution.

To extend the result of~\cite{gao2023finite} to the WDRBO setting we require an additional covering argument over all possible UCB functions, which depend on all the possible sampling histories of the algorithm. Define the class of functions 
\begin{align*}
\mathcal{U}(m,b,s)=&\left\{h: h(z) = \mu(z) + \beta \sigma(z)\ ,\right. \\ 
&\left. \|\mu\|_\Hk\leq m,\ \beta\leq b,\ \sigma(z) \in \bs{\sigma}_s\right\}\ ,
\end{align*}
where 
$$\bs{\sigma}_s = \left\{\sigma: 
\sigma(z) = \|\mathcal{M} k(\cdot, z)\|_\Hk,\ \| \mathcal{M} \|_{op}\leq s^{-\frac{1}{2}} \right\}\ .$$

Let $\mathcal{N}_{\infty}(\rho,\mathcal{U}(m,b,s))$ be its covering number under the infinity norm, up to precision $\rho$. Let $\diamX$, $\diamC$ denote the diameters of the sets $\mathcal{X},\mathcal{C}$ respectively.

\begin{lem}\label{cor:data_driven_UCB_main}
Let $0<\delta<1$ be a failure probability, and let $\UCB(x,c) = \mu_{t-1}(x,c) + \beta_t \sigma_{t-1}(x,c) \in \mathcal{U}(\bar{B}_t,\bar{B}_t,\lambda)$. Then, with probability at least $1-2\delta$, for any $x\in\mathcal{X}$ we have:
\begin{align*}
    &\left| \mathbb{E}_{c \sim \Pstar}[\UCB(x, c)] - \mathbb{E}_{c \sim \Phat_t}[\UCB(x, c)] \right| \leq \\
    &\qquad \qquad \qquad \leq \varepsilon_t L^\UCB(x) + \rho_t,
\end{align*}
where 
\begin{equation}
    \varepsilon_t\!=\!\sqrt{2\diamC^2\frac{\log 1/\delta\!+\! d_x\log(1+2t\diamX)+\log \mathcal{N}_{\infty}(t^{-1},\mathcal{U}(\bar{B}_t,\bar{B}_t,\lambda))}{t}},
\end{equation}
and \[\rho_t = (1+2L\bar{B}_t)t^{-1},\]
with $\bar{B}_t$ as defined in Lemma~\ref{lem:B_bar}.
\end{lem}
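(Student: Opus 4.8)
The plan is to reduce Lemma~\ref{cor:data_driven_UCB_main} to the finite-sample concentration result of~\cite{gao2023finite} applied to a \emph{fixed} function, and then pay for the data-dependence of $\UCB$ by a union bound over a covering of the function class $\mathcal{U}(\bar B_t,\bar B_t,\lambda)$. First I would recall the core guarantee behind the result of~\cite{gao2023finite}: for a fixed $L^h_c$-Lipschitz-in-$c$ function $h$, bounded by a constant controlled by $\bar B_t$ and $\diamC$, the worst-case expected cost $\inf_{\mathcal{Q}\in\mathcal{B}^{\varepsilon}(\Phat_t)}\mathbb{E}_{c\sim\mathcal{Q}}[h(x,c)]$ lower bounds $\mathbb{E}_{c\sim\Pstar}[h(x,c)]$ with probability $1-\delta$ provided $\varepsilon$ is at least a term of order $\diamC\sqrt{\log(1/\delta)/t}$ (plus a lower-order $\rho_t$ correction absorbing the bias from the Lipschitz/discretization argument). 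Combined with Lemma~\ref{cor:wdro}, which gives the matching upper bound $\mathbb{E}_{c\sim\mathcal{Q}}[h]-\mathbb{E}_{c\sim\Phat_t}[h]\le \varepsilon L^h_c$ for \emph{every} $\mathcal{Q}$ in the ball, this yields the two-sided inequality $|\mathbb{E}_{\Pstar}[h(x,\cdot)]-\mathbb{E}_{\Phat_t}[h(x,\cdot)]|\le \varepsilon L^h_c(x)+\rho$ for that fixed $h$.

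The obstacle is that the relevant $h=\UCB$ is not fixed: both $\mu_{t-1}$ and $\sigma_{t-1}$ depend on the realized sample path $\mathcal{D}_{t-1}$, which in turn depends on the same contexts $c_1,\dots,c_{t-1}$ that define $\Phat_t$, so one cannot directly invoke the fixed-function bound. To handle this I would set up a covering argument over $\mathcal{U}(\bar B_t,\bar B_t,\lambda)$: by Lemma~\ref{lem:B_bar}, on a $1-\delta$ event $\|\mu_{t-1}\|_\Hk\le\bar B_t$, and by construction $\sigma_{t-1}\in\bs\sigma_\lambda$ and $\beta_t\le\bar B_t$, so $\UCB\in\mathcal{U}(\bar B_t,\bar B_t,\lambda)$. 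Take a minimal $t^{-1}$-cover of this class in $\|\cdot\|_\infty$ of size $\mathcal{N}_\infty(t^{-1},\mathcal{U}(\bar B_t,\bar B_t,\lambda))$; apply the fixed-function concentration bound simultaneously to each cover element with failure probability $\delta/\mathcal{N}_\infty$, so that $\log\mathcal{N}_\infty$ enters additively inside the square root defining $\varepsilon_t$. One also needs a discretization over $x\in\mathcal{X}$: cover $\mathcal{X}$ at scale $t^{-1}$ with roughly $(1+2t\diamX)^{d_x}$ points, which is where the $d_x\log(1+2t\diamX)$ term comes from; the Lipschitz-in-$x$ control of $\UCB$ and of the expectations (again via Lemma~\ref{lem:B_bar}, giving constant $2\bar B_t L$) lets one transfer the bound from the net to all $x$ at the cost of an $O(L\bar B_t/t)$ term, folded into $\rho_t$.

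Then I would assemble the pieces: on the intersection of the Lemma~\ref{lem:B_bar} event and the union-bound event over the cover (total failure probability $2\delta$ after allocating $\delta$ to each), for the given realized $\UCB$ pick the nearest cover element $\tilde h$ and nearest net point $\tilde x$; write $\mathbb{E}_{\Pstar}[\UCB(x,\cdot)]-\mathbb{E}_{\Phat_t}[\UCB(x,\cdot)]$ as a telescoping sum of (a) the difference at $(\tilde h,\tilde x)$, controlled by the fixed-function two-sided bound with radius $\varepsilon_t$, (b) the $\|\cdot\|_\infty$-approximation error $\le 2t^{-1}$ from swapping $\UCB$ for $\tilde h$ under both $\Pstar$ and $\Phat_t$, and (c) the $x$-to-$\tilde x$ error $\le 2\cdot 2\bar B_t L\cdot t^{-1}$. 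Collecting (b) and (c) gives exactly $\rho_t=(1+2L\bar B_t)t^{-1}$ up to absorbing constants, and the union-bound cost gives the stated $\varepsilon_t$. The main technical care — and the step I expect to be delicate — is ensuring $\UCB$ genuinely lies in $\mathcal{U}(\bar B_t,\bar B_t,\lambda)$ with the claimed parameters (in particular that $\sigma_{t-1}(z)=\|\mathcal{M}k(\cdot,z)\|_\Hk$ for some operator with $\|\mathcal{M}\|_{op}\le\lambda^{-1/2}$, which follows from~\eqref{eq:sigma} and the representation $\sigma_{t-1}^2(z)=\lambda^{-1}\langle k(\cdot,z),(I+\lambda^{-1}\Phi_{t-1}^*\Phi_{t-1})^{-1}k(\cdot,z)\rangle_\Hk$), and that the Lipschitz-in-$c$ constant $L^\UCB(x)$ used inside the fixed-function bound is the \emph{same} quantity that appears on the right-hand side, so that Lemma~\ref{cor:wdro} and the $\gao$-type bound compose cleanly rather than producing two different Lipschitz constants.
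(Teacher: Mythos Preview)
Your proposal is correct and follows essentially the same route as the paper. The paper organizes the argument into three pieces: first a uniform one-sided bound over the whole class $\mathcal{U}(m,b,s)\times\mathcal{X}$ (their adaptation of Gao's Corollary~2, using exactly the covering-plus-union-bound step you describe, with $\mathcal{R}_{\Pstar}(\varepsilon,h)\le\varepsilon L^h_c$ playing the role of your Lemma~\ref{cor:wdro} step), then symmetrization to $-h$ for the two-sided inequality, and finally specialization to $\rho=t^{-1}$ together with the $1-\delta$ event from Lemma~\ref{lem:B_bar} ensuring $\UCB\in\mathcal{U}(\bar B_t,\bar B_t,\lambda)$; your telescoping decomposition and probability budget match this exactly, and your closing remarks about verifying $\sigma_{t-1}(z)=\|(\lambda I+V_{t-1})^{-1/2}k(\cdot,z)\|_{\Hk}$ with operator norm $\le\lambda^{-1/2}$ are precisely what the paper checks.
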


We can now use the bound of Lemma~\ref{cor:data_driven_UCB_main} to derive the data-driven analogous of Theorem~\ref{thm:inst_regret} and Theorem~\ref{thm:cum_regret}. Note that since the term $\rho_t$ is not dependent on the decision variable at timestep $t$, the robustified acquisition function in Algorithm~\ref{alg:BO_algo} remains unchanged.

\begin{thm}[Data-driven instantaneous expected regret]\label{thm:inst_regret_data}
    Let Assumption~\ref{ass:Lip} hold. Fix a failure probability $0<\delta<1$.
    With probability at least $1-2\delta$, for all $t\ge 1$ the instantaneous expected regret for the data-driven setting can be bounded by
    \begin{equation}
        r_t \leq \EPstar \left[ 2 \beta_t \sigma_{t-1}(\xt, c) \right] + 2 \varepsilon_t L^{\UCB}(\xstar_t) + 2\rho_t
    \end{equation}
    where $\varepsilon_t$ and $\rho_t$ are as defined in Lemma~\ref{cor:data_driven_UCB_main}.
\end{thm}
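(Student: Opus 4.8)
The plan is to replay the chain of inequalities behind Theorem~\ref{thm:inst_regret}, but with the deterministic transfer bound of Lemma~\ref{cor:wdro} replaced by the data-driven transfer bound of Lemma~\ref{cor:data_driven_UCB_main}, and with its extra additive slack $\rho_t$ carried through every application. All steps are performed on the intersection of the confidence event of Lemma~\ref{lem:rkhs_UCB_LCB} and the event of Lemma~\ref{cor:data_driven_UCB_main}; with the failure probabilities allocated as in the appendix, this intersection holds with probability at least $1-2\delta$. Throughout, write $g_t(x):=\EPhatt[\UCB(x,c)]-\varepsilon_t L^{\UCB}(x)$ for the value of the data-driven acquisition function, and recall that, as noted before the theorem, $\rho_t$ does not depend on the decision variable, so $\xt=\arg\max_{x\in\mathcal{X}} g_t(x)$.

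The argument is a four-link chain. (i) By $f\le\UCB$ (Lemma~\ref{lem:rkhs_UCB_LCB}), $\EPstar[\fxtstar]\le\EPstar[\UCB(\xstar_t,c)]$. (ii) Apply Lemma~\ref{cor:data_driven_UCB_main} at $x=\xstar_t$ to move from $\Pstar$ to $\Phat_t$, paying $\varepsilon_t L^{\UCB}(\xstar_t)+\rho_t$, and then substitute $\EPhatt[\UCB(\xstar_t,c)]=g_t(\xstar_t)+\varepsilon_t L^{\UCB}(\xstar_t)$ to obtain $\EPstar[\fxtstar]\le g_t(\xstar_t)+2\varepsilon_t L^{\UCB}(\xstar_t)+\rho_t$. (iii) Use optimality, $g_t(\xstar_t)\le g_t(\xt)$, then apply Lemma~\ref{cor:data_driven_UCB_main} in the reverse direction at $\xt$: here the $\varepsilon_t L^{\UCB}(\xt)$ inside $g_t(\xt)$ cancels the one produced by the transfer bound, leaving only a second $\rho_t$, so $\EPstar[\fxtstar]\le\EPstar[\UCB(\xt,c)]+2\varepsilon_t L^{\UCB}(\xstar_t)+2\rho_t$. (iv) Lower-bound $\EPstar[\fxt]$ using the other half of Lemma~\ref{lem:rkhs_UCB_LCB}, $f(\xt,c)\ge\mu_{t-1}(\xt,c)-\beta_t\sigma_{t-1}(\xt,c)$, so that $\UCB(\xt,c)-f(\xt,c)\le 2\beta_t\sigma_{t-1}(\xt,c)$ pointwise in $c$; subtracting $\EPstar[\fxt]$ from both sides of the bound in (iii) and using $r_t=\EPstar[\fxtstar]-\EPstar[\fxt]$ gives the claim.

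The only genuine subtlety is that Lemma~\ref{cor:data_driven_UCB_main} is invoked at the data-dependent point $\xt$ and with the data-dependent estimates $\mu_{t-1},\sigma_{t-1},\beta_t$; this is exactly what the uniform-over-$x$ form of that lemma (obtained there through the covering argument over $\mathcal{U}(\bar{B}_t,\bar{B}_t,\lambda)$ together with the randomness of the empirical center) is built to handle. Once its hypothesis $\UCB\in\mathcal{U}(\bar{B}_t,\bar{B}_t,\lambda)$ is verified via Lemma~\ref{lem:B_bar}, the remaining steps are purely mechanical and mirror the proof of Theorem~\ref{thm:inst_regret}, so no further obstacle arises.
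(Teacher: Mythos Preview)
Your proposal is correct and follows essentially the same approach as the paper: the paper's proof simply states that it ``follows the same structure as Theorem~\ref{thm:inst_regret}, but using Corollary~\ref{cor:data_driven_UCB} instead of Lemma~\ref{cor:wdro},'' and your four-link chain is precisely that replay, with the additional $\rho_t$ picked up at each of the two transfer steps. Your remark that the uniform-in-$x$ and uniform-over-$\mathcal{U}$ nature of Lemma~\ref{cor:data_driven_UCB_main} is what licenses its application at the data-dependent $\xt$ is a useful clarification the paper leaves implicit.
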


\begin{thm}[Data-driven cumulative expected regret]\label{thm:cum_regret_data}
    Let Assumption~\ref{ass:Lip} hold and let $L^{\UCB}(x)$ be a Lipschitz constant with respect to the context $c$ for $\UCB(x,c)$. Fix a failure probability $0<\delta<1$.
    With probability at least $1-3\delta$, the cumulative expected regret for the data-driven setting can be bounded as:
    \begin{equation}
        R_T \leq 4 \beta_T \sqrt{T(\gamma_T + 4 \log(6/\delta))} + \sum_{t=1}^{T} \left(2\varepsilon_t L^{\UCB}(\xstar_t) + 2\rho_t\right)\ .
    \end{equation}
    where $\varepsilon_t$ and $\rho_t$ are as defined in Lemma~\ref{cor:data_driven_UCB_main}, and $\gamma_{T}$ is the maximum information gain at time $T$.
\end{thm}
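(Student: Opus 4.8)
The plan is to reduce the statement to the per-step bound of Theorem~\ref{thm:inst_regret_data} and then control the accumulated posterior-variance term exactly as in the proof of the general Theorem~\ref{thm:cum_regret}; the data-driven bound differs from the general one only by the deterministic additive correction $\sum_{t} 2\rho_t$. First I would work on the event $\mathcal{E}_1$ of probability at least $1-2\delta$ on which Theorem~\ref{thm:inst_regret_data} holds for every $t\ge 1$, so that
\[
R_T \;=\; \sum_{t=1}^T r_t \;\le\; \sum_{t=1}^T \EPstar\!\left[2\beta_t\sigma_{t-1}(\xt,c)\right] \;+\; \sum_{t=1}^T \left(2\varepsilon_t L^{\UCB}(\xstar_t) + 2\rho_t\right).
\]
The second sum is already the second group of terms in the claim, so it remains to bound the first sum by $4\beta_T\sqrt{T(\gamma_T+4\log(6/\delta))}$ on a further high-probability event.

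The only genuinely delicate step is to move the expectation from the true context $\Pstar$ to the \emph{observed} context $c_t$, which is needed because the information-gain machinery controls $\sigma_{t-1}$ only at the actual query points $z_t=(\xt,c_t)$. I would use the filtration with $\mathcal{F}_{t-1}$ generated by $\mathcal{D}_{t-1}$ and by $\xt$; then $\beta_t,\mu_{t-1},\sigma_{t-1}$ are $\mathcal{F}_{t-1}$-measurable, $\xt$ is $\mathcal{F}_{t-1}$-measurable since it is chosen from past data, and $c_t\sim\Pstar$ is drawn independently of $\mathcal{F}_{t-1}$, so $\mathbb{E}\big[\beta_t\sigma_{t-1}(\xt,c_t)\mid\mathcal{F}_{t-1}\big]=\EPstar[\beta_t\sigma_{t-1}(\xt,c)]$. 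Hence $D_t:=\EPstar[\beta_t\sigma_{t-1}(\xt,c)]-\beta_t\sigma_{t-1}(\xt,c_t)$ is a martingale-difference sequence, and since $k\le 1$ gives $\sigma_{t-1}(z)\le\lambda^{-1/2}$ and $\beta_t\le\beta_T$ we have $|D_t|\le 2\lambda^{-1/2}\beta_T$, so Azuma--Hoeffding yields, with probability at least $1-\delta$,
\[
\sum_{t=1}^T \EPstar\!\left[2\beta_t\sigma_{t-1}(\xt,c)\right] \;\le\; 2\sum_{t=1}^T \beta_t\sigma_{t-1}(\xt,c_t) \;+\; O\!\left(\lambda^{-1/2}\beta_T\sqrt{T\log(1/\delta)}\right).
\]
By Cauchy--Schwarz and monotonicity of $\beta_t$, $\sum_t\beta_t\sigma_{t-1}(z_t)\le\beta_T\sqrt{T\sum_t\sigma_{t-1}^2(z_t)}$, and the standard information-gain inequality $\sum_{t=1}^T\sigma_{t-1}^2(z_t)=O(\gamma_T)$ then gives $\sum_t\beta_t\sigma_{t-1}(z_t)=O(\beta_T\sqrt{T\gamma_T})$. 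Merging the $\sqrt{T\gamma_T}$ and $\sqrt{T\log(1/\delta)}$ pieces under a single radical via $\sqrt a+\sqrt b\le\sqrt{2(a+b)}$ and tracking the numerical constants (with the failure budget split so that the residual log-term appears as $4\log(6/\delta)$, as in Theorem~\ref{thm:cum_regret}) produces the term $4\beta_T\sqrt{T(\gamma_T+4\log(6/\delta))}$.

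Finally I would take a union bound: on $\mathcal{E}_1$ (probability $\ge 1-2\delta$) intersected with the Azuma event (probability $\ge 1-\delta$), all the inequalities above hold simultaneously, so the total failure probability is at most $3\delta$ and the claimed bound follows. I do not expect any obstacle beyond the measurability bookkeeping in the martingale step — in particular one must use that $\sigma_{t-1}$ depends on $c_1,\dots,c_{t-1}$ but not on $c_t$, so that the conditional-expectation identity is valid — since the remaining arguments are identical to those in the proof of Theorem~\ref{thm:cum_regret} with the deterministic term $\sum_t 2\rho_t$ carried through unchanged.
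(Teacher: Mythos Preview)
Your proposal is correct and follows the paper's high-level structure: invoke the per-step bound of Theorem~\ref{thm:inst_regret_data}, carry the extra $2\rho_t$ term through unchanged, and otherwise repeat the argument of Theorem~\ref{thm:cum_regret}. The paper's own proof says exactly this --- it defers entirely to the proof of Theorem~\ref{thm:cum_regret}.

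The one substantive difference is in how you pass from $\sum_t\EPstar[\beta_t\sigma_{t-1}(\xt,c)]$ to a quantity involving the realized $\sigma_{t-1}(\xt,c_t)$. You apply Azuma--Hoeffding directly to the martingale differences $D_t=\EPstar[\beta_t\sigma_{t-1}(\xt,c)]-\beta_t\sigma_{t-1}(\xt,c_t)$ and only afterwards invoke Cauchy--Schwarz and the information-gain bound. The paper instead applies Cauchy--Schwarz and Jensen first to reach $\sum_t\EPstar[\sigma_{t-1}^2(\xt,c)]$, and then uses the conditional-mean concentration lemma of \cite{kirschner2020distributionally} (their Lemma~7) on the \emph{squared} variance. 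Both routes are valid and yield the same order; the paper's ordering has the advantage that the concentration is applied to $\sigma_{t-1}^2\in[0,\lambda^{-1}]$, which is why the stated constant $4\beta_T\sqrt{T(\gamma_T+4\log(6/\delta))}$ falls out cleanly. Your Azuma step carries a bound $|D_t|\le\lambda^{-1/2}\beta_T$, so the additive $\sqrt{T\log(1/\delta)}$ term picks up an extra $\lambda^{-1/2}$ factor and the merging under a single radical does not reproduce the exact numerical constant in the theorem statement, only one of the same order.
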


For the Squared Exponential kernel, we have that $\gamma_t = O(\log^{d+1}(t))$ from~\cite{vakili2021information}, and $\log \mathcal{N}_{\infty}(t^{-1},\mathcal{U}(\bar{B}_t,\bar{B}_t,\lambda)) \leq \kappa \left(1+\log(t\bar{B}_t)\right)^{1+d} + \kappa \left(1+\log t\right)^{1+2d}$
from Lemma~\ref{app_lem:covering_square_exponential} which follows from~\cite{yang2020function}. With these, we can obtain the regret order for data-driven WDRBO Regret Order with Squared Exponential Kernel.

\begin{cor}[Data-driven WDRBO Regret Order for Squared Exponential Kernel]\label{cor:datadriven}
    Let $0< \delta < 1$ be a failure probability and let Assumption~\ref{ass:Lip} hold. For the Squared Exponential kernel, with probability $1-3\delta$, the cumulative expected regret in the data-driven setting is bounded by:
    \begin{equation}
        R_T = \tilde{O}\left(\sqrt{T}\right)
    \end{equation}
    where $\tilde{O}$ omits logarithmic terms.
\end{cor}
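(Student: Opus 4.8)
The plan is to start from the bound proved in Theorem~\ref{thm:cum_regret_data} and show that, once specialized to the Squared Exponential kernel, each of its three constituent terms is $\tilde O(\sqrt T)$. Recall that with probability at least $1-3\delta$,
\[
R_T \leq 4\beta_T\sqrt{T(\gamma_T + 4\log(6/\delta))} + \sum_{t=1}^{T}\bigl(2\varepsilon_t L^{\UCB}(\xstar_t) + 2\rho_t\bigr),
\]
with $\varepsilon_t,\rho_t$ as in Lemma~\ref{cor:data_driven_UCB_main}. The whole argument is then just bookkeeping of polylogarithmic factors together with the elementary partial-sum estimates $\sum_{t\le T} t^{-1/2} \le 2\sqrt T$ and $\sum_{t\le T} t^{-1} \le 1 + \log T$.

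First I would handle the GP-UCB term. Using $\log\det(I+\lambda^{-1}K_{T-1}) \le \gamma_T$ in the definition~\eqref{eq:beta_definition} gives $\beta_T = O(\sqrt{\gamma_T + \log(1/\delta)})$, and for the Squared Exponential kernel $\gamma_T = O(\log^{d+1} T)$ by~\cite{vakili2021information}, so $\beta_T$ is polylogarithmic in $T$. Hence $4\beta_T\sqrt{T(\gamma_T+4\log(6/\delta))} = \tilde O(\sqrt T)$.

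Next, the distributional-shift term $\sum_t 2\varepsilon_t L^{\UCB}(\xstar_t)$. By Lemma~\ref{lem:B_bar}, $L^{\UCB}(\xstar_t) \le 2\bar B_t L$ and $\bar B_t = O\bigl(\lambda^{-1/2}(R\sqrt{\log(1/\delta)} + R\sqrt{\gamma_t} + B)\bigr)$, which is again polylogarithmic in $t$ for the Squared Exponential kernel. For $\varepsilon_t$ I would substitute the covering-number estimate $\log\mathcal{N}_\infty(t^{-1},\mathcal{U}(\bar B_t,\bar B_t,\lambda)) \le \kappa(1+\log(t\bar B_t))^{1+d} + \kappa(1+\log t)^{1+2d}$ from Lemma~\ref{app_lem:covering_square_exponential}; since $\bar B_t$ is polylog, $\log(t\bar B_t) = O(\log t)$, so the entire numerator under the square root defining $\varepsilon_t$ in Lemma~\ref{cor:data_driven_UCB_main} is $O(\mathrm{polylog}(t))$ and therefore $\varepsilon_t = \tilde O(t^{-1/2})$. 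Combining the two polylog factors, $\sum_{t=1}^{T}\varepsilon_t L^{\UCB}(\xstar_t) = \tilde O\bigl(\sum_{t=1}^T t^{-1/2}\bigr) = \tilde O(\sqrt T)$.

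Finally, the discretization-error term: $\rho_t = (1 + 2L\bar B_t)t^{-1}$ is $\tilde O(t^{-1})$, so $\sum_{t=1}^{T} 2\rho_t = \tilde O(\log T)$, which is absorbed into the logarithmic factors hidden by $\tilde O$. Adding the three contributions gives $R_T = \tilde O(\sqrt T)$ on the event of probability at least $1-3\delta$ inherited from Theorem~\ref{thm:cum_regret_data}. I expect the only genuinely delicate point to be verifying that the mild (polylogarithmic) growth of $\bar B_t$ does not escape the $\tilde O$ once it is fed through both the covering number $\mathcal N_\infty(t^{-1},\mathcal{U}(\bar B_t,\bar B_t,\lambda))$ and the radius $\varepsilon_t$; everything else reduces to the standard estimates on $\sum_{t\le T} t^{-1/2}$ and $\sum_{t\le T} t^{-1}$.
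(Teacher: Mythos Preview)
Your proposal is correct and follows essentially the same route as the paper: start from Theorem~\ref{thm:cum_regret_data}, use $\gamma_T = O(\log^{d+1}T)$ and Lemma~\ref{lem:B_bar} to make $\beta_T,\bar B_t, L^{\UCB}$ polylogarithmic, invoke the covering-number bound of Lemma~\ref{app_lem:covering_square_exponential} to get $\varepsilon_t=\tilde O(t^{-1/2})$, and then sum via $\sum t^{-1/2}\le 2\sqrt T$ and $\sum t^{-1}\le 1+\log T$. The paper's proof is organized identically, and your remark about the polylog growth of $\bar B_t$ propagating harmlessly through $\mathcal N_\infty$ is exactly the point it checks.
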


The proposed analysis is not limited to the squared exponential kernel and following the same procedure, using the bounds on the covering number $\mathcal{N}_\infty$ derived in~\cite{yang2020function}, it is possible to derive similar bounds for other commonly used kernels experiencing either exponential or polynomial eigendecay.

The rate derived in Corollary~\ref{cor:datadriven} shows that a sublinear regret is achievable when the dependency on the covering number and the maximum information gain are well behaved. These are linked to the smoothness of the kernel.
It is not yet clear whether this is a fundamental limitation or if it is an artifact of our proving technique. We will leave this for future work.

Note that with the proposed data-driven WDRBO, we have a principled way to choose the sequence of radii $\varepsilon_t$ that provides a probabilistic guarantee on the maximum distance between the expectation under the true context generating distribution $\Pstar$ and the expectation under the empirical distribution $\Phat_t$. This makes Algorithm~\ref{alg:BO_algo} a practical tool to handle continuous context distributions, in contrast with~\cite{kirschner2020distributionally},~\cite{husain2024distributionally},~\cite{tay2022efficient}, where it is assumed that the true distribution is supported on a finite number of contexts. The proposed approach differs also from~\cite{huang2024stochastic} where their DRO-KDE algorithm robustifies against the gap between the approximate context distribution obtained by KDE from the observed contexts and the empirical distribution obtained by sampling it.

\section{EXPERIMENTS}
\label{sec:experiments}
In this section, we analyze the performance of the proposed algorithm and compare it with the algorithms in the literature. We will start with a simple example that showcases the effect of the robust acquisition function~\eqref{eq:WUCB} in the general setting. We will then provide an extensive comparison of the algorithms in the data-driven setting, as we consider it the most relevant and more challenging in practice. 

To highlight the need for robustness against context distribution shifts, we consider the \textbf{general DRBO} setting with fixed context distributions $\Phat_t=\mathcal{N}(0.5,0.1)$ for all $t = 1,\dots,100$ and $\Pstar_t=\mathcal{N}(0.6,0.2)$ for all $t = 1,\dots,100$, and the unknown function
$$ f(x,c) =  1 - \frac{|c-0.5|}{|x|+0.2} - \sqrt{|x|+0.05} \ .$$
A plot of the function and its optima under $\Phat_t$ and $\Pstar_t$ is shown in Fig.~\ref{fig:maxima}. We compare the performance of the proposed WDRBO as in Algorithm~\ref{alg:BO_algo} with $\varepsilon_t = 0.1$ for all $t = 1,\dots,100$, and ERBO (Empirical Risk BO), the non-robust variant of WDRBO that assumes $\varepsilon_t = 0$ for all $t = 1,\dots,100$.  

In Fig.~\ref{fig:regret_general} we show that the robust WDRBO results in a lower cumulative regret than ERBO. This is because ERBO solves the stochastic optimization problem assuming that the context is distributed according to the ambiguity set center $\Phat_t$, while WDRBO optimizes for the worst-case distribution in the ambiguity set of radius $0.1$.
In this simple setting, since the radius $\varepsilon_t$ remains constant over time, following the result of Theorem~\ref{thm:cum_regret}, the cumulative expected regret shows a linear trend.

\begin{figure}
\begin{subfigure}{.705\columnwidth}
    \centering
    \includegraphics[width=1\textwidth]{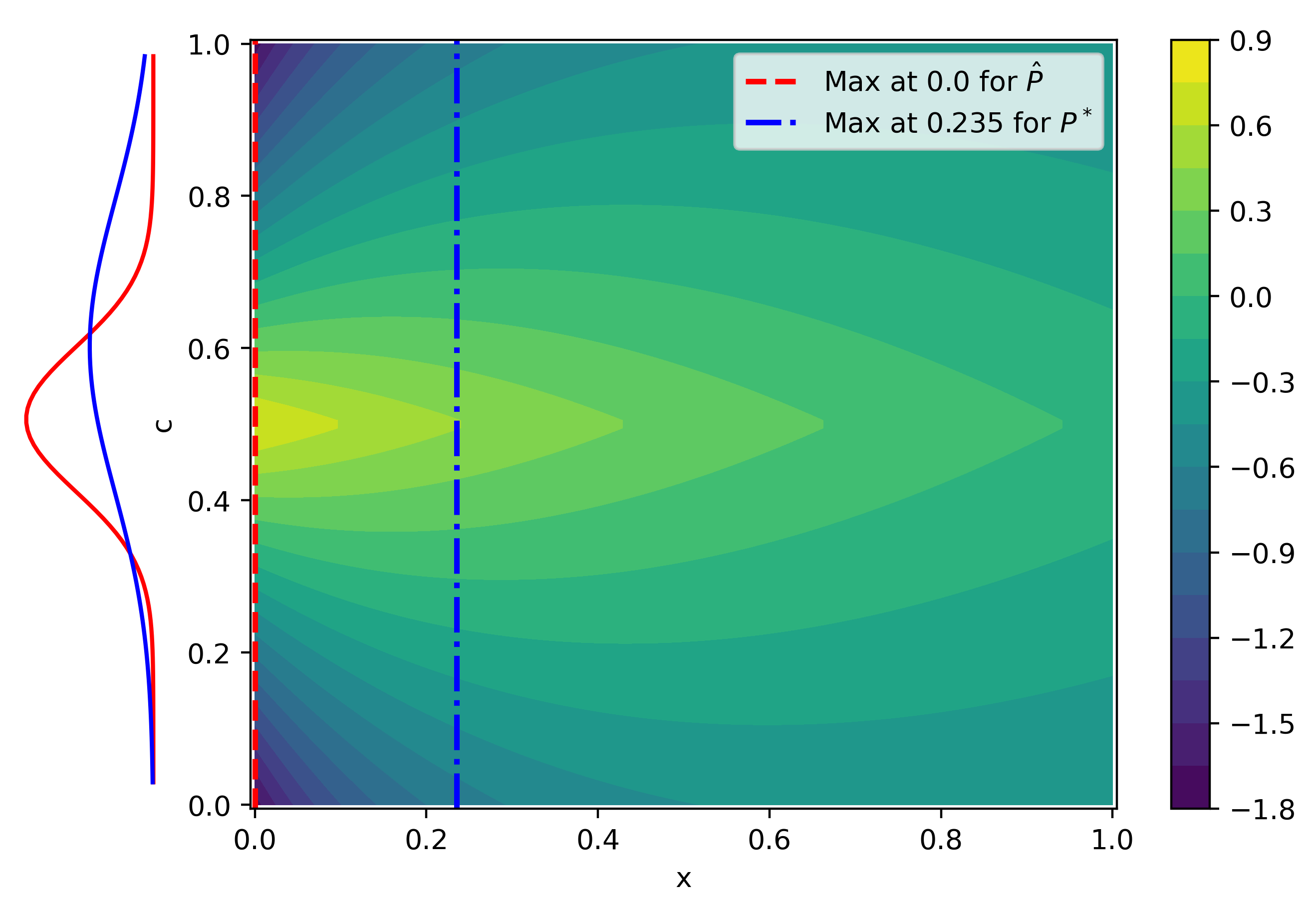}
    \caption{}\label{fig:maxima}
\end{subfigure}%
\begin{subfigure}{.286\columnwidth}
    \centering
    \includegraphics[width=1\textwidth]{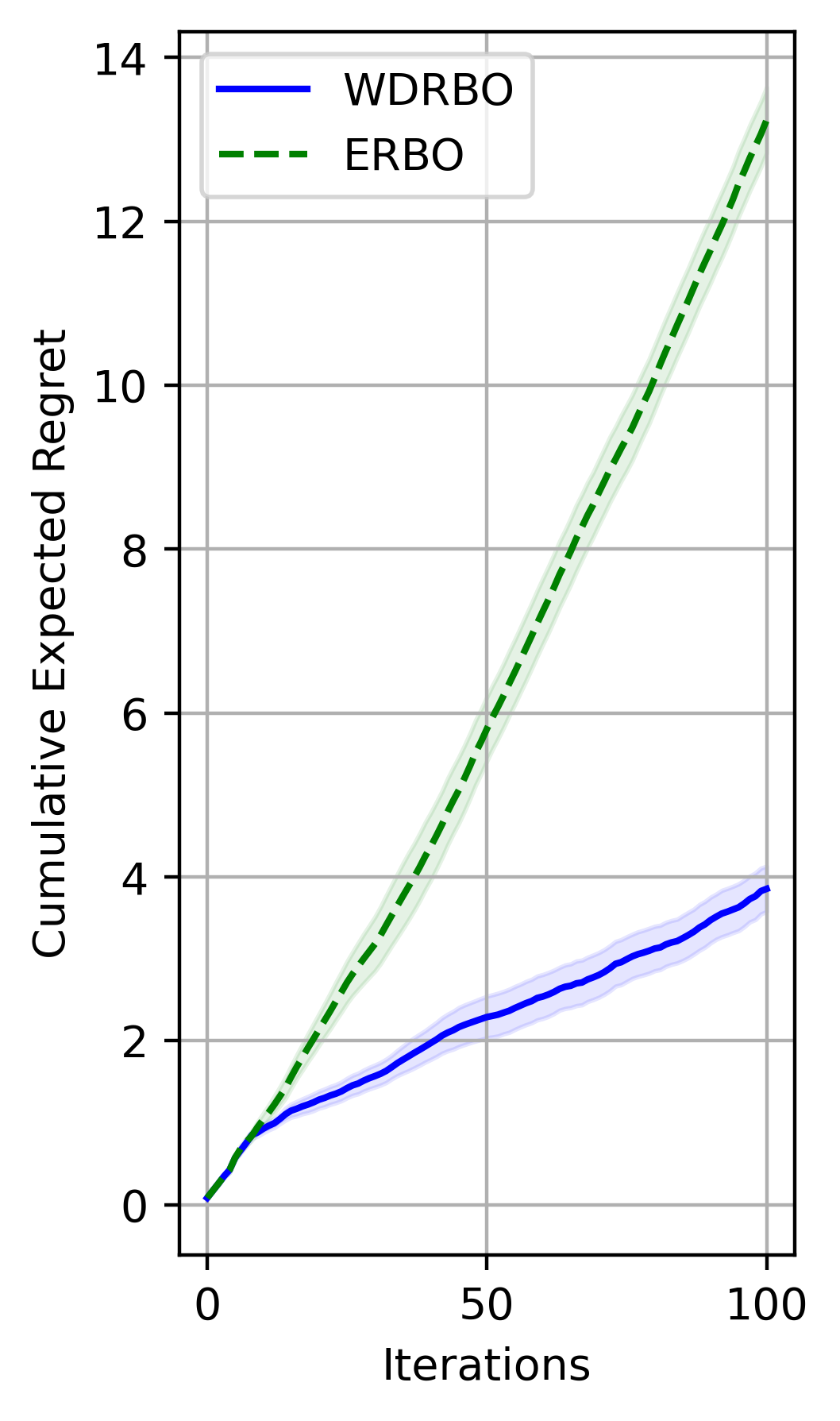}
    \caption{}\label{fig:regret_general}
\end{subfigure}
\caption{(a) Function $f(x,c)$ and its optima under optima under $\Phat_t$ and $\Pstar_t$. (b) Mean and standard error of the cumulative expected regret.}
\end{figure}

For the \textbf{data-driven DRBO} setting we adopt the setup of~\cite{huang2024stochastic} and provide a comparison of the different methods on synthetic function and the realistic problems.\footnote{The code is available at the following link~\href{https://github.com/frmicheli/WDRBO}{https://github.com/frmicheli/WDRBO} .} We will compare the algorithms' performance based on the cumulative expected regret as in~\eqref{eq:ECRR}.
\color{black}
\begin{figure*}[t]
  \includegraphics[width=\textwidth]{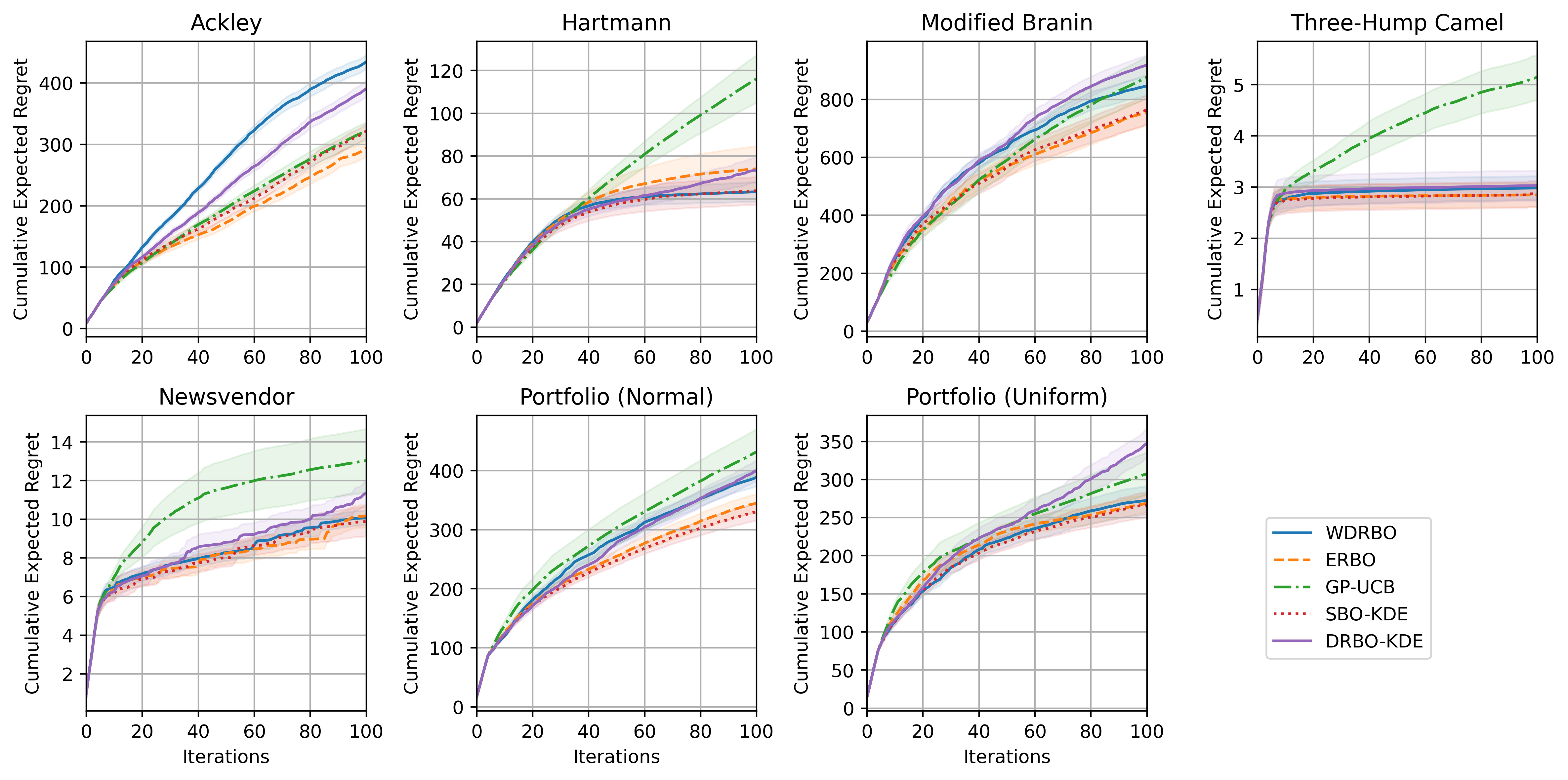}
  \caption{Mean and standard error of the cumulative expected regret.}
  \label{fig:1}
\end{figure*}

We will compare compare the following algorithms:
\textbf{WDRBO}: Data-Driven WDRBO algorithm with robustified acquisition function~\ref{eq:WUCB}, where the center of the Wasserstein ambiguity set is given by the empirical distribution of the observed contexts and the radius is chosen as $\varepsilon_t = O(1/\sqrt{t})$.\\
\textbf{ERBO}: This is equivalent to WDRBO but we set $\varepsilon_t = 0$ in the acquisition function~\ref{eq:WUCB}, i.e. we maximize the empirical risk with respect to the observed contexts $\xt = \arg \max_{x\in\mathcal{X}} \EPhatt \left[ \UCB(x,c) \right]\ $.\\
\textbf{GP-UCB}: Implements the UCB maximization algorithm proposed by~\cite{srinivas2012information} ignoring the context variable in both the definition of the Gaussian process model and in the acquisition function maximization. \\
\textbf{SBO-KDE}: Stochastic BO formulation of~\cite{huang2024stochastic}. An approximate context distribution is estimated from the observed samples by kernel density estimation. The acquisition function maximizes the expectation of the UCB with respect to the empirical distribution of the context obtained by sampling the approximate context distribution (sample average approximation).\\
\textbf{DRBO-KDE}: DR formulation of SBO-KDE proposed by~\cite{huang2024stochastic}. Robustifies the SBO-KDE algorithm by considering DR formulation with a total variation ambiguity set. The ambiguity set is centered on the empirical distribution of the context obtained by sampling from the density estimate.\\
\textbf{DRBO-MMD}: DRBO formulation with MMD ambiguity set of~\cite{kirschner2020distributionally}. The continuous context space is discretized and the UCB is maximized for the worst-case distribution supported on the discrete context space for a given MMD budget. The complexity of the robustified acquisition function scales with the cube of the cardinality of the context support. \\
\textbf{DRBO-MMD Minmax}: Minmax approximate formulation of DRBO-MMD proposed in~\cite{tay2022efficient}. The discretization can be finer as the worst-case sensitivity approximation reduces the computational burden of the method.\\
\textbf{StableOpt}: Implementation of StableOpt~\cite{bogunovic2018adversarially}. Implements a robust acquisition function $x_t = \arg \max_{x\in\mathbb{X}} \min_{c\in C_t} \UCB(x,c)$, where, following~\cite{huang2024stochastic}, the set $C_t$ is chosen at each time $t$ as the set where for each dimension of the context we consider the interval $[\hat{\mu}_t^{c^i} - \hat{\sigma}_t^{c^i},\hat{\mu}_t^{c^i} + \hat{\sigma}_t^{c^i} ]$, with $\hat{\mu}_t^{c^i}$ and $\hat{\sigma}_t^{c^i}$ the empirical mean and variance of the observed contexts.\\

For all the algorithms considered we fixed the value of the UCB trade-off parameter $\beta_t=1.5$. This has been done to be consistent with the engineering practice and earlier works such as~\cite{huang2024stochastic}. 
We consider a set of artificial and real-world problems and different types of context distributions. For each problem and algorithm, we ran $100$ iterations and repeated over $15$ random seeds. Fig.~\ref{fig:1} shows the resulting cumulative expected regret for WDRBO, ERBO, GP-UCB, SBO-KDE, and DRBO-KDE. More details about the specifics of the test problems and the implementations are left in the Appendix. We also leave the results for DRBO-MMD, DRBO-MMD Minmax, and StableOpt to the Appendix as their performance was not competitive with the other methods. The performance of DRBO-MMD is limited by the coarseness of the context discretization which is required to have a computationally tractable inner convex optimization step. The performance of DRBO-MMD Minmax is mainly limited by the worst-case sensitivity that introduces a linear term in the resulting regret bound. StableOpt suffers from the fact that it is solving a robust optimization problem.

We tracked the time required by each algorithm for a 100-iteration-long experiment. We report in Table~\ref{table:1} the computational times in seconds for the Ackley and Branin functions. The computational times are affected both by algorithm specific characteristics, e.g. an inner convex optimization problem is solved at each iteration, and by specific parameters choices, e.g. the discretization grid-size. The reported times have been obtained by running the algorithm on CPU only, as some of the algorithms have not been implemented to exploit the potential speed-ups resulting from running on GPU. For this test we used an Intel(R) Core(TM) i9-9900K@3.60GHz. 
GP-UCB has the smallest computational time as it ignores the context, thus also reducing the regression step complexity. ERBO and SBO-KDE are not robust approaches, the extra time required by SBO-KDE is due to the KDE step. One of the advantage of the proposed WDRBO is that is is able to add robustness against context distribution uncertainty without the large overheads of the other robust methods. The extra computational time required by WDRBO compared to ERBO is related to the calculation of the Lipschitz constant from the UCB expression. The main computational bottleneck for DRBO-MMD and DRBO-KDE is related to the solution of the inner minimization problems.

\begin{table}[ht]
\centering
\begin{tabular}{|l|c|c|}
\hline
$ $ &\textbf{Ackley} & \textbf{Branin} \\ \hline
WDRBO (ours) & $44.3\pm 2.2$ & $54.5 \pm 2.6$ \\ \hline
ERBO (ours) & $43.8\pm 1.6$& $45.2 \pm 2.5$ \\ \hline
GP-UCB & $15.7\pm 1.4$ & $15.1 \pm 1.0$\\ \hline
SBO-KDE & $46.7\pm 0.7$ & $49.7\pm 1.5$ \\ \hline
DRBO-KDE & $599.7 \pm 33.0$ & $525.0 \pm 71.7$ \\ \hline
\end{tabular}
\caption{Mean and standard error of computational times in seconds for the Ackley ($d_x = 1$, $d_c = 1$) and Branin ($d_x = 2$, $d_c = 2$) functions.}\label{table:1}
\end{table}

We can see in Fig.~\ref{fig:1} that the performance of WDRBO, ERBO, and SBO-KDE is extremely compelling, particularly when considering the computational complexity of the other algorithms. We argue that the performance of DRBO-KDE does not justify the extra computation required to solve the inner two-dimensional optimization problem.
While we observe very strong performances for ERBO and SBO-KDE in the data-driven setting, with the smallest computational complexities, we want to highlight that, contrarily to WDRBO and DRBO-KDE, they do not compute a robust solution. This might lead to disappointing performance as shown in the first example where the ambiguity set does not collapse to the true distribution as the number of iterations grows, and for which a robust solution might be preferable. 

\section{CONCLUSIONS AND FUTURE WORK}
\label{sec:conclusions}
In this paper, we introduced Wasserstein Distributionally Robust Bayesian Optimization (WDRBO), a novel algorithm that addresses the challenge of sequential data-driven decision-making under context distributional uncertainty. We developed a computationally tractable algorithm for WDRBO that can handle continuous context distributions, leveraging an approximate reformulation based on Lipschitz bounds of the acquisition function. 
This approach extends the existing literature on Distributionally Robust Bayesian Optimization by providing a principled method to handle continuous context distributions within a Wasserstein ambiguity set, allowing for a flexible and intuitive way to model uncertainty in the context distribution while maintaining computational feasibility.

Our theoretical analysis provides an cumulative expected regret bounds that match state-of-the-art results. Notably, for the data-driven setting, the bound does not require assumptions on the rate of decay of the ambiguity set radius but relies on finite-sample concentration results, making our approach more broadly applicable to real-world situations.
Lastly, we conducted a comprehensive empirical evaluation demonstrating the effectiveness and practical applicability of WDRBO on both synthetic and real-world benchmarks. Our results show that the proposed WDRBO algorithm exhibits promising performance in terms of regret while avoiding computationally expensive inner optimization steps.

The promising results of WDRBO open up exciting opportunities for further research and development in the field of Distributionally Robust Bayesian optimization. Extending the WDRBO framework to risk measures, such as Conditional Value at Risk (CVaR), could broaden its applicability to risk-sensitive domains, such as robotics and finance.

\bibliography{main}

\section*{Checklist}
 \begin{enumerate}

 \item For all models and algorithms presented, check if you include:
 \begin{enumerate}
   \item A clear description of the mathematical setting, assumptions, algorithm, and/or model. [Yes]
   \item An analysis of the properties and complexity (time, space, sample size) of any algorithm. [Yes]
   \item (Optional) Anonymized source code, with specification of all dependencies, including external libraries. [Yes]
 \end{enumerate}

 \item For any theoretical claim, check if you include:
 \begin{enumerate}
   \item Statements of the full set of assumptions of all theoretical results. [Yes]
   \item Complete proofs of all theoretical results. [Yes]
   \item Clear explanations of any assumptions. [Yes]     
 \end{enumerate}

 \item For all figures and tables that present empirical results, check if you include:
 \begin{enumerate}
   \item The code, data, and instructions needed to reproduce the main experimental results (either in the supplemental material or as a URL). [Yes]
   \item All the training details (e.g., data splits, hyperparameters, how they were chosen). [Yes]
         \item A clear definition of the specific measure or statistics and error bars (e.g., with respect to the random seed after running experiments multiple times). [Yes]
         \item A description of the computing infrastructure used. (e.g., type of GPUs, internal cluster, or cloud provider). [Yes]
 \end{enumerate}

 \item If you are using existing assets (e.g., code, data, models) or curating/releasing new assets, check if you include:
 \begin{enumerate}
   \item Citations of the creator If your work uses existing assets. [Yes]
   \item The license information of the assets, if applicable. [Yes]
   \item New assets either in the supplemental material or as a URL, if applicable. [Yes]
   \item Information about consent from data providers/curators. [Not Applicable]
   \item Discussion of sensible content if applicable, e.g., personally identifiable information or offensive content. [Not Applicable]
 \end{enumerate}

 \item If you used crowdsourcing or conducted research with human subjects, check if you include:
 \begin{enumerate}
   \item The full text of instructions given to participants and screenshots. [Not Applicable]
   \item Descriptions of potential participant risks, with links to Institutional Review Board (IRB) approvals if applicable. [Not Applicable]
   \item The estimated hourly wage paid to participants and the total amount spent on participant compensation. [Not Applicable]
 \end{enumerate}

 \end{enumerate}

 \newpage
\newpage
\onecolumn
\aistatstitle{Wasserstein Distributionally Robust Bayesian Optimization with~Continuous~Context:
Supplementary~Materials}

\section{BACKGROUND ON RKHS AND KERNEL RIDGE REGRESSION}
\label{sec:rkhs}
In this paper, we consider the frequentist perspective and formulate the surrogate model as the solution of a regularized least-squares regression problem in the Reproducing Kernel Hilbert Space (RKHS). A similar formulation can be derived following the Bayesian perspective of Gaussian Process Regression under suitable assumptions on the Gaussian Process prior and observation noise~\cite{kanagawa2018gaussian}.

Consider an RKHS $(\Hk, \langle \cdot, \cdot \rangle_\Hk)$ with reproducing kernel $k: \mathcal{Z} \times \mathcal{Z} \rightarrow \mathbb{R}$. Define the inner product of the RKHS as $f^\top g = \langle f, g \rangle_\Hk$ and the outer product as $fg^\top = f\langle g, \cdot \rangle_\Hk$. Let $\Phi_t := (k(\cdot, z_1), \ldots, k(\cdot, z_t))^\top$ be the feature map of the RKHS for a sequence of points $z_1, \ldots, z_t \in \mathcal{Z}$. Define the kernel matrix $K_t = \Phi_t\Phi_t^\top$ and the covariance operator $V_t = \Phi_t^\top\Phi_t$. The RKHS norm of a function $f \in \Hk$ is defined as $\|f\|_\Hk = \sqrt{f^\top f} = \sqrt{\langle f, f \rangle_\Hk}$. By the reproducing property of the kernel, we have that $f(z) = \langle f, k(\cdot, z) \rangle_\Hk$ for all $z \in \mathcal{Z}$.

With a slight abuse of notation we write the following equality which will be useful in the upcoming derivations
\begin{equation}\label{eq:Vinv_to_sigma}
    \lambda (\Phi_t^\top\Phi_t + \lambda I)^{-1} = I - \Phi_t^\top(\Phi_t\Phi_t^\top + \lambda I)^{-1}\Phi_t\ ,
\end{equation}
where it should be clear from the context that $I$ is either the identity matrix or the identity operator in the RKHS. We also define the short-hand notation $\bar{V}_t = V_t + \lambda I = \Phi_t^\top\Phi_t + \lambda I$.

Given the observed data $\mathcal{D}_t = \{(z_i, y_i)\}_{i=1}^t$, the regularized least-squares regression problem in RKHS is defined as follows:
\begin{equation}
    \min_{\mu \in \Hk} \sum_{i=1}^t (y_i - \mu(z_i))^2 + \lambda \|\mu\|_\Hk^2.
\end{equation}
The solution to this problem is given by:
\begin{equation}
    \begin{aligned}
        \mu_t &= \bar{V}_t^{-1} \Phi_t^\top y_{1:t} \\
        &= (V_t + \lambda I)^{-1} \Phi_t^\top y_{1:t} \\
        &= \Phi_t^\top(\Phi_t\Phi_t^\top + \lambda I)^{-1} y_{1:t}
    \end{aligned}
\end{equation}
where $y_{1:t} = (y_1, \ldots, y_t)^\top$ is the vector of observed responses.
By the representation theorem, we can compute $\mu_t$ at some new point $z \in \mathcal{Z}$ as follows:
\begin{equation}
    \begin{aligned}
        \mu_t(z) &= \langle \mu_t, k(\cdot, z) \rangle_\Hk \\
        &= \Phi_t^\top(\Phi_t\Phi_t^\top + \lambda I)^{-1} y_{1:t} \\
        &= k_t(z)^\top (K_t + \lambda I)^{-1} y_{1:t},
    \end{aligned}
\end{equation}
where $k_t(z) = (k(z, z_1), \ldots, k(z, z_t))^\top$ is the vector of kernel evaluations at $z$.

We can also compute 
\begin{equation}
    \begin{aligned}
        \sigma_t^2(z) &:= \|k(\cdot, z)\|_{\bar{V}_t^{-1}}^2 \\
        &= \frac{1}{\lambda} \left( k(z, z) - k_t(z)^\top(K_t + \lambda I)^{-1}k_t(z) \right),
    \end{aligned}
\end{equation}
where $\|k(\cdot, z)\|_{\bar{V}_t^{-1}}^2 = k_t(z)^\top(\Phi_t^\top\Phi_t + \lambda I)^{-1}k_t(z)$, and we use \eqref{eq:Vinv_to_sigma} to get the final equation.

We are using the notation $\mu_t(z)$ and $\sigma_t^2(z)$ to align with the Gaussian Process Regression literature~\cite{kanagawa2018gaussian}, where $\mu_t(z)$ and $\sigma_t^2(z)$ would represent the mean and variance of the Gaussian Process posterior at $z$ respectively.

\subsection{Kernels that satisfy the Lipschitz condition in Assumption 1}
Assumption~\ref{ass:Lip} is satisfied for commonly used kernels. For example, it is satisfied with $L=1$ for the squared exponential kernel and the Matérn kernel for $\nu=3/2$~\citep[Proposition 2]{van2022kernel}. 
In fact, all smooth, positive definite, stationary kernels that have zero derivatives at zero satisfy Assumption~\ref{ass:Lip}. This, in turn, implies that Assumption~\ref{ass:Lip} is satisfied for Mat\'ern kernels with $\nu=p+1/2$, for $p=1,2,\dots$.

\begin{lem}\label{app_lem:kernels_that_satisfy_ass_1} 
Let $k$ be a positive definite, stationary kernel such that $k(x,x')=r(\|x-x'\|)$, for some function $r:\mathbb{R} \to \mathbb{R}$ that is continuously twice differentiable in a neighborhood of the origin with first derivative $r^{(1)}(0)=0$. 
Then, the kernel-induced distance 
\[ d(x,x'):=\sqrt{k(x,x)+k(x',x')-k(x,x')-k(x',x)}\le M \|x-x'\|, \] 
for some constant $M>0$.
\end{lem}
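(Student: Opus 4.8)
The plan is to reduce the statement about vectors $x,x'\in\mathbb{R}^d$ to a one–dimensional statement about the radial profile $r$, and then combine a local Taylor bound near the origin — where the hypothesis $r^{(1)}(0)=0$ is exactly what is needed — with a crude global bound that follows from positive definiteness. First I would use stationarity: writing $t:=\|x-x'\|$, we have $k(x,x)=k(x',x')=r(0)$ and $k(x,x')=k(x',x)=r(t)$, so that the kernel-induced distance satisfies
\[
d(x,x')^2 \;=\; k(x,x)+k(x',x')-k(x,x')-k(x',x) \;=\; 2\bigl(r(0)-r(t)\bigr).
\]
Thus it suffices to find $M>0$, independent of $x,x'$, with $r(0)-r(t)\le \tfrac12 M^2 t^2$ for all $t\ge 0$.

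Before estimating, I would record two elementary facts. Since $d(x,x')^2=\|k(\cdot,x)-k(\cdot,x')\|_{\mathcal{H}_k}^2\ge 0$, we have $r(0)-r(t)\ge 0$; and by the Cauchy–Schwarz inequality for positive definite kernels, $|r(t)|=|k(x,x')|\le\sqrt{k(x,x)k(x',x')}=r(0)$. Hence $0\le r(0)-r(t)\le 2r(0)$ for every $t\ge 0$, which already handles the regime where $t$ is bounded away from zero.

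For the regime near the origin, by hypothesis there is an $\varepsilon>0$ such that $r$ is twice continuously differentiable on $[0,\varepsilon]$, so that $C:=\sup_{s\in[0,\varepsilon]}|r^{(2)}(s)|<\infty$ by continuity on a compact interval. Taylor's theorem with Lagrange remainder, together with $r^{(1)}(0)=0$, gives for each $t\in[0,\varepsilon]$ some $\xi\in[0,t]$ with $r(t)=r(0)+\tfrac12 r^{(2)}(\xi)\,t^2$, hence $r(0)-r(t)\le\tfrac12 C\,t^2$. This is the only place where the assumption $r^{(1)}(0)=0$ enters, and it is essential: if $r^{(1)}(0)\neq0$ then $r(0)-r(t)$ would be of order $t$ near the origin and $d(x,x')$ of order $\sqrt{t}$, which is not Lipschitz in $\|x-x'\|$. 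Note also that this is the place where one uses that it is the \emph{radial profile} $r$, rather than $k$ as a function on $\mathbb{R}^d\times\mathbb{R}^d$, that is being differentiated.

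Finally, for $t>\varepsilon$ the bound $r(0)-r(t)\le 2r(0)\le \tfrac{2r(0)}{\varepsilon^2}\,t^2$ closes the gap. Taking $M:=\max\bigl\{\sqrt{C},\,2\sqrt{r(0)}/\varepsilon\bigr\}$ then yields $r(0)-r(t)\le\tfrac12 M^2 t^2$ on all of $[0,\infty)$ — so compactness of $\mathcal{X}\times\mathcal{C}$ is not even required — and therefore $d(x,x')\le M\|x-x'\|$, which is precisely Assumption~\ref{ass:Lip} with $L=M$. The argument is essentially routine; the only step requiring a little care is stitching the local and global regimes together with a single constant, which the two-sided argument above accomplishes.
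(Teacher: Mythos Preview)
Your proof is correct and follows essentially the same approach as the paper's: reduce to the radial profile via $d(x,x')^2=2(r(0)-r(t))$, use Taylor's theorem with $r^{(1)}(0)=0$ for $t$ near zero, use the positive-definiteness bound $|r(t)|\le r(0)$ for $t$ bounded away from zero, and stitch the two regimes with $M=\max\{M_1,M_2\}$. The only cosmetic difference is that the paper bounds $d$ directly in each case while you bound $r(0)-r(t)$ and take a square root at the end; your use of $C=\sup_{[0,\varepsilon]}|r^{(2)}|$ is in fact slightly cleaner than the paper's $\max_{[0,\varepsilon]}\sqrt{-r^{(2)}}$, which tacitly assumes $r^{(2)}\le 0$ everywhere on $[0,\varepsilon]$ rather than only at the Taylor point.
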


\begin{proof} 
Replacing $k$ with $r$ in the expression of $d(x,x')$ we write 
\[ d(x,x')=\sqrt{2r(0)-2r(\|x-x'\|)}. \] 

Pick any $\epsilon>0$. Let's consider two cases:

Case 1: For $\|x-x'\| \geq \epsilon$, by the positive definite property, we have $|r(\|x-x'\|)| \leq r(0)$ for any $x,x'$. Therefore:
\[ d(x,x') \leq 2\sqrt{r(0)} \leq 2\sqrt{r(0)}\, \|x-x'\|/\epsilon = M_1 \|x-x'\|, \]
where $M_1 = 2\sqrt{r(0)}/\epsilon$.

Case 2: For $\|x-x'\| \leq \epsilon$, using the Taylor remainder formula and the fact that $r^{(1)}(0)=0$:
\[ r(\|x-x'\|) = r(0) + \frac{\|x-x'\|^2}{2} r^{(2)}(s), \]
for some $s \in [0, \|x-x'\|]$.
Since $r(\|x-x'\|)\le r(0)$, we have $\|x-x'\|^2 r^{(2)}(s)\le 0$, which, in turn, implies that $r^{(2)}(s)\le 0$. As a result,
\[ d(x,x') = \sqrt{-\|x-x'\|^2 r^{(2)}(s)} = \|x-x'\| \sqrt{-r^{(2)}(s)} \leq \|x-x'\| \max_{s\in[0,\epsilon]}\sqrt{-r^{(2)}(s)} = M_2 \|x-x'\|. \]

Since the function has a continuous second derivative in the interval $[0,\epsilon]$ and $r^{(2)}(s) \leq 0$ for all $s \in [0,\epsilon]$, the maximum $M_2 = \max_{s\in[0,\epsilon]}\sqrt{-r^{(2)}(s)}$ is well-defined and finite.

The result follows by taking
\[ M = \max\{M_1, M_2\} = \max\left\{\frac{2\sqrt{r(0)}}{\epsilon}, \max_{s\in[0,\epsilon]}\sqrt{-r^{(2)}(s)}\right\} \]
\end{proof}

\section{MAIN PROOFS}\label{app_sec:main_proofs}

We can state here a well-known result from the Wasserstein DR optimization literature~\cite{kuhn2019wasserstein,gao2024wasserstein}.
\begin{lem}
    \label{lem:wdro_appendix}
    Consider a function $g: {\Xi} \rightarrow \mathbb{R}$ that is $L^g$-Lipschitz, i.e. $|g(\xi) - g(\xi')| \leq L^g \|\xi - \xi'\|$, for all $\xi, \xi' \in {\Xi}$. Let $\mathcal{B}^{\varepsilon}(\hat{\mathcal{P}})$ be a Wasserstein ambiguity set defined as a ball of radius $\varepsilon$ in the Wasserstein distance centered at the distribution $\hat{\mathcal{P}}$. Then, 
    \begin{equation}
        \sup_{\mathcal{Q} \in \mathcal{B}^{\varepsilon}(\hat{\mathcal{P}})} \mathbb{E}_{\xi \sim \mathcal{Q}} [g(\xi)] \leq \mathbb{E}_{\xi \sim \hat{\mathcal{P}}} [g(\xi)] + \varepsilon L^g\ ,
    \end{equation}
    Similarly, 
    \begin{equation}
        \inf_{\mathcal{Q} \in \mathcal{B}^{\varepsilon}(\hat{\mathcal{P}})} \mathbb{E}_{\xi \sim \mathcal{Q}} [g(\xi)] \geq \mathbb{E}_{\xi \sim \hat{\mathcal{P}}} [g(\xi)] - \varepsilon L^g\ .
    \end{equation}
\end{lem}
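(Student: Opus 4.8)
The plan is to use the primal, coupling-based definition of the type-1 Wasserstein distance together with the Lipschitz property of $g$; an even shorter route goes through Kantorovich--Rubinstein duality, but I describe the coupling argument first since it needs no extra regularity assumptions on $\Xi$ or $g$.

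First, fix any $\mathcal{Q} \in \mathcal{B}^{\varepsilon}(\hat{\mathcal{P}})$, so that $d_W(\mathcal{Q}, \hat{\mathcal{P}}) \le \varepsilon$. Since $d_W$ is an infimum over transportation maps, for every $\delta > 0$ there exists a joint law $\Pi_\delta$ on $\Xi \times \Xi$ with marginals $\mathcal{Q}$ and $\hat{\mathcal{P}}$ such that $\int_{\Xi\times\Xi} \|\xi - \xi'\|\, \Pi_\delta(\mathrm{d}\xi, \mathrm{d}\xi') \le \varepsilon + \delta$. Because $\Pi_\delta$ has the prescribed marginals, the gap between the two expectations is a single integral against $\Pi_\delta$:
\[
\mathbb{E}_{\xi \sim \mathcal{Q}}[g(\xi)] - \mathbb{E}_{\xi \sim \hat{\mathcal{P}}}[g(\xi)]
= \int_{\Xi \times \Xi} \bigl(g(\xi) - g(\xi')\bigr)\, \Pi_\delta(\mathrm{d}\xi, \mathrm{d}\xi').
\]
Applying the Lipschitz bound $|g(\xi) - g(\xi')| \le L^g \|\xi - \xi'\|$ pointwise inside the integral,
\[
\mathbb{E}_{\xi \sim \mathcal{Q}}[g(\xi)] - \mathbb{E}_{\xi \sim \hat{\mathcal{P}}}[g(\xi)]
\le L^g \int_{\Xi \times \Xi} \|\xi - \xi'\|\, \Pi_\delta(\mathrm{d}\xi, \mathrm{d}\xi')
\le L^g(\varepsilon + \delta).
\]
Letting $\delta \downarrow 0$ and then taking the supremum over $\mathcal{Q} \in \mathcal{B}^{\varepsilon}(\hat{\mathcal{P}})$ yields the first claimed inequality. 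For the second, apply the first inequality to $-g$, which is again $L^g$-Lipschitz, and use $\inf_{\mathcal{Q}} \mathbb{E}_{\xi\sim\mathcal{Q}}[g(\xi)] = - \sup_{\mathcal{Q}} \mathbb{E}_{\xi\sim\mathcal{Q}}[-g(\xi)]$.

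The only technical subtlety — more a bookkeeping point than an obstacle — is integrability: the manipulations above require $g$ to be integrable under $\hat{\mathcal{P}}$ and under every $\mathcal{Q}$ in the ball. This holds because a Lipschitz $g$ has at most linear growth while membership in the Wasserstein ball forces a finite first moment; in the paper's application $\mathcal{C}$ is compact, so $g$ is simply bounded and the point is moot. One could alternatively invoke existence of an optimal coupling (Villani) to drop the $\delta$-slack entirely, or cite Kantorovich--Rubinstein, $d_W(\mathcal{Q},\hat{\mathcal{P}}) = \sup_{h:\,\mathrm{Lip}(h)\le 1}\{\mathbb{E}_{\xi\sim\mathcal{Q}}[h(\xi)] - \mathbb{E}_{\xi\sim\hat{\mathcal{P}}}[h(\xi)]\}$, and substitute $h = g/L^g$ to obtain the bound in a single line; I would keep the elementary coupling argument in the write-up.
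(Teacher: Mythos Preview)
Your coupling argument is correct and complete. The paper itself does not supply a proof of this lemma: it simply records the result as ``a well-known result from the Wasserstein DR optimization literature'' and cites \cite{kuhn2019wasserstein,gao2024wasserstein}. So there is nothing to compare against; your write-up would strictly add content relative to the paper, and either the coupling argument or the one-line Kantorovich--Rubinstein route you sketch would be appropriate.
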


As a consequence of Lemma~\ref{lem:wdro_appendix}, we can state the following result. 
\begin{lem}
    Let $f: \mathcal{X} \times \mathcal{C} \rightarrow \mathbb{R}$ be a function that is $L^f_c(x)$-Lipschitz in the context space, i.e. $|f(x, c) - f(x, c')| \leq L^f_c(x) \|c - c'\|$, for all $c, c' \in \mathcal{C}$. Let $\mathcal{B}^{\varepsilon}(\hat{\mathcal{P}})$ be a Wasserstein ambiguity set defined as a ball of radius $\varepsilon$ in the Wasserstein distance centered at the distribution $\hat{\mathcal{P}}$. Then, for any $x\in\mathcal{X}$ and for any distribution $\tilde{\mathcal{P}} \in \mathcal{B}^{\varepsilon}(\hat{\mathcal{P}})$, we have that
    \begin{equation}
        | \mathbb{E}_{c \sim \tilde{\mathcal{P}}}[f(x, c)] - \mathbb{E}_{c \sim \hat{\mathcal{P}}}[f(x, c)] | \leq \varepsilon L^f_c(x)\ .
    \end{equation}
\end{lem}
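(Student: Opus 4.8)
The plan is to derive this lemma directly from Lemma~\ref{lem:wdro_appendix}, which I may assume as already established. The key observation is that fixing $x \in \mathcal{X}$ turns the two-argument function $f(x, \cdot)$ into a single-argument function on the context space $\mathcal{C}$, and the hypothesis tells us precisely that this restricted function is $L^f_c(x)$-Lipschitz. So I would identify $\Xi = \mathcal{C}$, $\xi = c$, $g(\cdot) = f(x, \cdot)$, and $L^g = L^f_c(x)$, at which point Lemma~\ref{lem:wdro_appendix} applies verbatim.

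First I would fix an arbitrary $x \in \mathcal{X}$ and an arbitrary $\tilde{\mathcal{P}} \in \mathcal{B}^{\varepsilon}(\hat{\mathcal{P}})$. Since $\tilde{\mathcal{P}}$ lies in the ambiguity set, the supremum and infimum over $\mathcal{Q} \in \mathcal{B}^{\varepsilon}(\hat{\mathcal{P}})$ in Lemma~\ref{lem:wdro_appendix} dominate, respectively bound below, the value at $\tilde{\mathcal{P}}$. Concretely, applying the upper bound in Lemma~\ref{lem:wdro_appendix} with $g = f(x, \cdot)$ gives
\begin{equation*}
\mathbb{E}_{c \sim \tilde{\mathcal{P}}}[f(x, c)] \leq \sup_{\mathcal{Q} \in \mathcal{B}^{\varepsilon}(\hat{\mathcal{P}})} \mathbb{E}_{c \sim \mathcal{Q}}[f(x, c)] \leq \mathbb{E}_{c \sim \hat{\mathcal{P}}}[f(x, c)] + \varepsilon L^f_c(x)\ ,
\end{equation*}
and applying the lower bound gives
\begin{equation*}
\mathbb{E}_{c \sim \tilde{\mathcal{P}}}[f(x, c)] \geq \inf_{\mathcal{Q} \in \mathcal{B}^{\varepsilon}(\hat{\mathcal{P}})} \mathbb{E}_{c \sim \mathcal{Q}}[f(x, c)] \geq \mathbb{E}_{c \sim \hat{\mathcal{P}}}[f(x, c)] - \varepsilon L^f_c(x)\ .
\end{equation*}
Combining these two chains, the quantity $\mathbb{E}_{c \sim \tilde{\mathcal{P}}}[f(x, c)] - \mathbb{E}_{c \sim \hat{\mathcal{P}}}[f(x, c)]$ lies in the interval $[-\varepsilon L^f_c(x), \varepsilon L^f_c(x)]$, which is exactly the claimed absolute-value bound. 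Since $x$ and $\tilde{\mathcal{P}}$ were arbitrary, the conclusion holds for all $x \in \mathcal{X}$ and all $\tilde{\mathcal{P}} \in \mathcal{B}^{\varepsilon}(\hat{\mathcal{P}})$.

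There is no real obstacle here: the statement is a direct corollary, and the only thing to be careful about is the bookkeeping of which direction of the bound in Lemma~\ref{lem:wdro_appendix} produces the upper versus lower inequality, and noting that the Lipschitz constant $L^f_c(x)$ is allowed to depend on $x$ without affecting the argument because $x$ is held fixed throughout. One might also remark that this is the special case of Lemma~\ref{lem:wdro_appendix} obtained by restricting attention to a parametrized family $g = f(x, \cdot)$ and then taking the two-sided consequence, which is the form directly needed to bound the acquisition-function error in~\eqref{eq:WUCB}.
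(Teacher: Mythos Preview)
Your proposal is correct and matches the paper's approach exactly: the paper simply introduces this lemma with ``As a consequence of Lemma~\ref{lem:wdro_appendix}, we can state the following result'' and gives no further argument, so your derivation is precisely the intended (and only natural) one. The bookkeeping you spell out---fixing $x$, applying both directions of Lemma~\ref{lem:wdro_appendix} to $g=f(x,\cdot)$, and combining into the two-sided bound---is the full content of the corollary.
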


In order to provide rates for the cumulative expected regret we want to provide high probability bounds for the Lipschitz constants $L^{\UCB}(x)$.

\begin{lem}[Lemma~\ref{lem:B_bar} in the main text]\label{app_lem:B_bar}
    Let $0< \delta < 1$ be a failure probability and let 
    \begin{align}
    \bar{B}_t := \lambda^{-\frac{1}{2}}R \sqrt{2\log\left(\frac{\det(I + \lambda^{-1}K_{t-1})^\frac{1}{2}}{\delta}\right)} + B \leq \lambda^{-\frac{1}{2}} \left( \left( R \sqrt{2 \log \frac{1}{\delta} } + R \sqrt{2\gamma_t} \right) + B \right) .
    \end{align}
    
    Then, with probability $1-\delta$ for all $t\geq 1$ we have 
    $\|\mu_{t-1}\|_{\mathcal{H}_k} \leq \bar{B}_t.$
    Further, if Assumption~\ref{ass:Lip} holds (i.e., $\|k(\cdot,z)-k(\cdot,z')\|_{\mathcal{H}_k} \leq L \|z-z'\|$), we have:
    
    (i) With probability $1-\delta$, for any $z, z' \in \mathcal{X} \times \mathcal{C}$:
    $|\mu_{t-1}(z) - \mu_{t-1}(z')| \leq \bar{B}_t L \|z-z'\|.$
    
    (ii) For any $z, z' \in \mathcal{X} \times \mathcal{C}$:
    $|\beta_t\sigma_{t-1}(z) - \beta_t\sigma_{t-1}(z')| \leq \beta_t \lambda^{-\frac{1}{2}} L \|z-z'\| = \bar{B}_t L \|z-z'\|.$
    
    Therefore, with probability $1-\delta$, the UCB function is Lipschitz continuous with constant:
    $L^{\UCB} \leq 2\bar{B}_t L.$
\end{lem}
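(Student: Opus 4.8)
The plan is to establish the four assertions in order: (a) the displayed upper bound on $\bar B_t$ in terms of $\gamma_t$; (b) the RKHS-norm bound $\|\mu_{t-1}\|_{\Hk}\le\bar B_t$; (c) the Lipschitz estimates (i) and (ii); and (d) the Lipschitz constant of $\UCB$.

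For (a), I would use $\log\det(I+\lambda^{-1}K_{t-1})\le\gamma_t$ by the definition of the maximum information gain, so that $\log\bigl(\det(I+\lambda^{-1}K_{t-1})^{1/2}/\delta\bigr)=\tfrac12\log\det(I+\lambda^{-1}K_{t-1})+\log(1/\delta)\le\tfrac12\gamma_t+\log(1/\delta)$; then subadditivity of $\sqrt{\cdot}$ gives $\sqrt{2(\tfrac12\gamma_t+\log(1/\delta))}=\sqrt{\gamma_t+2\log(1/\delta)}\le\sqrt{\gamma_t}+\sqrt{2\log(1/\delta)}\le\sqrt{2\gamma_t}+\sqrt{2\log(1/\delta)}$, and multiplying by $\lambda^{-1/2}R$ and adding $B$ yields the stated bound.

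For (b), I would use the appendix notation $\bar V_t=\Phi_t^\top\Phi_t+\lambda I$ and write $y_{1:t-1}=\Phi_{t-1}f+\eta_{1:t-1}$ via the reproducing property, hence $\mu_{t-1}=\bar V_{t-1}^{-1}\Phi_{t-1}^\top\Phi_{t-1}f+\bar V_{t-1}^{-1}\Phi_{t-1}^\top\eta_{1:t-1}$. For the first term, $\bar V_{t-1}^{-1}\Phi_{t-1}^\top\Phi_{t-1}=I-\lambda\bar V_{t-1}^{-1}$ has operator norm $\le 1$ (since $0\preceq I-\lambda\bar V_{t-1}^{-1}\preceq I$), so its contribution is $\le\|f\|_{\Hk}\le B$. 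For the noise term I would bound $\|\bar V_{t-1}^{-1}\Phi_{t-1}^\top\eta_{1:t-1}\|_{\Hk}^2=\eta_{1:t-1}^\top\Phi_{t-1}\bar V_{t-1}^{-2}\Phi_{t-1}^\top\eta_{1:t-1}\le\lambda^{-1}\,\eta_{1:t-1}^\top\Phi_{t-1}\bar V_{t-1}^{-1}\Phi_{t-1}^\top\eta_{1:t-1}$, using the operator inequality $\bar V_{t-1}^{-2}\preceq\lambda^{-1}\bar V_{t-1}^{-1}$ (valid because $\bar V_{t-1}\succeq\lambda I\succ0$), and then invoke the self-normalized concentration inequality underlying Lemma~\ref{lem:rkhs_UCB_LCB} (from~\cite{abbasi2013online}), which holds uniformly over $t\ge1$ with probability $1-\delta$ and gives $\|\Phi_{t-1}^\top\eta_{1:t-1}\|_{\bar V_{t-1}^{-1}}\le R\sqrt{2\log\bigl(\det(I+\lambda^{-1}K_{t-1})^{1/2}/\delta\bigr)}$. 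The triangle inequality then yields $\|\mu_{t-1}\|_{\Hk}\le B+\lambda^{-1/2}R\sqrt{2\log\bigl(\det(I+\lambda^{-1}K_{t-1})^{1/2}/\delta\bigr)}=\bar B_t$.

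For (c) and (d), part (i) follows from the reproducing property and Cauchy--Schwarz: $|\mu_{t-1}(z)-\mu_{t-1}(z')|=|\langle\mu_{t-1},\,k(\cdot,z)-k(\cdot,z')\rangle_{\Hk}|\le\|\mu_{t-1}\|_{\Hk}\,\|k(\cdot,z)-k(\cdot,z')\|_{\Hk}\le\bar B_tL\|z-z'\|$ using (b) and Assumption~\ref{ass:Lip}. For part (ii) I would use the appendix identity $\sigma_{t-1}^2(z)=\langle k(\cdot,z),\bar V_{t-1}^{-1}k(\cdot,z)\rangle_{\Hk}=\|\bar V_{t-1}^{-1/2}k(\cdot,z)\|_{\Hk}^2$; writing $\mathcal M:=\bar V_{t-1}^{-1/2}$ (self-adjoint with $\|\mathcal M\|_{op}\le\lambda^{-1/2}$), the reverse triangle inequality gives $|\sigma_{t-1}(z)-\sigma_{t-1}(z')|\le\|\mathcal M\bigl(k(\cdot,z)-k(\cdot,z')\bigr)\|_{\Hk}\le\lambda^{-1/2}L\|z-z'\|$; multiplying by $\beta_t$ and observing $\beta_t\lambda^{-1/2}=\lambda^{-1/2}R\sqrt{2\log\bigl(\det(I+\lambda^{-1}K_{t-1})^{1/2}/\delta\bigr)}+B=\bar B_t$ gives the claimed form. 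Adding (i) and (ii) through the triangle inequality bounds $|\UCB(z)-\UCB(z')|\le 2\bar B_tL\|z-z'\|$, and restricting the variation to the context coordinate alone gives $L^{\UCB}(x)\le 2\bar B_tL$ for every $x$. The only non-routine step is the noise bound in (b): the trick is the operator inequality $\bar V_{t-1}^{-2}\preceq\lambda^{-1}\bar V_{t-1}^{-1}$, which recasts $\|\bar V_{t-1}^{-1}\Phi_{t-1}^\top\eta_{1:t-1}\|_{\Hk}$ as the self-normalized quantity controlled by Abbasi-Yadkori's bound at the cost of a $\lambda^{-1/2}$ factor; this same factor is what produces $\beta_t\lambda^{-1/2}=\bar B_t$ and hence the common Lipschitz constant in (i) and (ii). Everything else follows from $\bar V_{t-1}\succeq\lambda I$ and elementary inequalities.
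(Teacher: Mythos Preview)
Your proposal is correct and follows essentially the same approach as the paper's proof: the same decomposition $y_{1:t-1}=\Phi_{t-1}f+\eta_{1:t-1}$, the same operator bound on the signal term, the same self-normalized concentration (Corollary~3.6 of \cite{abbasi2013online}) for the noise term with the $\lambda^{-1/2}$ factor extracted via $\bar V_{t-1}^{-2}\preceq\lambda^{-1}\bar V_{t-1}^{-1}$ (the paper phrases this as factoring out $\|\bar V_{t-1}^{-1/2}\|_{op}$, which is equivalent), and the same Cauchy--Schwarz/reverse-triangle arguments for (i) and (ii). Your part (a) supplies the elementary $\gamma_t$ bound that the paper states without proof.
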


\begin{proof}
    The UCB is defined as:
    $$\UCB(z) = \mu_{t-1}(z) + \beta_t \sigma_{t-1}(z),$$
    where $z = (x,c) \in \mathcal{X} \times \mathcal{C}$.
    
    We have
    \begin{equation}
    \mu_t(z) = \langle (\lambda I + V_t)^{-1} \Phi_t^\top y_{1:t}, k(\cdot, z) \rangle,
    \end{equation}
    and
    \begin{align}
        \|\mu_t\|_{\mathcal{H}_k} &= \|(\lambda I + V_t)^{-1} \Phi_t^\top y_{1:t}\|_{\mathcal{H}_k} \\
        &= \|(\lambda I + V_t)^{-1} \Phi_t^\top (f(z_{1:t})+\eta_{1:t})\|_{\mathcal{H}_k} \\
        &= \|(\lambda I + V_t)^{-1} \Phi_t^\top (\Phi_t f + \eta_{1:t})\|_{\mathcal{H}_k} \\
        &\leq \|(\lambda I + V_t)^{-1} \Phi_t^\top \Phi_t f\|_{\mathcal{H}_k} + \|(\lambda I + V_t)^{-1} \Phi_t^\top \eta_{1:t}\|_{\mathcal{H}_k} \\
        &= \|(\lambda I + V_t)^{-1} V_t f\|_{\mathcal{H}_k} + \|(\lambda I + V_t)^{-1} \Phi_t^\top \eta_{1:t}\|_{\mathcal{H}_k} \\
        &\leq \|(\lambda I + V_t)^{-1} V_t\|_{\mathcal{H}_k} \|f\|_{\mathcal{H}_k} + \|(\lambda I + V_t)^{-\frac{1}{2}}\|_{\mathcal{H}_k} \|(\lambda I + V_t)^{-\frac{1}{2}} \Phi_t^\top \eta_{1:t}\|_{\mathcal{H}_k} \\
        &\leq B + \lambda^{-\frac{1}{2}} \|(\lambda I + V_t)^{-\frac{1}{2}} \Phi_t^\top \eta_{1:t}\|_{\mathcal{H}_k},
    \end{align}
    where we used the assumption that $\|f\|_{\mathcal{H}_k} \leq B$ and the fact that $\|(\lambda I + V_t)^{-1} V_t\|_{\mathcal{H}_k} \leq 1$ for $\lambda \geq 0$.

    Applying Corollary 3.6 of \cite{abbasi2013online}, with probability at least $1-\delta$ and for all $t \geq 1$, we have
    \begin{align}
        \|(\lambda I + V_t)^{-\frac{1}{2}} \Phi_t^\top \eta_{1:t}\|_{\mathcal{H}_k} \leq R \sqrt{2\log\left(\frac{\det(I + \lambda^{-1}V_{t})^\frac{1}{2}}{\delta}\right)} = R \sqrt{2\log\left(\frac{\det(I + \lambda^{-1}K_{t})^\frac{1}{2}}{\delta}\right)}.
    \end{align}
    
    Thus, we obtain
    \begin{align}\label{eq_RKHS_norm_mu}
        \|\mu_t\|_{\mathcal{H}_k} \leq \lambda^{-\frac{1}{2}} R \sqrt{2\log\left(\frac{\det(I + \lambda^{-1}K_{t})^\frac{1}{2}}{\delta}\right)} + B = \bar{B}_t.
    \end{align}
    
    If Assumption~\ref{ass:Lip} holds, for any $z, z' \in \mathcal{X} \times \mathcal{C}$, we have:
    \begin{align}
        |\mu_{t-1}(z) - \mu_{t-1}(z')| &= |\langle \mu_{t-1}, k(\cdot, z) - k(\cdot, z') \rangle_{\mathcal{H}_k}| \\
        &\leq \|\mu_{t-1}\|_{\mathcal{H}_k} \cdot \|k(\cdot, z) - k(\cdot, z')\|_{\mathcal{H}_k} \\
        &\leq \bar{B}_t \cdot L \|z - z'\| 
    \end{align}
    where we used the Cauchy-Schwarz inequality and Assumption~\ref{ass:Lip}.
    
    For the term $\beta_t \sigma_{t-1}(z)$, we need to analyze the Lipschitz property of $\sigma_{t-1}(z)$. We know by the Woodbury identity:
    \begin{equation}
        \sigma_{t-1}^2(z) = \langle k(\cdot, z), (\lambda I + V_{t-1})^{-1} k(\cdot, z) \rangle_{\mathcal{H}_k} = \frac{1}{\lambda} \left( k(z, z) - k_{t-1}(z)^\top(K_{t-1} + \lambda I)^{-1}k_{t-1}(z) \right)
    \end{equation}

    This gives us:
    \begin{equation}
        \sigma_{t-1}(z) = \|(\lambda I + V_{t-1})^{-1/2} k(\cdot, z)\|_{\mathcal{H}_k}
    \end{equation}

    Now, for the Lipschitz property, by the triangle inequality:
    \begin{align}
        |\sigma_{t-1}(z) - \sigma_{t-1}(z')| &\leq \|(\lambda I + V_{t-1})^{-1/2}(k(\cdot, z) - k(\cdot, z'))\|_{\mathcal{H}_k} \\
        &\leq \|(\lambda I + V_{t-1})^{-1/2}\|_{op} \cdot \|k(\cdot, z) - k(\cdot, z')\|_{\mathcal{H}_k} \\
        &\leq \lambda^{-\frac{1}{2}} \cdot L \|z - z'\|
    \end{align}
    where we used the fact that $\|(\lambda I + V_{t-1})^{-1/2}\|_{op} \leq \lambda^{-\frac{1}{2}}$ and Assumption~\ref{ass:Lip}.

    Therefore:
    \begin{equation}
        |\beta_t\sigma_{t-1}(z) - \beta_t\sigma_{t-1}(z')| \leq \beta_t \lambda^{-\frac{1}{2}} L \|z - z'\|
    \end{equation}
    
    Note that, recalling the definition of $\beta_t$:
    \begin{align}
        \beta_t := R \sqrt{2\log\left(\frac{\det(I + \lambda^{-1}K_{t-1})^\frac{1}{2}}{\delta}\right)} + \lambda^{\frac{1}{2}}B,
    \end{align}
    we can observe that $\beta_t \lambda^{-\frac{1}{2}} = \bar{B}_t$.
    
    Combining the results, with probability $1-\delta$, we have:
    \begin{align}
        |UCB(z) - UCB(z')| &= |\mu_{t-1}(z) + \beta_t\sigma_{t-1}(z) - \mu_{t-1}(z') - \beta_t\sigma_{t-1}(z')| \\
        &\leq |\mu_{t-1}(z) - \mu_{t-1}(z')| + |\beta_t\sigma_{t-1}(z) - \beta_t\sigma_{t-1}(z')| \\
        &\leq \bar{B}_t L \|z - z'\| + \beta_t \lambda^{-\frac{1}{2}} L \|z - z'\| \\
        &= (\bar{B}_t L + \bar{B}_t L) \|z - z'\| \\
        &= 2\bar{B}_t L \|z - z'\|
    \end{align}
    where we used the fact that $\beta_t \lambda^{-\frac{1}{2}} = \bar{B}_t$.

    Thus, with probability $1-\delta$, for all $x\in\mathcal{X}$ the Lipschitz constant of the UCB function with respect to the context is bounded by:
    \begin{equation}
        L^{\UCB}(x) \leq 2\bar{B}_t L
    \end{equation}
    which concludes the proof.
\end{proof}

\begin{thm}[Instantaneous expected regret - \textbf{Thm.~\ref{thm:inst_regret} in the main text}]\label{thm:inst_regret_appendix}
    Let Assumption~\ref{ass:Lip} hold. Fix a failure probability $0<\delta<1$.
    With probability at least $1-\delta$, for all $t\ge 1$ the instantaneous expected regret can be bounded by
    \begin{equation}
        r_t \leq \EPstart \left[ 2 \beta_t \sigma_{t-1}(\xt, c) \right] + 2 \varepsilon_t L^{\UCB}(\xstar_t)
    \end{equation} 
\end{thm}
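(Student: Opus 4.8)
The plan is to run the classical optimism argument, with one extra ingredient: because the regret is measured under the true context law $\Pstar_t$ but the acquisition function in~\eqref{eq:WUCB} is evaluated under the center $\Phat_t$, I will move between the two using the Wasserstein Lipschitz transfer of Lemma~\ref{cor:wdro}. Throughout, I condition on the $1-\delta$ event of Lemma~\ref{lem:rkhs_UCB_LCB}, on which $\LCB(z) \le f(z) \le \UCB(z)$ for every $z$ and every $t$, where I write $\LCB(z) := \mu_{t-1}(z) - \beta_t\sigma_{t-1}(z) = \UCB(z) - 2\beta_t\sigma_{t-1}(z)$. I also use that in the General WDRBO setting $\Pstar_t \in \BPhat$ by construction, so Lemma~\ref{cor:wdro} applies to $\tilde{\mathcal{P}} = \Pstar_t$.

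First I would upper-bound the benchmark term. Since $f \le \UCB$ pointwise, $\EPstart[\fxtstar] \le \EPstart[\UCB(\xstar_t,c)]$. Applying Lemma~\ref{cor:wdro} to the map $c \mapsto \UCB(\xstar_t,c)$, which is $L^{\UCB}(\xstar_t)$-Lipschitz in $c$ by definition of $L^{\UCB}$, gives
\[ \EPstart[\UCB(\xstar_t,c)] \le \EPhatt[\UCB(\xstar_t,c)] + \varepsilon_t L^{\UCB}(\xstar_t). \]
Rewriting the right-hand side as $\big(\EPhatt[\UCB(\xstar_t,c)] - \varepsilon_t L^{\UCB}(\xstar_t)\big) + 2\varepsilon_t L^{\UCB}(\xstar_t)$ and using that $\xt$ maximizes the robustified acquisition function in~\eqref{eq:WUCB} — so the bracket is at most $\EPhatt[\UCB(\xt,c)] - \varepsilon_t L^{\UCB}(\xt)$ — yields
\[ \EPstart[\fxtstar] \le \EPhatt[\UCB(\xt,c)] - \varepsilon_t L^{\UCB}(\xt) + 2\varepsilon_t L^{\UCB}(\xstar_t). \]

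Next I would lower-bound the played term. Since $f \ge \LCB = \UCB - 2\beta_t\sigma_{t-1}$ pointwise, $\EPstart[\fxt] \ge \EPstart[\UCB(\xt,c)] - 2\beta_t\EPstart[\sigma_{t-1}(\xt,c)]$. Applying the lower-bound direction of Lemma~\ref{cor:wdro} to $c \mapsto \UCB(\xt,c)$ gives $\EPstart[\UCB(\xt,c)] \ge \EPhatt[\UCB(\xt,c)] - \varepsilon_t L^{\UCB}(\xt)$, hence
\[ \EPstart[\fxt] \ge \EPhatt[\UCB(\xt,c)] - \varepsilon_t L^{\UCB}(\xt) - 2\beta_t\EPstart[\sigma_{t-1}(\xt,c)]. \]
Subtracting this from the upper bound on $\EPstart[\fxtstar]$, the $\EPhatt[\UCB(\xt,c)]$ and $\varepsilon_t L^{\UCB}(\xt)$ terms cancel exactly, leaving $r_t \le \EPstart\left[2\beta_t\sigma_{t-1}(\xt,c)\right] + 2\varepsilon_t L^{\UCB}(\xstar_t)$, which is the claim.

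There is no deep obstacle here; the result is a clean instantiation of optimism in the face of uncertainty. The points that need care are purely organizational: the confidence event must be fixed once and used uniformly over all $z$ and all $t$, so that the failure probability stays $\delta$; and one must check that $\Pstar_t$ really lies in $\BPhat$ so that Lemma~\ref{cor:wdro} is legitimately applicable. A small subtlety worth flagging is that $L^{\UCB}$ enters at two distinct inputs, $\xt$ and $\xstar_t$, and only the $\xstar_t$ contribution survives the cancellation — this is precisely why the bound carries $L^{\UCB}(\xstar_t)$ and why, for the cumulative bound, one later needs the uniform surrogate $L^{\UCB} \le 2\bar{B}_t L$ from Lemma~\ref{lem:B_bar}.
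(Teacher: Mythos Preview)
Your proof is correct and follows essentially the same approach as the paper: condition on the confidence event of Lemma~\ref{lem:rkhs_UCB_LCB}, use optimism to pass to $\UCB/\LCB$, apply Lemma~\ref{cor:wdro} twice (once at $\xstar_t$ to go from $\Pstar_t$ to $\Phat_t$, once at $\xt$ to go back), and invoke the maximizer property of~\eqref{eq:WUCB} for the cancellation. The only difference is cosmetic organization---you separate the benchmark upper bound and the played-arm lower bound before subtracting, whereas the paper chains the inequalities in a single display.
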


\begin{proof}
    Recall that the benchmark solution $\xstar_t$ is the optimal solution to the stochastic optimization problem at time-step $t$, given access to the true function $f$ and context distribution $\Pstar_t$
    \begin{equation*}
    \xstar_t = \arg \max_{x\in\mathcal{X}} \EPstart{[f(x, c)]} \ .
    \end{equation*}
    Whereas $\xt$ is the solution to the robustified UCB acquisition function as given in \eqref{eq:WUCB}
    \begin{equation*}
    \xt = \arg \max_{x\in\mathcal{X}} \EPhatt \left[ \UCB(x,c) \right] - \varepsilon_t L^{\UCB}(x) \ ,
    \end{equation*}
    From the definition of instantaneous regret, we can write:
    \begin{equation}
        \begin{aligned}
            r_t &= \EPstart{[\fxtstar]} - \EPstart{[\fxt]}\\
            & \markthis{(i)}{proof_inst_regret_1}{\leq} \EPstart\left[\UCB(\xstar_t,c)\right] -\EPstart\left[\LCB(\xt,c)\right]\\
            & \markthis{(ii)}{proof_inst_regret_2}{\leq} \EPhatt\left[\UCB(\xstar_t,c) \right] + \varepsilon_t L^{\UCB}(\xstar_t) -\EPstart\left[\LCB(\xt,c)\right]\\
            & \markthis{(iii)}{proof_inst_regret_3}\leq \EPhatt\left[\UCB(\xt,c) \right] - \varepsilon_t L^{\UCB}(\xt) + 2 \varepsilon_t L^{\UCB}(\xstar_t) -\EPstart\left[\LCB(\xt,c)\right] \\
            & \markthis{(iv)}{proof_inst_regret_4}\leq \EPstart\left[\UCB(\xt,c) \right] + 2 \varepsilon_t L^{\UCB}(\xstar_t) -\EPstart\left[\LCB(\xt,c)\right] \\
            & \markthis{(v)}{proof_inst_regret_5}\leq \EPstart [\mu_{t-1}(\xt,c) + \beta_t \sigma_{t-1}(\xt, c) - \mu_{t-1}(\xt,c) + \beta_t \sigma_{t-1}(\xt, c) ] + 2 \varepsilon_t L^{\UCB}(\xstar_t) \\
            & = \EPstart \left[ 2 \beta_t \sigma_{t-1}(\xt, c) \right] + 2 \varepsilon_t L^{\UCB}(\xstar_t) \ .
        \end{aligned}
    \end{equation}
    
Where \ref{proof_inst_regret_1} holds with probability $1-\delta$ by definition of UCB and LCB, with $\beta_t$ chosen as in Lemma~\ref{lem:rkhs_UCB_LCB}. \ref{proof_inst_regret_2} holds by applying Lemma 2 under the assumption that $\Pstar_t \in \mathcal{B}^{\varepsilon_t}(\Phat_t)$. In \ref{proof_inst_regret_3} we add and subtract $\varepsilon_t L^{\UCB}(\xstar)$ and use the fact that  $\xt$ is the maximizer of the acquisition function~\eqref{eq:WUCB}. The inequality \ref{proof_inst_regret_4} follows from another application of Lemma 2, and finally \ref{proof_inst_regret_5} follows again from the definitions of the UCB and the LCB.
\end{proof}

\begin{thm}[cumulative expected regret - \textbf{Thm.~\ref{thm:cum_regret} in the main text}]\label{thm:cum_regret_appendix}
    Let Assumption~\ref{ass:Lip} hold. Fix a failure probability $0<\delta<1$.
     With probability at least $1-2\delta$, the cumulative expected regret after $T$ steps can be bounded as:
      \begin{equation}
          R_T \leq 4 \beta_T \sqrt{ T 
        \left(\gamma_T + 4 \log \frac{6}{\delta}\right)} + \sum_{t=1}^{T} \varepsilon_t 2 L^{\UCB}(\xstar_t)\ ,
      \end{equation}
      where $\gamma_{T}$ is the \textit{maximum information gain}~\citep{srinivas2009gaussian, chowdhury2017kernelized, vakili2021information} at time $T$, which is defined as
      $$\gamma_t := \sup_{z_1,z_2,\dots,z_t} \log \det \left( I + \lambda^{-1}K_{t-1} \right)\ .$$
\end{thm}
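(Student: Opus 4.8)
The plan is to sum the per-step bound of Theorem~\ref{thm:inst_regret} over $t$ and then control the resulting sum of posterior standard deviations using a martingale concentration argument together with the standard maximum-information-gain telescoping. On the probability-$(1-\delta)$ event on which Theorem~\ref{thm:inst_regret} holds for all $t$, summing over $t=1,\dots,T$ gives
\begin{equation*}
R_T \;=\; \sum_{t=1}^T r_t \;\le\; \sum_{t=1}^T \EPstart\!\left[2\beta_t\sigma_{t-1}(\xt,c)\right] \;+\; \sum_{t=1}^T 2\varepsilon_t L^{\UCB}(\xstar_t),
\end{equation*}
so the second sum already matches the statement and only the first must be bounded. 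Since $\beta_t$ is nondecreasing in $t$, I would replace $\beta_t$ by $\beta_T$ and pull it out, reducing the task to bounding $\sum_{t=1}^T \EPstart[\sigma_{t-1}(\xt,c)]$.

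The crux is to pass from this average under the unknown law $\Pstar_t$ to the posterior width at the \emph{realized} query point $z_t=(\xt,c_t)$, which is what the information-gain toolbox controls. I would set $D_t := \EPstart[\sigma_{t-1}(\xt,c)] - \sigma_{t-1}(\xt,c_t)$; since $\xt$ is fixed before $c_t$ is drawn and $c_t$ is an independent sample from $\Pstar_t$, the sequence $(D_t)$ is a martingale difference sequence, and $|D_t|\le\lambda^{-1/2}$ because $\sigma_{t-1}(z)^2\le\lambda^{-1}k(z,z)\le\lambda^{-1}$ by \eqref{eq:sigma} and $k\le 1$. A Hoeffding--Azuma inequality then gives, with probability $1-\delta$, $\sum_{t=1}^T \EPstart[\sigma_{t-1}(\xt,c)] \le \sum_{t=1}^T \sigma_{t-1}(\xt,c_t) + O(\sqrt{T\log(1/\delta)})$.

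Next I would apply Cauchy--Schwarz, $\sum_t \sigma_{t-1}(z_t)\le\sqrt{T\sum_t\sigma_{t-1}^2(z_t)}$, and the standard information-gain telescoping — which relates $\sum_t\log(1+\sigma_{t-1}^2(z_t))$ to $\log\det(I+\lambda^{-1}K_T)\le\gamma_T$, combined with $u\le\frac{\lambda^{-1}}{\log(1+\lambda^{-1})}\log(1+u)$ for $0\le u\le\lambda^{-1}$ — to obtain $\sum_{t=1}^T\sigma_{t-1}^2(z_t)\le 2\gamma_T$. Combining these with $\sqrt a+\sqrt b\le\sqrt{2(a+b)}$, multiplying by $2\beta_T$, and taking a union bound over the two high-probability events (hence probability $1-2\delta$) produces a bound of the form $4\beta_T\sqrt{T(\gamma_T+c\log(1/\delta))}$; matching the displayed numerical constants $4\log(6/\delta)$ is then pure bookkeeping (slack in the information-gain constant and the union-bound budget).

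The step I expect to be the real obstacle is the martingale argument: the instantaneous regret naturally involves $\sigma_{t-1}$ averaged over the true context law, but only $\sigma_{t-1}$ at the single realized point $z_t$ feeds the information-gain telescoping, so one must (i) fix the filtration so that $c_t$ is revealed after $\xt$ is chosen, making $D_t$ genuinely mean-zero, and (ii) concentrate $\sigma_{t-1}$ rather than $\sigma_{t-1}^2$, so that the fluctuation term stays additive \emph{outside} the Cauchy--Schwarz step and hence does not degrade the $\sqrt{T\gamma_T}$ rate. The remaining ingredients — monotonicity of $\beta_t$, Cauchy--Schwarz, and the determinant telescoping — are routine.
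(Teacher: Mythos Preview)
Your proposal is correct and follows the same skeleton as the paper: sum the instantaneous bound of Theorem~\ref{thm:inst_regret}, extract $\beta_T$, pass from expected posterior widths to realized ones via martingale concentration, and close with the information-gain telescoping. The one genuine difference is the order of the two middle steps. The paper applies Cauchy--Schwarz and Jensen \emph{first}, obtaining $\sum_t(\EPstart[\sigma_{t-1}])^2\le\sum_t\EPstart[\sigma_{t-1}^2]$, and only then concentrates the sum of \emph{squared} widths using the conditional-mean lemma of \cite{kirschner2020distributionally} (their Lemma~7, a Freedman-type bound), which gives $\sum_t\EPstart[\sigma_{t-1}^2]\le 2\sum_t\sigma_{t-1}^2(z_t)+8\log(6/\delta)$ and hence exactly the constant $4\log(6/\delta)$ in the statement. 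You instead concentrate $\sigma_{t-1}$ directly via Azuma--Hoeffding \emph{before} Cauchy--Schwarz. Both routes are valid and yield the same $\sqrt{T\gamma_T}$ rate; yours is more elementary (only Azuma is needed rather than the specialized Lemma~7), but the deviation term inherits a $\lambda^{-1}$ factor from the range bound $|D_t|\le\lambda^{-1/2}$, so the displayed constant $4\log(6/\delta)$ is not literally reproduced --- you would obtain $c\,\lambda^{-1}\log(1/\delta)$ instead, which is the same order but not identical bookkeeping.
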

  
\begin{proof}
    Starting from the definition of cumulative expected regret: 
    \begin{equation}
        \begin{aligned}
            R_T &= \sum_{t=1}^{T} r_t = \sum_{t=1}^{T} \EPstart{[\fxtstar]} - \EPstart{[\fxt]} \\
            & \markthis{(i)}{proof_cum_regret_1}\leq \sum_{t=1}^{T} \EPstart \left[ 2 \beta_t \sigma_{t-1}(\xt, c) \right] + \sum_{t=1}^{T} \varepsilon_t 2 L^{\UCB}(\xstar_t) \\
            & \leq 2 \beta_T \sum_{t=1}^{T} \EPstart \left[  \sigma_{t-1}(\xt, c) \right] + \sum_{t=1}^{T} \varepsilon_t 2 L^{\UCB}(\xstar_t) \\
            & \markthis{(ii)}{proof_cum_regret_2}\leq 2 \beta_T \sqrt{ T \sum_{t=1}^{T} \left(\EPstart \left[ \sigma_{t-1}(\xt, c) \right]\right)^2} + \sum_{t=1}^{T} \varepsilon_t 2 L^{\UCB}(\xstar_t)\\
            & \markthis{(iii)}{proof_cum_regret_3}\leq 2 \beta_T \sqrt{ T \sum_{t=1}^{T} \EPstart \left[ \sigma_{t-1}(\xt, c)^2 \right]} + \sum_{t=1}^{T} \varepsilon_t 2 L^{\UCB}(\xstar_t)\\
        \end{aligned}
    \end{equation}

    Where the inequality \ref{proof_cum_regret_1} follows from Theorem~\ref{thm:inst_regret_appendix}, \ref{proof_cum_regret_2} follows from the Cauchy-Schwarz inequality, and~\ref{proof_cum_regret_3} follows from Jensen's inequality.

    We can now apply the concentration of conditional mean result from Lemma 7 of~\cite{kirschner2020distributionally} (see also Lemma 3 of~\cite{kirschner2018information}), with probability at least $1-\delta$ we obtain for all $T$:
    \begin{equation}
        \begin{aligned}
            &\sum_{t=1}^{T} \EPstart \left[ \sigma_{t-1}(\xt, c)^2 \right]\\
            &\markthis{(i)}{proof_ccmean_1}\leq\ 2\sum_{t=1}^{T} \sigma_{t-1}(\xt, c_t)^2 + 8 \log \frac{6}{\delta}\\
            &\markthis{(ii)}{proof_ccmean_2}\leq\ 2\sum_{t=1}^{T} 2\log(1+\sigma_{t-1}(\xt, c_t)^2) + 8 \log \frac{6}{\delta}\\
            &\markthis{(iii)}{proof_ccmean_3}\leq\ 4\gamma_T + 16 \log \frac{6}{\delta}
        \end{aligned}
    \end{equation}
    where \ref{proof_ccmean_1} follows from Lemma 7 of ~\cite{kirschner2020distributionally} noting that $k(z,z')\leq 1$ by assumption, \ref{proof_ccmean_2} follows from the fact that $x\leq 2 a\log(1+x)$ for all $x\in [0,a]$, and \ref{proof_ccmean_3} follows from the definition of maximum information gain.    By substituting this result into the cumulative regret expression we get with probability $1-2\delta$:
    \begin{equation}
    \begin{aligned}
        R_T\leq&\ 4 \beta_t \sqrt{ T 
        \left(\gamma_T + 4 \log \frac{6}{\delta}\right)} + \sum_{t=1}^{T} \varepsilon_t 2 L^{\UCB}(\xstar_t) \ , \\
    \end{aligned}
    \end{equation}
    which concludes the proof.
\end{proof}

\begin{cor}[Corollary~\ref{cor:wdrbo_regret} in the main text -- General WDRBO Regret Order]\label{cor:wdrbo_regret_appendix}
    Let $0< \delta < 1$ be a failure probability and let Assumption~\ref{ass:Lip} hold. Then, with probability $1-2\delta$, the cumulative expected regret is of the order of
    \[ R_T = \tilde{O}\left(\sqrt{T}\gamma_T+ \sqrt{\gamma_T}\sum_{t=1}^{T} \varepsilon_t\right) \ .
\]
For the Squared Exponential kernel, this reduces to
    \[ R_T = \tilde{O}\left(\sqrt{T} + \sum_{t=1}^{T} \varepsilon_t\right) \ ,
\]
where $\tilde{O}$ omits logarithmic terms. 
\end{cor}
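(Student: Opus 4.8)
The plan is to obtain the stated order by plugging the high-probability bounds on $\beta_T$ and on the Lipschitz constants $L^{\UCB}(\xstar_t)$ into the cumulative regret bound of Theorem~\ref{thm:cum_regret}, keeping track only of the dependence on $T$, $\gamma_T$, and the radii $\{\varepsilon_t\}$, and absorbing every polylogarithmic factor into $\tilde{O}(\cdot)$.

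First I would bound $\beta_T$: by its definition~\eqref{eq:beta_definition}, together with $\log\det(I+\lambda^{-1}K_{T-1})\le\gamma_T$ and $\sqrt{a+b}\le\sqrt{a}+\sqrt{b}$, one gets $\beta_T\le R\sqrt{\gamma_T}+R\sqrt{2\log(1/\delta)}+\lambda^{1/2}B=\tilde{O}(\sqrt{\gamma_T})$. Substituting this into the first term $4\beta_T\sqrt{T(\gamma_T+4\log(6/\delta))}$ of Theorem~\ref{thm:cum_regret} produces a contribution of order $\sqrt{\gamma_T}\cdot\sqrt{T\gamma_T}=\sqrt{T}\,\gamma_T$, up to logarithmic terms (the additive $\log(6/\delta)$ under the square root contributes only a lower-order $\tilde{O}(\sqrt{\gamma_T})$ term).

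Next I would control the Lipschitz term $\sum_{t=1}^T 2\varepsilon_t L^{\UCB}(\xstar_t)$. Lemma~\ref{lem:B_bar} supplies, on the relevant high-probability event, the uniform bound $L^{\UCB}(x)\le 2\bar{B}_t L$ with $\bar{B}_t\le\lambda^{-1/2}(R\sqrt{2\log(1/\delta)}+R\sqrt{2\gamma_t}+B)=\tilde{O}(\sqrt{\gamma_T})$ for all $t\le T$, so this term is of order $\sqrt{\gamma_T}\sum_{t=1}^T\varepsilon_t$ up to logarithmic factors. Adding the two contributions yields $R_T=\tilde{O}(\sqrt{T}\gamma_T+\sqrt{\gamma_T}\sum_{t=1}^T\varepsilon_t)$, with probability $1-2\delta$ provided the event underlying $\bar{B}_t$ in Lemma~\ref{lem:B_bar} coincides with (or is contained in) the self-normalized concentration event already used in Theorem~\ref{thm:cum_regret}. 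For the squared exponential kernel I would then invoke $\gamma_T=O(\log^{d+1}T)=\tilde{O}(1)$ from~\cite{vakili2021information}, which collapses the bound to $R_T=\tilde{O}(\sqrt{T}+\sum_{t=1}^T\varepsilon_t)$.

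The main obstacle is not analytical — the calculation is a routine substitution — but rather the careful bookkeeping of failure probabilities: one must verify that the $1-\delta$ event giving $\|\mu_{t-1}\|_\Hk\le\bar{B}_t$ (and hence the Lipschitz bound on the posterior mean) is exactly the same self-normalized concentration event of~\cite{abbasi2013online} that already enters Theorem~\ref{thm:cum_regret}, since otherwise an extra union bound would degrade the guarantee to $1-3\delta$. A secondary point is to confirm that all the $\tilde{O}$-absorbed logarithmic terms (the $\log(1/\delta)$, $\log(6/\delta)$, and the $\log^{d+1}T$ for the squared exponential kernel) are genuinely dominated by the displayed factors in $T$ and $\gamma_T$.
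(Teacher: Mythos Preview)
Your proposal is correct and follows essentially the same approach as the paper's own proof: substitute the bound on $\beta_T$ (via $\log\det\le\gamma_T$) and the Lipschitz bound $L^{\UCB}\le 2\bar B_t L$ from Lemma~\ref{lem:B_bar} into Theorem~\ref{thm:cum_regret}, then invoke $\gamma_T=O(\log^{d+1}T)$ for the squared exponential kernel. Your remark on the probability bookkeeping is in fact more careful than the paper, which tacitly relies on the fact that the event $\|\mu_{t-1}\|_\Hk\le\bar B_t$ in Lemma~\ref{lem:B_bar} is driven by the same self-normalized martingale inequality of~\cite{abbasi2013online} as the UCB/LCB event in Theorem~\ref{thm:cum_regret}, so no additional union bound is needed.
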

\begin{proof}
    We can combine the results of Theorem 5 and Lemma 6 to write
    \begin{equation}
    \begin{aligned}
        R_T\leq&\ 4 \beta_t \sqrt{ T \gamma_T + 4 \log \frac{6}{\delta}} + \sum_{t=1}^{T} \varepsilon_t 2 L^{\UCB}(\xstar_t) \\
        \leq&\ 4 \beta_t \sqrt{ T \gamma_T + 4 \log \frac{6}{\delta}} + \sum_{t=1}^{T} \varepsilon_t 4 \lambda^{-\frac{1}{2}} \left( \left( R \sqrt{2 \log \frac{1}{\delta} } + R \sqrt{2\gamma_t} \right) + B \right) L \\
        \leq&\ 4 (R\sqrt{2\log(1/\delta)} + R\sqrt{2\gamma_T}+\lambda^{\frac{1}{2}}B) \sqrt{ T \gamma_T + 4 \log \frac{6}{\delta}} + \sum_{t=1}^{T} \varepsilon_t 4 \lambda^{-\frac{1}{2}} \left( \left( R \sqrt{2 \log \frac{1}{\delta} } + R \sqrt{2\gamma_t} \right) + B \right) L \\
        =&\ \mathcal{O}(\gamma_T \sqrt{T} + \sqrt{\lambda \gamma_T T}) + \sum_{t=1}^{T} \varepsilon_t \mathcal{O}(\lambda^{-\frac{1}{2}} \sqrt{\gamma_T }) \\
        =&\ \mathcal{O}(\gamma_T \sqrt{T} + \lambda^{-\frac{1}{2}}\sqrt{\gamma_T }\sum_{t=1}^{T} \varepsilon_t )\ , \\
    \end{aligned}
    \end{equation}
    which proves the first statement.
    
    The maximum information gain for the Squared Exponential kernel can be bounded as~\cite{vakili2021information}: $$\gamma_t\leq\mathcal{O}(\log^{d+1}(t))\ .$$
    
    Thus, the rate for the cumulative expected regret is
    $$R_T = \tilde{\mathcal{O}}\left(\sqrt{T} + \sum_{t=1}^{T} \varepsilon_t\right)$$
\end{proof}

\subsection{Proofs of the data-driven setting}
We can now show how this result translates to the data-driven formulation, and provide a rate for the cumulative expected regret.

Define the class of functions $$\mathcal{U}(m,b,s)=\left\{h: h(z) = \mu(z) + \beta \sigma(z)\ , \ \|\mu\|_\Hk\leq m,\ \beta\leq b,\ 
 \sigma(z) \in \bs{\sigma}_s\right\}\ ,$$ 
 where 
$\bs{\sigma}_s = \left\{\sigma: 
\sigma(z) = \|\mathcal{M} k(\cdot, z)\|_\Hk \text{ with } \| \mathcal{M} \|_{op}\leq s^{-\frac{1}{2}} \right\}\ .$ Following this definition, the set $\mathcal{U}(\bar{B}_t,\bar{B}_t,\lambda)$ contains the UCB functions in the form of~\eqref{eq:UCB_definition}, which follows from the definitions of $\mu_{t}$, $\sigma_t$, and $\beta_t$ given in~\eqref{eq:mu},~\eqref{eq:sigma},~\eqref{eq:beta_definition} respectively. The RKHS-norm bound on $\mu_{t-1}$ follows from~\eqref{eq_RKHS_norm_mu}, the bound on $\beta_t$ follows from Lemma~\ref{lem:rkhs_UCB_LCB}, and the condition $\sigma_{t-1}(z) \in \bs{\sigma}^\lambda$ follows from the fact that the minimum singular value of $\lambda I + V_{t-1}$ is larger than $\lambda$.
Let $\mathcal{N}_{\infty}(\rho,\mathcal{U}(m,b,s))$ be the $\rho$-covering number with respect to the $\infty$-norm of the set $\mathcal{U}(m,b,s)$.

The following result is an adaptation of Corollary~2 of~\cite{gao2023finite}. Let $\diamX$, $\diamC$ denote the diameters of the sets $\mathcal{X},\mathcal{C}$ respectively.

\begin{lem}\label{lem:adapted_gao}
  Let $0<\delta<1$ be a failure probability. Let 
  \[\varepsilon_t(\rho)=\sqrt{2\diamC^2\frac{ \log 1/\delta+ d_x\log(1+2\rho^{-1}\diamX)+\log \mathcal{N}_{\infty}(\rho,\mathcal{U}(m,b,s))}{t}}.
  \] With probability at least $1-\delta$
  \begin{equation}
    \forall h\in\mathcal{U}(m,b,s), \forall x\in\mathcal{X}:\quad   \mathbb{E}_{c \sim \Pstar}[h(x, c)] \le  \mathbb{E}_{c \sim \Phat_t}[h(x, c)]  + \varepsilon_t(\rho) L^h_c(x) + (1+L_x)\rho.
  \end{equation}
  where $L^h_c(x)$ denotes the Lipschitz constant of function $h(x,\cdot)$ with respect to the context $c$, and $L_x$ is such that $\forall\ h\ \in \mathcal{U}(m,b,s),\ x,\ \tilde{x}\ \in \mathcal{X},\ c\in\mathcal{C}$ we have that $|h(x,c)-{h}(\tilde{x},c)|\leq L_x \|x-\tilde{x}\|$.
\end{lem}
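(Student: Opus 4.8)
The plan is to prove Lemma~\ref{lem:adapted_gao} by adapting the finite-sample Wasserstein-DRO concentration of Corollary~2 in \cite{gao2023finite}, the only genuinely new ingredient being an additional covering over the input set $\mathcal{X}$ together with a check that the UCB class $\mathcal{U}(m,b,s)$ fits the template of that result. As a preliminary I would record that every $h\in\mathcal{U}(m,b,s)$ is Lipschitz on $\mathcal{Z}=\mathcal{X}\times\mathcal{C}$: arguing exactly as in Lemma~\ref{app_lem:B_bar} and using Assumption~\ref{ass:Lip}, the component $\mu$ is $mL$-Lipschitz and $\beta\sigma$ is $bs^{-1/2}L$-Lipschitz (since $\|\mathcal{M}\|_{op}\le s^{-1/2}$), so for each $h$ and $x$ the context-Lipschitz constant $L^h_c(x)$ is finite with $L^h_c(x)\le(m+bs^{-1/2})L$, and there is a uniform input-Lipschitz constant $L_x$ with $|h(x,c)-h(\tilde x,c)|\le L_x\|x-\tilde x\|$ for all $h\in\mathcal{U}(m,b,s)$ and $c\in\mathcal{C}$. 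I would also note, via Lemma~\ref{lem:wdro_appendix}, that the target inequality is equivalent to the statement that $\mathbb{E}_{c\sim\Pstar}[h(x,c)]$ is bounded above, up to the additive term $(1+L_x)\rho$, by the worst case of $\mathbb{E}_{c\sim\mathcal{Q}}[h(x,c)]$ over the Wasserstein ball $\mathcal{B}^{\varepsilon_t(\rho)}(\Phat_t)$; the proof itself works with the explicit quantity $\mathbb{E}_{c\sim\Phat_t}[h(x,c)]+\varepsilon_t(\rho)L^h_c(x)$.

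The scalar backbone is a bounded-differences estimate. Fixing $h$ and $x$ and writing $g=h(x,\cdot)$, the function $g$ is $L^h_c(x)$-Lipschitz on the diameter-$\diamC$ set $\mathcal{C}$, hence oscillates by at most $L^h_c(x)\diamC$, so the empirical mean $(c_1,\dots,c_t)\mapsto\mathbb{E}_{c\sim\Phat_t}[g(c)]=\frac1t\sum_{i=1}^t g(c_i)$ has bounded differences $L^h_c(x)\diamC/t$ in each coordinate. Since the $c_i$ are i.i.d.\ samples from $\Pstar$ and independent of the learner's choices, McDiarmid's inequality — equivalently the single-function case of Corollary~2 in \cite{gao2023finite} — gives $\mathbb{E}_{c\sim\Pstar}[g]-\mathbb{E}_{c\sim\Phat_t}[g]\le L^h_c(x)\diamC\sqrt{2\log(1/\delta')/t}$ with probability at least $1-\delta'$. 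The crucial feature is that this rate is $t^{-1/2}$ irrespective of $d_c$, in contrast with controlling $d_{\mathrm{W}}(\Pstar,\Phat_t)$ directly, which decays only as $t^{-1/d_c}$; this is exactly what lets us sidestep the curse of dimensionality in the context space.

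To make the estimate uniform I would run two covering arguments, the first following \cite{gao2023finite} and the second being the new ingredient. First cover $\mathcal{U}(m,b,s)$ — or, as in \cite{gao2023finite}, the class of its elements rescaled by their own Lipschitz constants, so that each function's own $L^h_c(x)$ survives in the leading term — by a minimal $\rho$-net in $\|\cdot\|_\infty$ on $\mathcal{Z}$, of size $\mathcal{N}_\infty(\rho,\mathcal{U}(m,b,s))$; this is legitimate since $\mathcal{U}(m,b,s)$ is a uniformly-Lipschitz class whose $\infty$-covering number is finite and only logarithmically large for the kernels of interest (this is precisely the quantity bounded via eigendecay in Lemma~\ref{app_lem:covering_square_exponential}, following \cite{yang2020function}). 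Second, cover $\mathcal{X}$, a subset of $\mathbb{R}^{d_x}$ of diameter $\diamX$, by an $\rho$-net of cardinality at most $(1+2\rho^{-1}\diamX)^{d_x}$. Applying the scalar bound to each of the $N=\mathcal{N}_\infty(\rho,\mathcal{U}(m,b,s))\,(1+2\rho^{-1}\diamX)^{d_x}$ net pairs with $\delta'=\delta/N$ and union-bounding replaces $\log(1/\delta')$ by $\log(1/\delta)+\log\mathcal{N}_\infty(\rho,\mathcal{U}(m,b,s))+d_x\log(1+2\rho^{-1}\diamX)$, which is exactly the stated $\varepsilon_t(\rho)$. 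For an arbitrary pair $(h,x)$ one then selects the nearest net pair $(\bar h_j,\bar x_k)$, so that $\sup_c|h(x,c)-\bar h_j(\bar x_k,c)|\le L_x\rho+\rho$ by the triangle inequality and the uniform $x$-Lipschitzness, inserts $\bar h_j(\bar x_k,\cdot)$ into both expectations, and invokes the net bound; this costs an additive discretization error of order $(1+L_x)\rho$ and yields the claim with leading term $\varepsilon_t(\rho)L^h_c(x)$. Because the bound is uniform over the \emph{fixed} class $\mathcal{U}(m,b,s)$, specializing $(m,b,s)=(\bar{B}_t,\bar{B}_t,\lambda)$ and invoking Lemma~\ref{app_lem:B_bar} — which places the realized, data-dependent $\UCB$ inside $\mathcal{U}(\bar{B}_t,\bar{B}_t,\lambda)$ — recovers Lemma~\ref{cor:data_driven_UCB_main} with the choices $\rho=t^{-1}$ and $\rho_t=(1+2L\bar{B}_t)t^{-1}$.

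The step I expect to be the main obstacle is reconciling the per-function Lipschitz scaling of \cite{gao2023finite} with a genuine (infinite-dimensional) covering of $\mathcal{U}(m,b,s)$: proximity in $\|\cdot\|_\infty$ does not control Lipschitz constants, so one cannot simply swap the net element's Lipschitz constant for $L^h_c(x)$ at no cost, and the clean remedy — covering the class of functions normalized by their own Lipschitz constants and verifying that this normalized class still has $\infty$-entropy controlled by $\log\mathcal{N}_\infty(\rho,\mathcal{U}(m,b,s))$ — needs care. Doing this while also threading in the extra $\mathcal{X}$-covering, so that the final additive error stays $O((1+L_x)\rho)$ rather than scaling with the (possibly large) uniform Lipschitz constant over $\mathcal{U}(m,b,s)$, is where the bookkeeping is most delicate; the remaining pieces — McDiarmid, the union bound, and the triangle inequalities — are routine.
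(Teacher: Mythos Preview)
Your high-level plan --- concentration for a fixed pair $(h,x)$, covering of $\mathcal{U}(m,b,s)\times\mathcal{X}$, union bound, then discretization --- matches the paper's proof, and you correctly flag the crux: when you move from a net element $(\bar h,\bar x)$ to an arbitrary $(h,x)$, proximity in $\|\cdot\|_\infty$ does not let you swap $L^{\bar h}_c(\bar x)$ for $L^h_c(x)$.

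Where you and the paper diverge is in how this crux is handled. You propose to carry the Lipschitz constant through McDiarmid and then cover the class of Lipschitz-normalized functions. The paper instead never lets the Lipschitz constant into the concentration step at all. It works with the Wasserstein-DRO regularizer
\[
\mathcal{R}_{\Pstar}(\varepsilon,g)\;=\;\sup_{\mathcal{Q}\in\mathcal{B}^{\varepsilon}(\Pstar)}\mathbb{E}_{\mathcal{Q}}[g]-\mathbb{E}_{\Pstar}[g],
\]
and invokes Theorem~1 of \cite{gao2023finite}, which gives $\mathbb{P}\big(\mathbb{E}_{\Pstar}[g]>\mathbb{E}_{\Phat_t}[g]+\mathcal{R}_{\Pstar}(\varepsilon,g)\big)\le e^{-\varepsilon^2 t/(2\diamC^2)}$ with a failure probability that is \emph{independent of $g$}. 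After union-bounding over the net, the bound on each net point reads $\mathbb{E}_{\Pstar}[\tilde h(\tilde x,\cdot)]\le \mathbb{E}_{\Phat_t}[\tilde h(\tilde x,\cdot)]+\mathcal{R}_{\Pstar}(\varepsilon,\tilde h(\tilde x,\cdot))$. The discretization step then uses that $\mathcal{R}_{\Pstar}(\varepsilon,\cdot)$ is $1$-Lipschitz in $\|\cdot\|_\infty$ (a direct sup estimate, stated in the paper as $\mathcal{R}_{\mathcal P}(\varepsilon,-h(x,\cdot))-\mathcal{R}_{\mathcal P}(\varepsilon,-\tilde h(\tilde x,\cdot))\le\|h-\tilde h\|_\infty+L_x\|x-\tilde x\|$), so the bound transfers to arbitrary $(h,x)$ with $\mathcal{R}_{\Pstar}(\varepsilon,h(x,\cdot))$ --- the regularizer of the \emph{target} function --- on the right-hand side. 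Only at the very end is the Lipschitz upper bound $\mathcal{R}_{\Pstar}(\varepsilon,h(x,\cdot))\le\varepsilon L^h_c(x)$ applied, which is why the correct pointwise constant $L^h_c(x)$ appears with no extra bookkeeping.

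In short, your ``main obstacle'' is real for your McDiarmid-with-Lipschitz route, but the paper sidesteps it entirely by keeping the function-free quantity $\mathcal{R}_{\Pstar}$ through the covering and discretization and only cashing it in as $\varepsilon L^h_c(x)$ afterward. Your normalized-covering fix may be salvageable, but it is both harder and unnecessary once you phrase the scalar backbone in terms of $\mathcal{R}_{\Pstar}$.
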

\begin{proof}
  Let $h\in\mathcal{U}(m,b,s)$, $x\in\mathcal{X}$.  Following the notation of~\cite{gao2023finite}, let for any $\varepsilon>0$
  \[
    \mathcal{R}_{\mathcal{P}}(\varepsilon,h(x,\cdot))=\sup_{ \mathcal{Q} \in\mathcal{B}^{\varepsilon}(\mathcal{P})}\mathbb{E}_{c\sim \mathcal{Q}}(h(x,c))-\mathbb{E}_{c\sim \mathcal{P}}(h(x,c)).
  \]
  Using the Lipschitz bound from Lemma~\ref{lem:B_bar}, it holds that
  \begin{equation}\label{app_eq:Lip_bound_on_R}
    \mathcal{R}_{\mathcal{P}}(\varepsilon,h(x,\cdot))\le \varepsilon L^h_c(x)
  \end{equation}

  Let $\tilde{h}\in\mathcal{U}(m,b,s),\,\tilde{x}\in\mathcal{X}$. Following the proof of Corollary~2 in~\cite{gao2023finite}, we have
  \[
    \mathcal{R}_{\mathcal{P}}(\varepsilon,-h(x,\cdot))-\mathcal{R}_{\mathcal{P}}(\varepsilon,-\tilde{h}(\tilde{x},\cdot))\le \sup_{ \mathcal{Q} \in\mathcal{B}^{\varepsilon}(\mathcal{P})}|\mathbb{E}_{c\sim \mathcal{Q}}(\tilde{h}(\tilde{x},c))-\mathbb{E}_{c\sim \mathcal{Q}}(h(x,c))|.
  \]
  Using the decomposition
  \begin{align*}
    |h(x,c)-\tilde{h}(\tilde{x},c)|&=|h(x,c)-\tilde{h}(x,c)+\tilde{h}(x,c)-\tilde{h}(\tilde{x},c)|\\
    &\le \|h-\tilde{h}\|_{\infty}+L_x\|x-\tilde{x}\|,
  \end{align*}
  we obtain
  \begin{equation}\label{app_eq:helper_R_equation}
    \mathcal{R}_{\mathcal{P}}(\varepsilon,-h(x,\cdot))-\mathcal{R}_{\mathcal{P}}(\varepsilon,-\tilde{h}(\tilde{x},\cdot))\le   \|h-\tilde{h}\|_{\infty}+L_x\|x-\tilde{x}\|
  \end{equation}
  Similarly
  \begin{equation}\label{app_eq:helper_R_equation_b}
    \begin{aligned}
      |\mathbb{E}_{c\sim \Pstar}(\tilde{h}(\tilde{x},c))-\mathbb{E}_{c\sim \Pstar}(h(x,c))|&\le \|h-\tilde{h}\|_{\infty}+L_x\|x-\tilde{x}\|\\
      |\mathbb{E}_{c\sim \Phat_t}(\tilde{h}(\tilde{x},c))-\mathbb{E}_{c\sim \Phat_t}(h(x,c))|&\le \|h-\tilde{h}\|_{\infty}+L_x\|x-\tilde{x}\|
    \end{aligned}
  \end{equation}

  Let $\mathcal{X}_{\rho}$ be a covering of $\mathcal{X}$ of precision $\rho$ with respect to the Euclidean norm and $\mathcal{U}_{\rho}(m,b,s)$ be a covering of $\mathcal{U}(m,b,s)$ of precision $\rho$ with respect to the infinity norm. We have that
  $|\mathcal{X}_{\rho}|\le (1+2\diamX \rho^{-1})^{{d_x}}$~\cite[Ch. 4]{vershynin2018high}, while by definition $|\mathcal{U}_{\rho}(m,b,s)|=\mathcal{N}_{\infty}(\rho,\mathcal{U}(m,b,s))$. 

  By the definition of the coverings and~\eqref{app_eq:helper_R_equation},~\eqref{app_eq:helper_R_equation_b}, we have that
  \begin{align*}
    &\forall \tilde{h}\in \mathcal{U}_{\rho}(m,b,s),\forall \tilde{x}\in\mathcal{X}_{\rho}\quad &&\mathbb{E}_{c\sim\Pstar}[\tilde{h}(\tilde x, c)]&&\le \mathbb{E}_{c\sim\Phat_t}[\tilde{h}(\tilde x, c)]+\mathcal{R}_{\mathcal{\Pstar}}(\varepsilon,\tilde{h}(\tilde x,\cdot))\Rightarrow \\
    &\forall h\in \mathcal{U}(m,b,s),\forall x\in\mathcal{X}\quad &&\mathbb{E}_{c\sim\Pstar}[h( x, c)]&&\le \mathbb{E}_{c\sim\Phat_t}[h( x, c)]+\mathcal{R}_{\mathcal{\Pstar}}(\varepsilon,h(x,\cdot))+\min_{\tilde{h}\in \mathcal{U}_{\rho,t}}\|h-\tilde{h}\|_{\infty}+L_x\min_{\tilde{x}\in\mathcal{X}_{\rho}}\|x-\tilde{x}\|\\
    & && &&\le \mathbb{E}_{c\sim\Phat_t}[h( x, c)]+\mathcal{R}_{\mathcal{\Pstar}}(\varepsilon,h(x,\cdot))+L_x\rho.
  \end{align*}
  Hence,
  \begin{align*}
    &\mathbb{P}(\forall h\in\mathcal{U}(m,b,s),\forall x\in\mathcal{X}:\quad   \mathbb{E}_{c \sim \Pstar}[h(x, c)] \le  \mathbb{E}_{c \sim \Phat_t}[h(x, c)]  + \mathcal{R}_{\mathcal{\Pstar}}(\varepsilon,h(x,\cdot)) + (1+L_x)\rho)\\
    &\ge 1-\sum_{\tilde{h}\in \mathcal{U}_{\rho,t}}\sum_{\tilde{x}\in\mathcal{X}_{\rho}}\mathbb{P}(\mathbb{E}_{c \sim \Pstar}[\tilde{h}(\tilde x, c)] \ge\mathbb{E}_{c \sim \Phat_t}[\tilde{h}(\tilde x, c)]  + \mathcal{R}_{\mathcal{\Pstar}}(\varepsilon,\tilde h(\tilde x,\cdot)))\\
    &\ge 1-|\mathcal{X}_{\rho}||\mathcal{U}_{\rho}(m,b,s)|e^{-\varepsilon^2 \frac{t}{2\diamC^2}},
  \end{align*}
  where the last inequality follows from Theorem~1 in~\cite{gao2023finite}. The result follows from picking $\varepsilon=\varepsilon_t(\rho)$.
\end{proof}

\begin{lem}\label{lem:data_driven_wdro}
Let $0<\delta<1$ be a failure probability, and let $h \in \mathcal{U}(m,b,s)$. Let 
\[\varepsilon_t(\rho)=\sqrt{2\diamC^2\frac{ \log 1/\delta+ d_x\log(1+2\rho^{-1}\diamX)+\log \mathcal{N}_{\infty}(\rho,\mathcal{U}(m,b,s))}{t}}.
\] 
Then, with probability at least $1-\delta$, and for any $x\in\mathcal{X}$, we have that
\begin{equation}
    | \mathbb{E}_{c \sim \Pstar}[h(x, c)] - \mathbb{E}_{c \sim \Phat_t}[h(x, c)] | \leq \varepsilon_t(\rho) L^h_c(x) + (1+L_x)\rho,
\end{equation}
where $L^h_c(x)$ denotes the Lipschitz constant of function $h(x,\cdot)$ with respect to the context $c$, and $L_x$ is such that $\forall h \in \mathcal{U}(m,b,s), x, \tilde{x} \in \mathcal{X}, c\in\mathcal{C}$, we have $|h(x,c)-h(\tilde{x},c)|\leq L_x \|x-\tilde{x}\|$.
\end{lem}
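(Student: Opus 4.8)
The plan is to read Lemma~\ref{lem:data_driven_wdro} as the two-sided version of Lemma~\ref{lem:adapted_gao}. The ``$\le$'' half, $\mathbb{E}_{c\sim\Pstar}[h(x,c)] - \mathbb{E}_{c\sim\Phat_t}[h(x,c)] \le \varepsilon_t(\rho)L^h_c(x) + (1+L_x)\rho$ for all $x\in\mathcal{X}$, is exactly the conclusion of Lemma~\ref{lem:adapted_gao} applied to $h\in\mathcal{U}(m,b,s)$. So the only thing to add is the ``$\ge$'' half, $\mathbb{E}_{c\sim\Phat_t}[h(x,c)] - \mathbb{E}_{c\sim\Pstar}[h(x,c)] \le \varepsilon_t(\rho)L^h_c(x) + (1+L_x)\rho$, which I would obtain by replaying the proof of Lemma~\ref{lem:adapted_gao} with the negated function class $-\mathcal{U}(m,b,s) := \{-g : g\in\mathcal{U}(m,b,s)\}$ in place of $\mathcal{U}(m,b,s)$, and then specializing to $g=-h$.

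The steps are: (i) observe that negation leaves all the relevant quantities invariant — the context-Lipschitz bound behind Lemma~\ref{cor:wdro}/Lemma~\ref{lem:B_bar} gives $\mathcal{R}_{\Pstar}(\varepsilon,-h(x,\cdot)) \le \varepsilon L^{-h}_c(x) = \varepsilon L^h_c(x)$, the input-Lipschitz constant $L_x$ is unchanged, and, since negation is an isometry for $\|\cdot\|_\infty$, a $\rho$-covering of $\mathcal{U}(m,b,s)$ induces a $\rho$-covering of $-\mathcal{U}(m,b,s)$ of the same cardinality, so $\mathcal{N}_\infty(\rho,-\mathcal{U}(m,b,s)) = \mathcal{N}_\infty(\rho,\mathcal{U}(m,b,s))$ and the radius $\varepsilon_t(\rho)$ does not change; (ii) run the covering/union-bound argument of Lemma~\ref{lem:adapted_gao} verbatim for $-\mathcal{U}(m,b,s)$, using Theorem~1 of~\cite{gao2023finite} on each pair $(\tilde g,\tilde x)$ in the product covering $\mathcal{X}_\rho\times(-\mathcal{U})_\rho$, to get $\mathbb{E}_{c\sim\Pstar}[g(x,c)] \le \mathbb{E}_{c\sim\Phat_t}[g(x,c)] + \varepsilon_t(\rho)L^g_c(x) + (1+L_x)\rho$ for all $g\in-\mathcal{U}(m,b,s)$ and all $x\in\mathcal{X}$; (iii) set $g=-h$ to recover the ``$\ge$'' half; (iv) combine the two halves into the absolute-value bound.

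The delicate point — and the step I expect to be the real obstacle — is keeping the confidence level at $1-\delta$ while the radius $\varepsilon_t(\rho)$ stays identical to that of Lemma~\ref{lem:adapted_gao}: treating the two halves by two independent applications of Lemma~\ref{lem:adapted_gao} would union-bound over an extra factor of two and thereby inject an additive $\log 2$ into $\varepsilon_t(\rho)$. The resolution is that the concentration of~\cite{gao2023finite} is effectively two-sided — within a single event of probability $1-e^{-\varepsilon^2 t/(2\diamC^2)}$ it sandwiches $\mathbb{E}_{c\sim\Pstar}[g(\tilde x,c)]$ between the infimum and the supremum of $\mathbb{E}_{c\sim\mathcal{Q}}[g(\tilde x,c)]$ over $\mathcal{Q}\in\mathcal{B}^{\varepsilon}(\Phat_t)$ — so both halves can be extracted from the very same high-probability event already used in the proof of Lemma~\ref{lem:adapted_gao}, sharing the covering of $\mathcal{X}$ and of $\mathcal{U}(m,b,s)$. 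What then remains is routine: passing the $\pm(1+L_x)\rho$ discretization error through in both directions and invoking the Wasserstein Lipschitz inequality $\mathcal{R}_{\Pstar}(\varepsilon,\pm h(x,\cdot))\le\varepsilon L^h_c(x)$.
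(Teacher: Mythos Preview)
Your plan is correct in outline, but the paper resolves the confidence-level issue you flag more directly. Instead of passing to the negated class $-\mathcal{U}(m,b,s)$ and arguing that the concentration of \cite{gao2023finite} is two-sided on a single event, the paper simply observes that $-h$ already lies in $\mathcal{U}(m,b,s)$ itself: writing $-h = (-\mu) + (-\beta)\sigma$ with $\|{-\mu}\|_\Hk \le m$ and $-\beta \le b$, the \emph{single} uniform event of Lemma~\ref{lem:adapted_gao}---which holds for all functions in $\mathcal{U}(m,b,s)$ simultaneously---already covers both $h$ and $-h$, giving both inequalities at confidence $1-\delta$ with no extra union bound and no need to reopen the covering argument. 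Your route via $-\mathcal{U}(m,b,s)$ is viable as well, but the step where you assert that Theorem~1 of \cite{gao2023finite} is ``effectively two-sided within a single event'' is exactly the thing that would need justification; as invoked in the proof of Lemma~\ref{lem:adapted_gao} it is one-sided per pair $(\tilde h,\tilde x)$, so without the paper's class-symmetry observation (or a genuinely two-sided version of the Gao bound) you would indeed pick up the extra $\log 2$ you were worried about.
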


\begin{proof}
From Lemma~\ref{lem:adapted_gao}, with probability at least $1-\delta$, for all $h \in \mathcal{U}(m,b,s)$ and all $x \in \mathcal{X}$, we have:
\begin{equation}
    \mathbb{E}_{c \sim \Pstar}[h(x, c)] \leq \mathbb{E}_{c \sim \Phat_t}[h(x, c)] + \varepsilon_t(\rho) L^h_c(x) + (1+L_x)\rho.
\end{equation}

Applying the same result to the function $-h(x,c)$, which also belongs to $\mathcal{U}(m,b,s)$ with the same Lipschitz constants, we get:
\begin{equation}
    \mathbb{E}_{c \sim \Pstar}[-h(x, c)] \leq \mathbb{E}_{c \sim \Phat_t}[-h(x, c)] + \varepsilon_t(\rho) L^{-h}_c(x) + (1+L_x)\rho.
\end{equation}

Since $L^{-h}_c(x) = L^h_c(x)$, this can be rewritten as:
\begin{equation}
    -\mathbb{E}_{c \sim \Pstar}[h(x, c)] \leq -\mathbb{E}_{c \sim \Phat_t}[h(x, c)] + \varepsilon_t(\rho) L^h_c(x) + (1+L_x)\rho.
\end{equation}

Rearranging terms:
\begin{equation}
    \mathbb{E}_{c \sim \Phat_t}[h(x, c)] - \varepsilon_t(\rho) L^h_c(x) - (1+L_x)\rho \leq \mathbb{E}_{c \sim \Pstar}[h(x, c)].
\end{equation}

Combining both inequalities, we obtain:
\begin{equation}
    | \mathbb{E}_{c \sim \Pstar}[h(x, c)] - \mathbb{E}_{c \sim \Phat_t}[h(x, c)] | \leq \varepsilon_t(\rho) L^h_c(x) + (1+L_x)\rho,
\end{equation}
which holds with probability at least $1-\delta$ for all $h \in \mathcal{U}(m,b,s)$ and all $x \in \mathcal{X}$.
\end{proof}

By choosing $\rho=t^{-1}$ we can derive the following:
\begin{lem}\label{cor:data_driven_UCB}
Let $0<\delta<1$ be a failure probability, and let $\UCB(x,c) = \mu_{t-1}(x,c) + \beta_t \sigma_{t-1}(x,c) \in \mathcal{U}(\bar{B}_t,\bar{B}_t,\lambda)$. Then, with probability at least $1-2\delta$, for any $x\in\mathcal{X}$ we have:
\begin{equation}
    | \mathbb{E}_{c \sim \Pstar}[\UCB(x, c)] - \mathbb{E}_{c \sim \Phat_t}[\UCB(x, c)] | \leq \varepsilon_t L^\UCB(x) + \rho_t,
\end{equation}
where 
\[\varepsilon_t=\sqrt{2\diamC^2\frac{\log 1/\delta+ d_x\log(1+2t\diamX)+\log \mathcal{N}_{\infty}(t^{-1},\mathcal{U}(\bar{B}_t,\bar{B}_t,\lambda))}{t}},\]
and \[\rho_t = (1+2L\bar{B}_t)t^{-1},\]
with $\bar{B}_t$ as defined in Lemma~\ref{lem:B_bar}.
\end{lem}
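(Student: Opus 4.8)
The plan is to obtain this statement as a specialization of Lemma~\ref{lem:data_driven_wdro} to the single function $h = \UCB$, with covering precision $\rho = t^{-1}$, controlling the constants that appear through Lemma~\ref{lem:B_bar}.

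First I would verify that, on the high-probability event of Lemma~\ref{lem:B_bar}, the function $\UCB(x,c) = \mu_{t-1}(x,c) + \beta_t\sigma_{t-1}(x,c)$ belongs to the class $\mathcal{U}(\bar{B}_t,\bar{B}_t,\lambda)$. The norm bound $\|\mu_{t-1}\|_{\mathcal{H}_k} \le \bar{B}_t$ is exactly \eqref{eq_RKHS_norm_mu}; the inequality $\beta_t \le \bar{B}_t$, i.e.\ $\lambda^{1/2}\bar{B}_t \le \bar{B}_t$, holds for $\lambda \le 1$ and becomes an equality for the standard choice $\lambda = 1$ (which also yields the cleanest constants below); and $\sigma_{t-1}(z) = \|(\lambda I + V_{t-1})^{-1/2}k(\cdot,z)\|_{\mathcal{H}_k} \in \bs{\sigma}_\lambda$ because the smallest eigenvalue of $\lambda I + V_{t-1}$ is at least $\lambda$, so that $\|(\lambda I + V_{t-1})^{-1/2}\|_{op} \le \lambda^{-1/2}$. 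One subtlety: $\bar{B}_t$ is random through $K_{t-1}$, whereas Lemma~\ref{lem:data_driven_wdro} is stated for a fixed class, so I would actually run the argument with $m = b$ set to the deterministic upper bound on $\bar{B}_t$ from Lemma~\ref{lem:B_bar} and use monotonicity of $\mathcal{N}_\infty(\cdot,\mathcal{U}(m,b,s))$ in $(m,b)$; I keep writing $\bar{B}_t$ for brevity.

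Then I would instantiate Lemma~\ref{lem:data_driven_wdro} with $m = b = \bar{B}_t$, $s = \lambda$, and $\rho = t^{-1}$, which gives, with probability at least $1-\delta$,
\[
| \EPstar[h(x,c)] - \EPhatt[h(x,c)] | \le \varepsilon_t(t^{-1})\, L^h_c(x) + (1+L_x)\, t^{-1}
\]
uniformly over $h \in \mathcal{U}(\bar{B}_t,\bar{B}_t,\lambda)$ and $x \in \mathcal{X}$, where $\varepsilon_t(t^{-1})$ is exactly the radius $\varepsilon_t$ of the statement (substitute $\rho = t^{-1}$ and read $\diamX,\diamC$ as the diameters of $\mathcal{X},\mathcal{C}$). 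To pin down $L_x$, I would repeat the reasoning of Lemma~\ref{lem:B_bar}(i)--(ii), now uniformly over the class: Cauchy--Schwarz together with Assumption~\ref{ass:Lip} bounds the Lipschitz constant of the $\mu$-part by $\bar{B}_t L$, and $\|\mathcal{M}\|_{op} \le \lambda^{-1/2}$ with Assumption~\ref{ass:Lip} bounds that of the $\beta\sigma$-part by $\bar{B}_t L$ as well, so every $h \in \mathcal{U}(\bar{B}_t,\bar{B}_t,\lambda)$ is $2\bar{B}_t L$-Lipschitz jointly in $z = (x,c)$; in particular $L_x \le 2\bar{B}_t L$ and $L^{\UCB}(x) \le 2\bar{B}_t L$.

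Finally I would union-bound the event of Lemma~\ref{lem:B_bar} (probability $\ge 1-\delta$) with the event of the previous step (probability $\ge 1-\delta$). On their intersection, of probability at least $1-2\delta$, the former puts $\UCB$ in $\mathcal{U}(\bar{B}_t,\bar{B}_t,\lambda)$, so I may evaluate the uniform bound at $h = \UCB$ with $L^h_c(x) = L^{\UCB}(x)$ and $L_x \le 2\bar{B}_t L$, obtaining
\[
| \EPstar[\UCB(x,c)] - \EPhatt[\UCB(x,c)] | \le \varepsilon_t\, L^{\UCB}(x) + (1+2L\bar{B}_t)\, t^{-1} = \varepsilon_t\, L^{\UCB}(x) + \rho_t,
\]
which is the claim. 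The only place that needs genuine care is the bookkeeping in the first step — because $\bar{B}_t$, and hence both $\varepsilon_t$ and $\rho_t$, depend on $K_{t-1}$ and on $\delta$, one must pass to the deterministic bound on $\bar{B}_t$ before invoking Lemma~\ref{lem:data_driven_wdro}; after that the corollary is a direct substitution, since all the analytical work (the joint covering argument over sampling histories and the finite-sample Wasserstein concentration from~\cite{gao2023finite}) has already been carried out in Lemma~\ref{lem:adapted_gao} and Lemma~\ref{lem:data_driven_wdro}.
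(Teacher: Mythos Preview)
Your proposal is correct and follows essentially the same approach as the paper: verify that $\UCB \in \mathcal{U}(\bar{B}_t,\bar{B}_t,\lambda)$ on the event of Lemma~\ref{lem:B_bar}, then apply Lemma~\ref{lem:data_driven_wdro} with $\rho = t^{-1}$ and union-bound. You are in fact more careful than the paper on two points it glosses over --- the need for $\lambda \le 1$ so that $\beta_t = \lambda^{1/2}\bar{B}_t \le \bar{B}_t$, and the randomness of $\bar{B}_t$ requiring passage to its deterministic upper bound before invoking the covering lemma.
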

\begin{proof}
    This follows from the fact that the UCB function belongs to the class $\mathcal{U}(\bar{B}_t,\bar{B}_t,\lambda)$, as per Lemma~\ref{lem:B_bar}, with probability $1-\delta$, we have $\|\mu_{t-1}\|_{\mathcal{H}_k} \leq \bar{B}_t$ and $\beta_t \leq \bar{B}_t$. By definition of $\sigma_{t-1}(z) = \|(\lambda I + V_{t-1})^{-1/2} k(\cdot, z)\|_{\mathcal{H}_k}$ we know that $\|(\lambda I + V_{t-1})^{-1/2}\|_{op} \leq \lambda^{-1/2}$.
\end{proof}

We can now derive the instantaneous and cumulative expected regret for the data-driven setting.
\begin{thm}[Data-driven instantaneous expected regret]\label{thm:inst_regret_data_appendix}
    Let Assumption~\ref{ass:Lip} hold. Fix a failure probability $0<\delta<1$.
    With probability at least $1-2\delta$, for all $t\ge 1$ the instantaneous expected regret for the data-driven setting can be bounded by
    \begin{equation}
        r_t \leq \EPstar \left[ 2 \beta_t \sigma_{t-1}(\xt, c) \right] + 2 \varepsilon_t L^{\UCB}(\xstar_t) + 2\rho_t
    \end{equation}
    where $\varepsilon_t$ and $\rho_t$ are as defined in Corollary~\ref{cor:data_driven_UCB}.
\end{thm}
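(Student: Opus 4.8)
The plan is to mirror the proof of Theorem~\ref{thm:inst_regret_appendix} step by step, replacing every use of the deterministic Wasserstein Lipschitz bound (Lemma~\ref{cor:wdro}) by the finite-sample concentration bound of Lemma~\ref{cor:data_driven_UCB}; it is exactly this substitution that produces the two extra additive $\rho_t$ terms. I would work on the event on which Lemma~\ref{cor:data_driven_UCB} holds, which (inspecting its proof) is the intersection of the self-normalized concentration event of probability $1-\delta$ — on which both the confidence intervals of Lemma~\ref{lem:rkhs_UCB_LCB} and the norm bounds $\|\mu_{t-1}\|_{\Hk}\le\bar B_t$, $\beta_t\le\bar B_t$ of Lemma~\ref{lem:B_bar} are valid, so that $\UCB(x,\cdot)\in\mathcal{U}(\bar B_t,\bar B_t,\lambda)$ — with the Wasserstein concentration event of Lemma~\ref{lem:adapted_gao} of probability $1-\delta$; a union bound keeps the total at $1-2\delta$, matching the claim. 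No third $\delta$ is spent because the event behind Lemma~\ref{lem:rkhs_UCB_LCB} and the one behind the RKHS-norm bound in Lemma~\ref{lem:B_bar} are the same self-normalized-concentration event.

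Starting from $r_t=\EPstar[\fxtstar]-\EPstar[\fxt]$, the chain runs as follows. (i) Lemma~\ref{lem:rkhs_UCB_LCB} gives $\LCB(\xt,c)\le f(\xt,c)$ and $f(\xstar_t,c)\le\UCB(\xstar_t,c)$ pointwise, hence $r_t\le\EPstar[\UCB(\xstar_t,c)]-\EPstar[\LCB(\xt,c)]$. (ii) Apply Lemma~\ref{cor:data_driven_UCB} at $x=\xstar_t$ to pass from $\EPstar$ to $\EPhatt$, paying $+\varepsilon_t L^{\UCB}(\xstar_t)+\rho_t$. (iii) Since by Algorithm~\ref{alg:BO_algo} the point $\xt$ maximizes $x\mapsto\EPhatt[\UCB(x,c)]-\varepsilon_t L^{\UCB}(x)$, we have $\EPhatt[\UCB(\xstar_t,c)]-\varepsilon_t L^{\UCB}(\xstar_t)\le\EPhatt[\UCB(\xt,c)]-\varepsilon_t L^{\UCB}(\xt)$; adding and subtracting $\varepsilon_t L^{\UCB}(\xstar_t)$ replaces the $\xstar_t$ term by the acquisition value at $\xt$ plus $2\varepsilon_t L^{\UCB}(\xstar_t)$. (iv) A second application of Lemma~\ref{cor:data_driven_UCB}, now at $x=\xt$, converts $\EPhatt[\UCB(\xt,c)]$ back to $\EPstar[\UCB(\xt,c)]$ at cost $+\varepsilon_t L^{\UCB}(\xt)+\rho_t$, which cancels the $-\varepsilon_t L^{\UCB}(\xt)$ already present. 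Collecting terms yields $r_t\le\EPstar\!\left[\UCB(\xt,c)-\LCB(\xt,c)\right]+2\varepsilon_t L^{\UCB}(\xstar_t)+2\rho_t$, and $\UCB(\xt,c)-\LCB(\xt,c)=2\beta_t\sigma_{t-1}(\xt,c)$ gives the stated bound.

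The algebraic cancellations are routine; the one point requiring care is that Lemma~\ref{cor:data_driven_UCB} is invoked at the \emph{history-dependent} pair $(\xt,\UCB)$ rather than at a fixed one. This is exactly why Lemma~\ref{cor:data_driven_UCB} (via Lemma~\ref{lem:adapted_gao}) was stated uniformly over $x\in\mathcal{X}$ and over the covering of $\mathcal{U}(\bar B_t,\bar B_t,\lambda)$: the covering argument absorbs the adaptivity, so beyond verifying with Lemma~\ref{lem:B_bar} that $\UCB$ lies in the asserted class on the chosen event, nothing further is needed. I expect this uniformity bookkeeping — not the inequality chain — to be the only subtle step.
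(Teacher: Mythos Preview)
Your proposal is correct and follows exactly the approach the paper intends: the paper's own proof is the single sentence ``The proof follows the same structure as Theorem~\ref{thm:inst_regret_appendix}, but using Corollary~\ref{cor:data_driven_UCB} instead of Lemma~\ref{cor:wdro},'' and you have spelled out precisely that substitution, including the $\rho_t$ terms it generates and the probability bookkeeping (correctly noting that Lemma~\ref{lem:rkhs_UCB_LCB} and the norm bound in Lemma~\ref{lem:B_bar} rest on the same self-normalized concentration event, so only two $\delta$'s are spent). Your remark about needing the uniformity of Lemma~\ref{lem:adapted_gao} over $x\in\mathcal X$ and $h\in\mathcal U$ to handle the history-dependence of $(\xt,\UCB)$ is the right observation and goes slightly beyond what the paper makes explicit.
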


\begin{proof}
The proof follows the same structure as Theorem~\ref{thm:inst_regret_appendix}, but using Corollary~\ref{cor:data_driven_UCB} instead of Lemma~\ref{cor:wdro}. 
\end{proof}
\begin{thm}[Data-driven cumulative expected regret]\label{thm:cum_regret_data_corrected}
    Let Assumption~\ref{ass:Lip} hold and let $L^{\UCB}(x)$ be a Lipschitz constant with respect to the context $c$ for $\UCB(x,c)$. Fix a failure probability $0<\delta<1$.
    With probability at least $1-3\delta$, the cumulative expected regret for the data-driven setting can be bounded as:
    \begin{equation}
        R_T \leq 4 \beta_T \sqrt{T(\gamma_T + 4 \log(6/\delta))} + \sum_{t=1}^{T} \left(2\varepsilon_t L^{\UCB}(\xstar_t) + 2\rho_t \right),
    \end{equation}
    where $\gamma_{T}$ is the maximum information gain at time $T$, $\varepsilon_t$ and $\rho_t$ are as defined in Corollary~\ref{cor:data_driven_UCB}.
\end{thm}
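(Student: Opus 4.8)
The plan is to replay the argument of Theorem~\ref{thm:cum_regret_appendix} almost verbatim, with the only change being that the per-step bound of Theorem~\ref{thm:inst_regret_appendix} is replaced by its data-driven analogue Theorem~\ref{thm:inst_regret_data_appendix}. The extra term $2\rho_t$ appearing there does not depend on the decision variable, so it leaves the acquisition function untouched and simply rides along as an additive contribution $\sum_{t=1}^T 2\rho_t$ in the final bound; every other step carries over unchanged.

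Concretely, I would start from $R_T = \sum_{t=1}^T r_t$ and invoke Theorem~\ref{thm:inst_regret_data_appendix}, which holds with probability at least $1-2\delta$, to obtain
\[
R_T \le \sum_{t=1}^T \EPstar\!\left[2\beta_t\sigma_{t-1}(\xt,c)\right] + \sum_{t=1}^T\left(2\varepsilon_t L^{\UCB}(\xstar_t) + 2\rho_t\right).
\]
Since $t\mapsto\beta_t$ is nondecreasing, the first sum is at most $2\beta_T\sum_{t=1}^T\EPstar[\sigma_{t-1}(\xt,c)]$; applying the Cauchy--Schwarz inequality over the $T$ summands and then Jensen's inequality, exactly as in the proof of Theorem~\ref{thm:cum_regret_appendix}, gives $\sum_t\EPstar[\sigma_{t-1}(\xt,c)]\le\sqrt{T\sum_t\EPstar[\sigma_{t-1}(\xt,c)^2]}$. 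I would then control the remaining sum of squared posterior standard deviations in the usual way: the concentration-of-conditional-mean bound (Lemma~7 of \cite{kirschner2020distributionally}, see also \cite{kirschner2018information}) gives, with probability at least $1-\delta$, $\sum_t\EPstar[\sigma_{t-1}(\xt,c)^2]\le 2\sum_t\sigma_{t-1}(\xt,c_t)^2 + 8\log(6/\delta)$, and then the elementary inequality $x\le 2\log(1+x)$ on $[0,1]$ together with the definition of $\gamma_T$ yields $\sum_t\EPstar[\sigma_{t-1}(\xt,c)^2]\le 4\gamma_T + 16\log(6/\delta)$. Substituting back and using $2\beta_T\sqrt{T(4\gamma_T + 16\log(6/\delta))} = 4\beta_T\sqrt{T(\gamma_T + 4\log(6/\delta))}$ produces the claimed inequality.

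The probability accounting then closes the argument. Theorem~\ref{thm:inst_regret_data_appendix} already consumes $2\delta$: one $\delta$ for the self-normalized RKHS event that underlies both Lemma~\ref{lem:rkhs_UCB_LCB} and the norm bound $\|\mu_{t-1}\|_{\Hk}\le\bar{B}_t$ used inside Lemma~\ref{cor:data_driven_UCB}, and one $\delta$ for the finite-sample Wasserstein concentration of Lemma~\ref{cor:data_driven_UCB}; the conditional-mean concentration step above costs one further $\delta$, for a total of $1-3\delta$. I expect the main obstacle to be bookkeeping rather than analysis: one must be careful to identify the failure events of Lemma~\ref{lem:rkhs_UCB_LCB} and Lemma~\ref{lem:B_bar} --- both descend from Corollary~3.6 of \cite{abbasi2013online} --- so that the union bound does not inflate to $1-4\delta$, and one must check that the $\rho_t$ term enters the instantaneous-regret telescoping additively at each of the two invocations of Lemma~\ref{cor:data_driven_UCB}, producing exactly $2\rho_t$ per step. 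The only genuinely analytic ingredient, the martingale concentration of $\sum_t\EPstar[\sigma_{t-1}(\xt,c)^2]$ around $\sum_t\sigma_{t-1}(\xt,c_t)^2$, is imported wholesale and requires no new work here.
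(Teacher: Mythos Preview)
Your proposal is correct and matches the paper's own proof, which simply states that ``the proof follows from the same steps as Theorem~\ref{thm:cum_regret_appendix}.'' Your elaboration of those steps---swap in Theorem~\ref{thm:inst_regret_data_appendix} for the per-step bound, carry the additive $2\rho_t$ through, then apply Cauchy--Schwarz, Jensen, the conditional-mean concentration of \cite{kirschner2020distributionally}, and the information-gain bound---and your $1-3\delta$ probability accounting are exactly what that one-line proof sketch unpacks to.
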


\begin{proof}
    The proof follows from the same steps as Theorem~\ref{thm:cum_regret_appendix}.
    
    
\end{proof}

Using a result from~\cite{yang2020function} we obtain a bound on $\mathcal{N}_{\infty}(\rho,\mathcal{U}(m,b,s))$, specialized here to the squared exponential kernel. The proposed analysis works in more general settings with kernels experiencing either exponential or polynomial eigendecay, see e.g.~\cite{yang2020function, vakili2023kernelized}.

\begin{lem}[Adapted from Lemma D.1 of~\cite{yang2020function}]\label{app_lem:covering_square_exponential}
    Let $z\in\mathcal{Z}\subset\mathbb{R}^d$, let $k(z,z')$ be the squared exponential kernel with $k(z,z)\leq 1$, and let $0<s<1$ Then, there exist a global constant $\kappa$ such that
    $$\log \mathcal{N}_{\infty}(\rho,\mathcal{U}(m,b,s)) \leq \kappa \left(1+\log\frac{m}{\rho}\right)^{1+d} + \kappa \left(1+\log\frac{1}{\rho}\right)^{1+2d}$$
\end{lem}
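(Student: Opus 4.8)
The plan is to reduce the claim to the $L^\infty$ metric-entropy estimates of~\cite{yang2020function} for the squared exponential kernel by decomposing a generic element of $\mathcal{U}(m,b,s)$ as $h=\mu+\beta\sigma$ with $\|\mu\|_\Hk\le m$, $\beta\in[0,b]$ and $\sigma\in\bs{\sigma}_s$, and covering the three ingredients separately. Since for two such functions $|h(z)-h'(z)|\le\|\mu-\mu'\|_\infty+|\beta-\beta'|\,|\sigma(z)|+b\,|\sigma(z)-\sigma'(z)|$ and $\sigma(z)^2=\|\mathcal{M}k(\cdot,z)\|_\Hk^2\le\|\mathcal{M}\|_{op}^2\,k(z,z)\le s^{-1}$ on $\mathcal{Z}$, the product of a $(\rho/3)$-net of $\{\mu:\|\mu\|_\Hk\le m\}$ in $\|\cdot\|_\infty$, a $(\rho s^{1/2}/3)$-net of $[0,b]$, and a $(\rho/3b)$-net of $\bs{\sigma}_s$ in $\|\cdot\|_\infty$ is a $\rho$-net of $\mathcal{U}(m,b,s)$; it therefore suffices to bound the corresponding three log-covering numbers.

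For the mean ball I would use the exponential eigendecay $\lambda_j\asymp\exp(-c\,j^{1/d})$ of the squared exponential kernel: at resolution $\epsilon$ only $N(\epsilon)=O((\log 1/\epsilon)^{d})$ eigenvalues exceed $\epsilon$, so truncating the expansion $\mu=\sum_j\mu_j e_j$ in the RKHS eigenbasis at the leading $N=O((\log(m/\rho))^{d})$ directions leaves a uniform tail $\lesssim\rho$, and netting the resulting $N$-dimensional coefficient ball costs $O(N\log(m/\rho))=O((\log(m/\rho))^{1+d})$, which produces the first term. For $\bs{\sigma}_s$ I would write $\sigma(z)^2=\langle k(\cdot,z),\mathcal{M}^*\mathcal{M}k(\cdot,z)\rangle_\Hk=\sum_{j,l}A_{jl}\,e_j(z)e_l(z)$ with $A=\mathcal{M}^*\mathcal{M}\succeq 0$ and $\|A\|_{op}\le s^{-1}$; truncating both indices at the same effective dimension $N=O((\log 1/\rho)^{d})$ leaves a tail controlled by the fast decay of $\sqrt{\lambda_j}\|\phi_j\|_\infty$ together with $\|A\|_{op}\le s^{-1}$, and netting the $\le N^2$ retained entries of $A$ (each bounded in absolute value by $s^{-1}$) costs $O(N^2\log(1/(s\rho)))=O((\log 1/\rho)^{1+2d})$ once the fixed factor $\log(1/s)$ is absorbed. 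Finally $|\sqrt a-\sqrt b|\le\sqrt{|a-b|}$, applied to clipped covering functions, converts a $\rho^2$-net of $\{\sigma^2\}$ into a $\rho$-net of $\bs{\sigma}_s$ of the same order $(\log 1/\rho)^{1+2d}$.

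Combining the three pieces, the $\beta$-net of $[0,b]$ has log-cardinality $O(\log(bs^{-1/2}/\rho))$, which is of lower order than the two polylog terms, and the rescaling of the $\sigma$-precision by $b$ only replaces $(\log 1/\rho)^{1+2d}$ by $(\log(b/\rho))^{1+2d}$; in the regime of interest $b$ is of order $m$, so both are absorbed into the single constant $\kappa$, yielding $\log\mathcal{N}_{\infty}(\rho,\mathcal{U}(m,b,s))\le\kappa(1+\log(m/\rho))^{1+d}+\kappa(1+\log(1/\rho))^{1+2d}$.

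I expect the main obstacle to be the standard-deviation part: because $\mathcal{M}$ is only operator-norm bounded — not trace-class or finite-rank — the truncation of the bilinear form cannot rely on a trace bound and must instead use the precise sup-norm decay of the squared-exponential eigenfunctions and Mercer's theorem in the uniform topology. This is exactly the technical content already established in Lemma~D.1 of~\cite{yang2020function}; the remaining work is the bookkeeping of the dependence on $m$, $b$ and $s$ sketched above.
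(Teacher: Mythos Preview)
The paper does not actually prove this lemma; it simply states it as ``Adapted from Lemma~D.1 of~\cite{yang2020function}'' and uses the bound as a black box. Your proposal therefore goes substantially further than the paper itself, supplying the decomposition $h=\mu+\beta\sigma$, the separate covering of the three pieces, and the eigendecay-plus-truncation argument that underlies the cited result. This is exactly the natural route and matches what Lemma~D.1 of~\cite{yang2020function} does for each component.

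One point worth tightening: the stated bound has no explicit dependence on $b$ or $s$, yet your argument (correctly) shows that covering $\bs{\sigma}_s$ at precision $\rho/(3b)$ introduces a factor $(\log(b/\rho))^{1+2d}$ and that the entries of $A$ are bounded by $s^{-1}$. You resolve this by invoking ``the regime of interest $b$ is of order $m$'' and absorbing $\log(1/s)$ into $\kappa$. The latter is fine since the lemma fixes $0<s<1$, but the former is an assumption not stated in the lemma; strictly speaking the second term should read $\kappa(1+\log(b/\rho))^{1+2d}$, and it is only because the paper always instantiates $m=b=\bar B_t$ that the stated form suffices. This is a minor bookkeeping issue in the lemma statement rather than a flaw in your argument.
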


\begin{cor}[Data-driven WDRBO Regret Order for Squared Exponential Kernel]\label{cor:datadriven_SE}
    Let $0< \delta < 1$ be a failure probability and let Assumption~\ref{ass:Lip} hold. For the Squared Exponential kernel, with probability $1-3\delta$, the cumulative expected regret in the data-driven setting is bounded by:
    \begin{equation}
        R_T = \tilde{O}\left(\sqrt{T}\right)
    \end{equation}
    where $\tilde{O}$ omits logarithmic terms.
\end{cor}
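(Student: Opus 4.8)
The plan is to start from the data-driven cumulative regret bound of Theorem~\ref{thm:cum_regret_data_corrected},
\[
R_T \le 4\beta_T\sqrt{T(\gamma_T + 4\log(6/\delta))} + \sum_{t=1}^{T}\left(2\varepsilon_t L^{\UCB}(\xstar_t) + 2\rho_t\right),
\]
which already holds with probability $1-3\delta$, and to show that each of the three contributions is $\tilde O(\sqrt T)$ once the SE-kernel-specific estimates are plugged in, namely $\gamma_t = O(\log^{d+1}t)$ from~\cite{vakili2021information} and the covering-number bound of Lemma~\ref{app_lem:covering_square_exponential}. No extra union bound is needed: Theorem~\ref{thm:cum_regret_data_corrected} already folds the three relevant failure events (the confidence bound, the conditional-variance concentration, and the Wasserstein concentration of Corollary~\ref{cor:data_driven_UCB}) into its $1-3\delta$ guarantee.

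For the leading term, I would first observe that $\beta_T = R\sqrt{2\log(\det(I+\lambda^{-1}K_{T-1})^{1/2}/\delta)} + \lambda^{1/2}B \le R\sqrt{2\log(1/\delta)} + R\sqrt{2\gamma_T} + \lambda^{1/2}B$, hence $\beta_T = O(\sqrt{\gamma_T})$ up to logarithmic factors; therefore $4\beta_T\sqrt{T(\gamma_T + 4\log(6/\delta))} = O(\sqrt T\,\gamma_T)$, which is $\tilde O(\sqrt T)$ for the SE kernel because $\gamma_T$ is polylogarithmic in $T$.

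For the remaining two sums I would control the per-step quantities. By Lemma~\ref{lem:B_bar}, $\bar B_t \le \lambda^{-1/2}(R\sqrt{2\log(1/\delta)} + R\sqrt{2\gamma_t} + B) = \tilde O(1)$ for the SE kernel, and consequently $L^{\UCB}(\xstar_t) \le 2\bar B_t L = \tilde O(1)$ and $\rho_t = (1+2L\bar B_t)t^{-1} = \tilde O(t^{-1})$. Applying Lemma~\ref{app_lem:covering_square_exponential} with $\rho = t^{-1}$, $m=b=\bar B_t$, $s=\lambda$ gives $\log\mathcal N_\infty(t^{-1},\mathcal U(\bar B_t,\bar B_t,\lambda)) \le \kappa(1+\log(t\bar B_t))^{1+d} + \kappa(1+\log t)^{1+2d}$, which is again polylogarithmic in $t$ since $\bar B_t = \tilde O(1)$. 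Hence the numerator inside the square root defining $\varepsilon_t$ in Corollary~\ref{cor:data_driven_UCB}, namely $\log(1/\delta) + d_x\log(1+2t\diamX) + \log\mathcal N_\infty(t^{-1},\mathcal U(\bar B_t,\bar B_t,\lambda))$, is $\tilde O(1)$, so $\varepsilon_t = \tilde O(t^{-1/2})$.

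It then remains to sum: $\sum_{t=1}^{T} 2\varepsilon_t L^{\UCB}(\xstar_t) = \tilde O(\sum_{t=1}^{T} t^{-1/2}) = \tilde O(\sqrt T)$ and $\sum_{t=1}^{T} 2\rho_t = \tilde O(\sum_{t=1}^{T} t^{-1}) = \tilde O(1)$, and adding the three pieces yields $R_T = \tilde O(\sqrt T)$. The main obstacle, such as it is, is purely one of bookkeeping the polylogarithmic factors: one must check that $\bar B_t$ — and therefore the covering-number bound — stays polylogarithmic in $t$, which is precisely the point where the exponential eigendecay of the squared exponential kernel is essential (for kernels with only polynomial eigendecay the covering bound would grow polynomially in $t$ and the argument that $\varepsilon_t$ decays like $t^{-1/2}$ up to logs would break, though one would still recover a weaker sublinear rate by the same scheme).
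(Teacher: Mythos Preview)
Your proposal is correct and follows essentially the same route as the paper's own proof: start from Theorem~\ref{thm:cum_regret_data_corrected}, plug in $\gamma_T = O(\log^{d+1}T)$ and the covering-number bound of Lemma~\ref{app_lem:covering_square_exponential} to obtain $\bar B_t = \tilde O(1)$, $\varepsilon_t = \tilde O(t^{-1/2})$, $\rho_t = \tilde O(t^{-1})$, and then sum. The only cosmetic difference is that the paper writes the first term as $\tilde O(\sqrt{T\gamma_T})$ rather than $O(\sqrt T\,\gamma_T)$, but for the SE kernel both collapse to $\tilde O(\sqrt T)$.
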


\begin{proof}
    For the Squared Exponential kernel, we have $\gamma_t = O(\log^{d+1}(t))$ from \cite{vakili2021information}. From Lemma~\ref{app_lem:covering_square_exponential}, we know:
    $$\log \mathcal{N}_{\infty}(t^{-1},\mathcal{U}(\bar{B}_t,\bar{B}_t,\lambda)) \leq \kappa \left(1+\log(t\bar{B}_t)\right)^{1+d} + \kappa \left(1+\log t\right)^{1+2d}$$
    
    From Lemma~\ref{lem:B_bar}, $\bar{B}_t = O(\sqrt{\gamma_t} + \sqrt{\log(1/\delta)}) = \tilde{O}(1)$ for the Squared Exponential kernel.
    
    Therefore:
    $$\log \mathcal{N}_{\infty}(t^{-1},\mathcal{U}(\bar{B}_t,\bar{B}_t,\lambda)) = \tilde{O}(1)$$
    
    The radius $\varepsilon_t$ becomes:
    \begin{align}
        \varepsilon_t &= \sqrt{2\diamC^2\frac{\log(1/\delta) + d_x\log(1+2t\diamX) + \log \mathcal{N}_{\infty}(t^{-1},\mathcal{U}(\bar{B}_t,\bar{B}_t,\lambda))}{t}} \\
        &= \tilde{O}\left(\frac{1}{\sqrt{t}}\right)
    \end{align}
    
    Substituting these results into Theorem~\ref{thm:cum_regret_data}, we get:
    \begin{align}
        R_T &\leq 4\beta_T\sqrt{T(\gamma_T + 4\log(6/\delta))} + \sum_{t=1}^{T}\left(2\varepsilon_tL^{\UCB}(\xstar_t) + 2\rho_t\right) \\
        &= \tilde{O}(\sqrt{T\gamma_T}) + \tilde{O}\left(\sum_{t=1}^{T}\frac{L^{\UCB}(\xstar_t)}{\sqrt{t}}\right) + \tilde{O}\left(\sum_{t=1}^{T}\frac{(1+2L\bar{B}_t)}{t}\right) \\
        &= \tilde{O}(\sqrt{T\log^{d+1}(T)}) + \tilde{O}\left(\bar{B}_T L \sum_{t=1}^{T}\frac{1}{\sqrt{t}}\right) + \tilde{O}\left(\sum_{t=1}^{T}\frac{1}{t}\right) \\
        &= \tilde{O}(\sqrt{T}) + \tilde{O}(\sqrt{T}) + O(\log T) \\
        &= \tilde{O}(\sqrt{T})
    \end{align}
    
    Where we used the fact that $L^{\UCB}(\xstar_t) \leq 2\bar{B}_t L = \tilde{O}(1)$ from Lemma~\ref{lem:B_bar}, and that $\sum_{t=1}^{T}\frac{1}{\sqrt{t}} = O(\sqrt{T})$ and $\sum_{t=1}^{T}\frac{1}{t} = O(\log T)$.
\end{proof}

\section{EXPERIMENTS DETAILS AND ADDITIONAL EXPERIMENTS}
The experimental setup is based on an adaptation of the work of~\cite{huang2024stochastic}. We exploit their implementation of the methods GP-UCB, SBO-KDE, DRBO-KDE, DRBO-MMD, DRBO-MMD Minmax, and StableOpt, available at \href{https://github.com/lamda-bbo/sbokde}{https://github.com/lamda-bbo/sbokde}, with only minor changes to make the code compatible with our hardware. We refer to Appendix B.1 of~\cite{huang2024stochastic} for more details. The algorithms, including ERBO and WDRBO, are implemented in BoTorch~\cite{balandat2020botorch}, with the inner convex optimization problems of DRBO-KDE and DRBO-MMD solved using CVXPY~\cite{diamond2016cvxpy}. Our code is available at the following link~\href{https://github.com/frmicheli/WDRBO}{https://github.com/frmicheli/WDRBO} .

We also exploited the implementations of the functions Ackley, Hartmann, Modified Branin, Newsvendor, Portfolio (Normal), and Portfolio (Uniform) of~\cite{huang2024stochastic}. We refer to Appendix B.2 of~\cite{huang2024stochastic} for more details. The only variation has been in the choice of context distribution for Ackley, Hartmann, Modified Branin, Portfolio (Normal) where we used $c\sim\mathcal{N}(0.5,0.2^2)$ with $c$ clipped to $[0,1]$.
We implemented the Three Humps Camel function that is a standard benchmark function for global optimization algorithms. The input space is two-dimensional, we restricted it to the domain $x \in [-1, 1]$ and $c \in [-1, 1]$, and chose a uniform distribution for the context $c$.
We function is defined as follows: 
\begin{equation}
f(x) = 2x^2 - 1.05x^4 + \frac{x^6}{6} + xc + c^2\ .
\end{equation}

In Fig.~\ref{fig:2} we compare the performance of all the algorithms including DRBO-MMD, DRBO-MMD Minmax, and StableOpt, which we did not show in the main text. In Fig.~\ref{fig:3} we show the instantaneous regret for all the algorithms which can help better compare the asymptotic performance of the different algorithms.

Table~\ref{table:2} reports the mean and standard error of computational times in seconds for the Ackley and Branin functions for all the considered algorithms.
\begin{table}[ht]
\centering
\begin{tabular}{|l|c|c|}
\hline
$ $ &\textbf{Ackley} & \textbf{Branin} \\ \hline
WDRBO (ours) & $44.3\pm 2.2$ & $54.5 \pm 2.6$ \\ \hline
ERBO (ours) & $43.8\pm 1.6$& $45.2 \pm 2.5$ \\ \hline
GP-UCB & $15.7\pm 1.4$ & $15.1 \pm 1.0$\\ \hline
SBO-KDE & $46.7\pm 0.7$ & $49.7\pm 1.5$ \\ \hline
DRBO-KDE & $599.7 \pm 33.0$ & $525.0 \pm 71.7$ \\ \hline
DRBO-MMD & $644.9 \pm 45.6$ & $131.6 \pm 9.6$ \\ \hline
DRBO-MMD Minimax & $104.8 \pm 3.9$ & $21.1 \pm 1.8$ \\ \hline
StableOpt & $77.6 \pm 2.4$ & $64.7 \pm 4.2$ \\ \hline
\end{tabular}
\caption{Mean and standard error of computational times in seconds for the Ackley ($d_x = 1$, $d_c = 1$) and Branin ($d_x = 2$, $d_c = 2$) functions.}\label{table:2}
\end{table}

\begin{figure*}[h]
  \includegraphics[width=\textwidth]{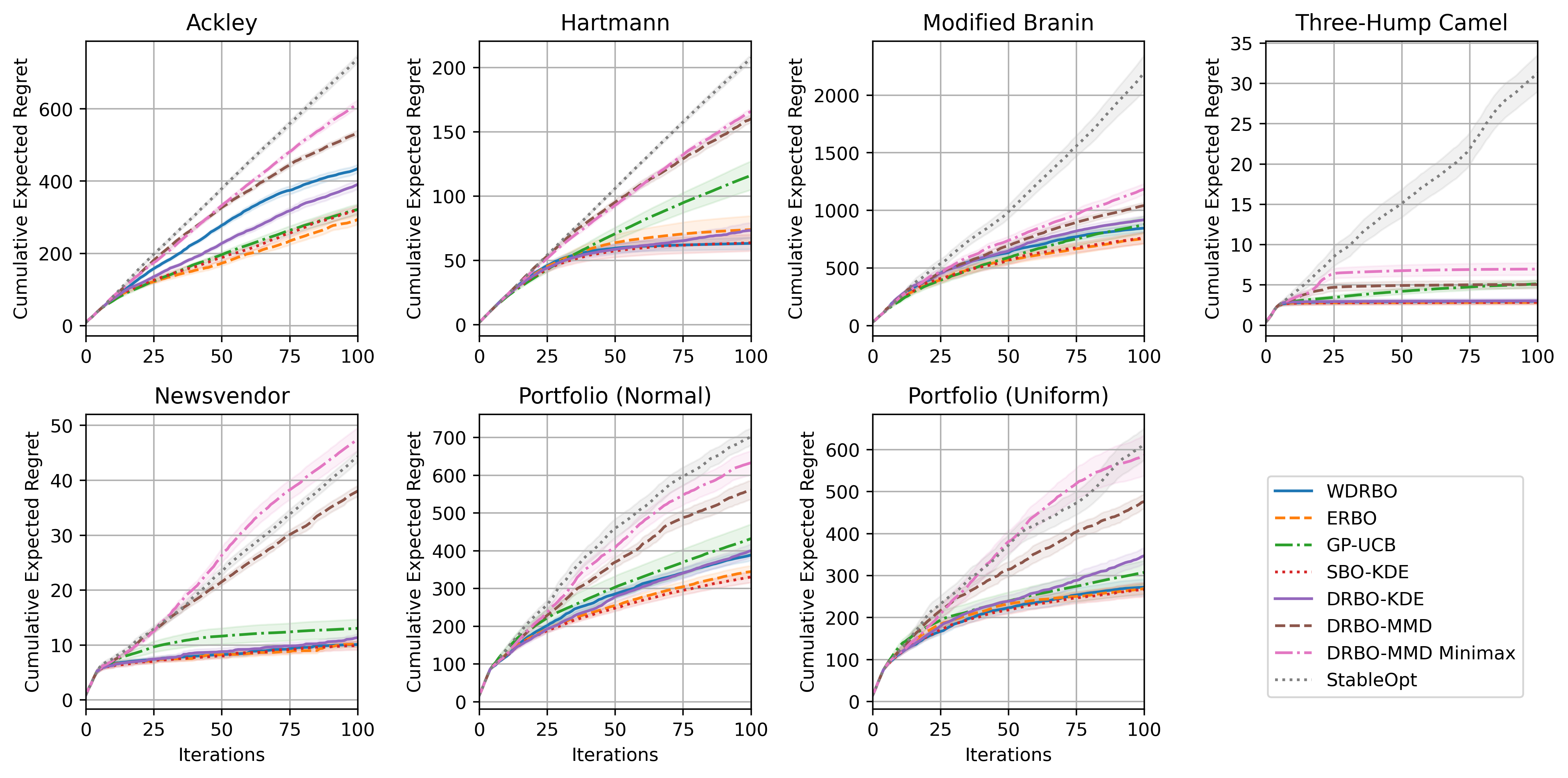}
  \caption{Mean and standard error of the cumulative expected regret. }
  \label{fig:2}
\end{figure*}

\begin{figure*}[h]
  \includegraphics[width=\textwidth]{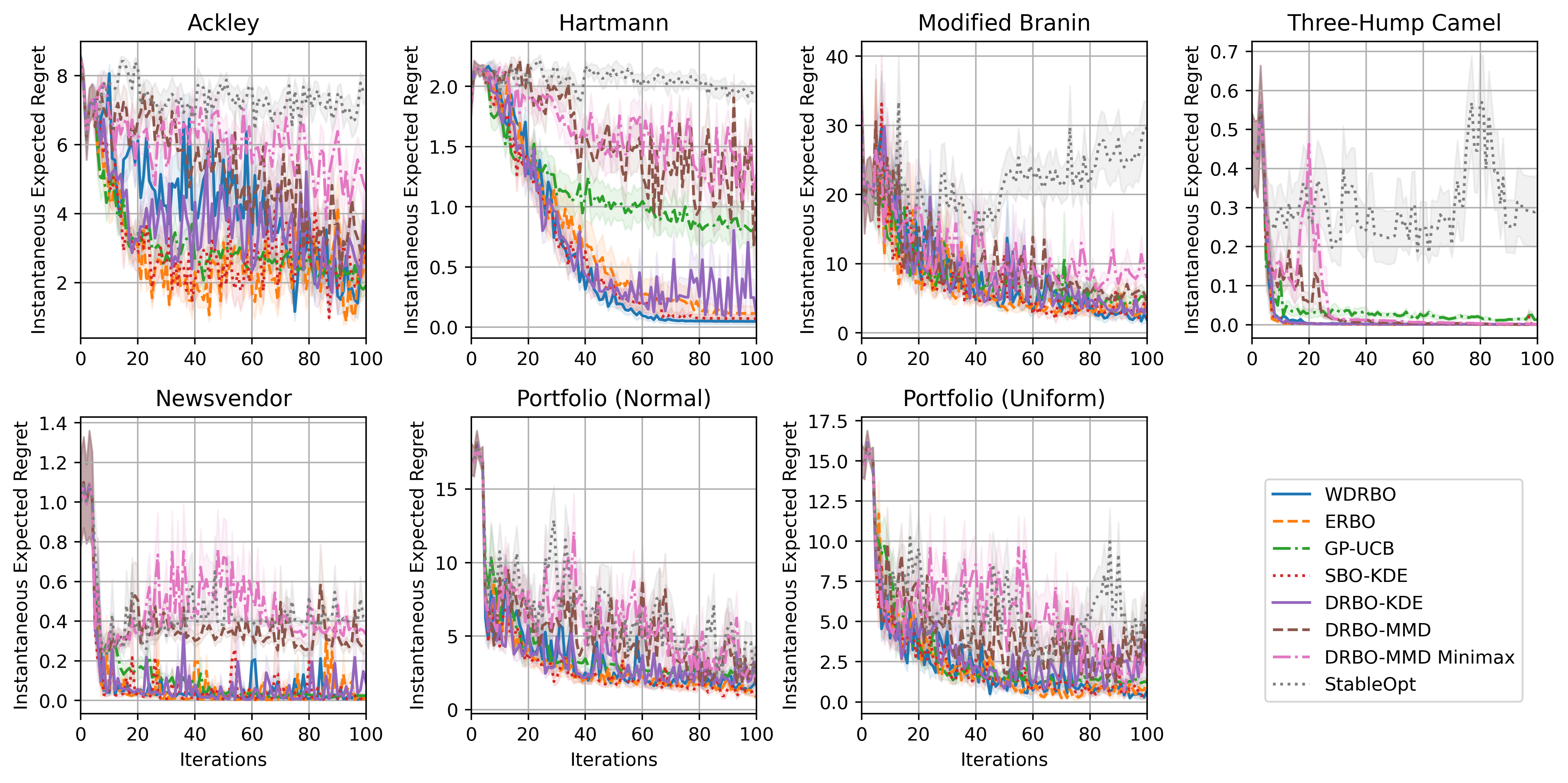}
  \caption{Mean and standard error of the instantaneous expected regret. }
  \label{fig:3}
\end{figure*}

\end{document}